\definecolor{codegreen}{rgb}{0,0.6,0}
\definecolor{codegray}{rgb}{0.5,0.5,0.5}
\definecolor{codepurple}{rgb}{0.58,0,0.82}
\definecolor{backcolour}{rgb}{0.95,0.95,0.92}
\lstdefinestyle{mystyle}{
    backgroundcolor=\color{backcolour},   
    commentstyle=\color{codegreen},
    keywordstyle=\color{magenta},
    numberstyle=\color{codegray},
    stringstyle=\color{codepurple},
    basicstyle=\ttfamily\footnotesize,
    breakatwhitespace=false,         
    breaklines=true,                 
    captionpos=b,                    
    keepspaces=true,                 
    numbers=left,                    
    numbersep=5pt,                  
    showspaces=false,                
    showstringspaces=false,
    showtabs=false,                  
    tabsize=2
}
\pgfplotsset{compat=newest}
\DeclareMathOperator*{\argmax}{arg\,max}
\newtheorem{theorem}{Theorem}[section]
\newtheorem{definition}{Definition}[section]
\newtheorem{assumption}{Assumption}[section]
\newcommand{\ouralgo}{\texttt{FROC}}
\newcommand{\ourdef}{-Equalized ROC}
\newcommand{\roc}{\texttt{ROC}_s}
\newcommand{\auc}{\texttt{AUC}_s}
\title{\ouralgo: Building Fair ROC from a Trained Classifier}
\author{
Avyukta Manjunatha Vummintala \\
Machine Learning Lab\\
IIIT Hyderabad\\
\texttt{avyukta.v@research.iiit.ac.in} \\
\And
Shantanu Das \\
Machine Learning Lab\\
IIIT Hyderabad\\
\texttt{shantanu.das@alumni.iiit.ac.in} \\
\And
Sujit Gujar \\
Machine Learning Lab\\
IIIT Hyderabad\\
\texttt{sujit.gujar@iiit.ac.in} \\
}
\begin{document}

\maketitle

\begin{abstract}
This paper considers the problem of fair probabilistic binary classification with binary protected groups. The classifier assigns scores, and a practitioner predicts labels using a certain cut-off threshold based on the desired trade-off between false positives vs. false negatives. It derives these thresholds from the ROC of the classifier. The resultant classifier may be unfair to one of the two protected groups in the dataset. It is desirable that no matter what threshold the practitioner uses, the classifier should be fair to both the protected groups; that is, the $\mathcal{L}_p$ norm between FPRs and TPRs of both the protected groups should be at most $\varepsilon$. We call such fairness on ROCs of both the protected attributes $\varepsilon_p$\ourdef. Given a classifier not satisfying $\varepsilon_1$\ourdef, we aim to design a post-processing method to transform the given (potentially unfair) classifier's output (score) to a suitable randomized yet fair classifier. That is, the resultant classifier must satisfy $\varepsilon_1$\ourdef. First, we introduce a threshold query model on the ROC curves for each protected group. The resulting classifier is bound to face a reduction in AUC. With the proposed query model, we provide a rigorous theoretical analysis of the minimal AUC loss to achieve $\varepsilon_1$\ourdef. To achieve this, we design a linear time algorithm, namely \ouralgo, to transform a given classifier's output to a probabilistic classifier that satisfies $\varepsilon_1$\ourdef. We prove that under certain theoretical conditions, \ouralgo\ achieves the theoretical optimal guarantees. We also study the performance of our \ouralgo\ on multiple real-world datasets with many trained classifiers.
\end{abstract}
\section{Introduction}
\label{sec:intro}
The use of \emph{Machine Learning based Models} (MLM) in decision-making is prevalent today. Practitioners use MLMs' predictions in college admissions, credit scores, recidivism, employment, recommender systems, etc.~\cite {portugal2018use,berger2005credit}. However, there have been several reports of such MLMs discriminating against individuals belonging to certain groups based on \emph{protected attribute} such as gender, age, race, color, and religion. E.g., in ~\cite{angwin2022machine}, predictive models are found to be biased against the black population, or the Amazon recruitment team has to stop using the AI tool for shortlisting candidates as it was biased against females~\cite{dastin2018amazon}. ~\cite{bickel1977sex};~\cite{berger2005credit};~\cite{zhao2018gender} show that many of such predictive models are unfair to females. Such unfair instances have driven researchers toward building a fair MLM. 

An MLM that achieves fairness with the least possible compromise on traditional performance guarantees such as accuracy is \emph{desirable} MLM. Building a desirable MLM involves two main steps: a) formalizing and quantifying a fairness measure and b) designing algorithms to train MLM for quantified fairness. 
Researchers proposed many fairness measures, majorly belonging to two categories: (i) \emph{individual fairness}~\cite{dwork2012fairness} -- individuals with similar input features receive similar decision treatment irrespective of their protected attribute. (ii) \emph{Group fairness} -- a particular statistical property must be similar across each protected group, e.g., \emph{Disparate Impact (DI)},\emph{Equalized odds (EO)}~\cite{madras2018learning}.\\
\noindent\textbf{Building Fair MLM} Fair machine learning models (MLMs) can be developed by targeting different stages of the model training cycle. Approaches include:
(i) \emph{Pre-processing} methods, which act on input data to eliminate bias \citep{feldman15, zemel2013learning}.
(ii) \emph{In-processing} algorithms, which intervene during training to incorporate fairness as a constraint or within the learning objective \citep{padala2020fnnc}.
(iii) \emph{Post-processing} methods, which adjust the outputs of trained MLMs to produce fair results, requiring access to sensitive attributes.

In-processing and pre-processing methods are tailored to specific fairness criteria and models, necessitating retraining for each new fairness definition. Post-processing methods, in contrast, are model-agnostic and do not depend on the training process, making them suitable for domain experts with limited MLM knowledge \citep{sleeman1995consultant}. These methods are especially favored when retraining is infeasible, such as in large-scale systems like recommender systems \citep{nandy2022achieving}.

Given a potentially biased scoring function, this paper addresses the challenge of constructing a fair probabilistic binary classifier with a binary-protected attribute. The goal is to ensure fairness without retraining the MLM, minimizing performance loss.

\noindent\textbf{Fairness and Performance Trade-offs} For classification, one of the desired characteristics of an MLM is \emph{calibration}~\cite{kleinberg2016inherent}. Suppose a classifier predicts that a given input is accepted ($Y=1$) with probability $p$, then calibration demands that the fraction of the accepted population, with the same features, is $p$. \cite{kleinberg2016inherent,chouldechova2017fair} have shown that calibration and equalized odds cannot be satisfied simultaneously except for highly constrained cases. Hence, researchers have been focusing on building classifiers (MLMs) with an appropriate approximate version of fairness~\cite{madras2018learning}. 
When it comes to practitioners, they focus on \emph{Receiver Operator Characteristics} (ROC) for evaluating a classifier as it best describes the classifiers. ROC measures the relative scores of the positive versus negative instances. The area under ROC-curve (AUC) is an appropriate performance metric to measure the predictive quality of such classifiers and to segregate positive and negative samples through ranking (\cite{huang2005using,clemenccon2008ranking,zehlike2021fairness}). AUC is particularly beneficial when the classifier is expected to segregate positive and negative labels, and the predictions must be fair across all threshold scores.

To make the practitioner's job effortless, we introduce a novel fairness measure, namely $\varepsilon_p$\ourdef\ -- no matter what threshold it uses for classification, the classifier is approximately fair, i.e., for all possible thresholds, the distance between the corresponding points of the ROC curves for both the protected group should be withing $\varepsilon$ distance in the $\mathcal{L}_p$ norm.
We aim to build a new probabilistic classifier that satisfies $\varepsilon_1$\ourdef\ with the minimal loss in AUC w.r.t. to the scoring function $s$.

\noindent\textbf{Our Approach: }
We assume query access to the ROC of $s$. First, we make sufficiently large $k$ queries to the ROC for the protected groups and make a piece-wise linear approximation of the ROC curves of both the protected groups. Next, we transport ROCs within $\varepsilon$ distance of each other to minimize the loss in AUC of the resultant ROC. We can achieve such transportation by randomizing scores across certain feasible classifiers for the given ROC curve. We call the space of these classifiers as \emph{ROC Space} of $s$. The resultant classifier from such randomization across the ROC Space is a convex combination of these classifiers. In a nutshell, we \emph{transform} the given $s$ to a fair scoring function by such ROC transport. We refer to this procedure of ROC transport as \ouralgo. We then geometrically prove that under certain conditions, \ouralgo\ is \emph{optimal}.\\
\noindent\textbf{Our Contributions: }
\begin{compactitem}
    \item We introduce a novel group fairness notion $\varepsilon_p$\ourdef, enforcing fairness over all thresholds in a score-based classification, {which is extremely useful for practitioners.}
    \item Next, we model a post-processing problem as a problem of finding an optimal transformation $\mathcal{H}$ on a given scoring function $s$ to minimize the performance loss due to transformation while ensuring $\varepsilon_1$\ourdef.  
    \item To achieve $\varepsilon_1$\ourdef, we propose a ROC transport, \ouralgo, a \emph{post-processing} algorithm  (Algorithm~\ref{alg:fairroc}). {Thus, it avoids re-training the existing MLM, which might not be fair. It also helps in explaining the decisions.}
    \item We perform rigorous theoretical analysis. We prove that (under some conditions) \ouralgo\ is optimal in terms of AUC loss. (Theorem~4.2).
    \item Finally, we demonstrate the efficacy of \ouralgo\ via experiments.
\end{compactitem}

\subsection{Related Work}
\textbf{Fairness in Binary Classification and Ranking}
\emph{Demographic Parity} (DP), \emph{Disparate Impact} (DI), and \emph{Equalized Odds} (EO) are widely studied group fairness notions. DP \citep{dwork2012fairness} and DI \citep{feldman15} ensure that the fraction of positive outcomes is identical across all sensitive groups. \cite{barocas16} introduced the $80\%$ rule, requiring that the positive outcome rate for a minority group must be at least $4/5$ of that for the majority group. EO \citep{hardt16} ensures similar distributions of error rates, specifically false positives and false negatives \citep{verma18}. Techniques to achieve fair MLMs include those discussed by \cite{padala2020fnnc}.
Group fairness has been shown to be inadequate for score-based classifiers, which classify across all thresholds \citep{gorantla21}. Consequently, researchers have proposed fairness notions based on the area under the curve (AUC). Examples include \emph{intra-group pairwise} AUC fairness \citep{beutel19}, \emph{BNSP} \citep{borkan19}, and \emph{inter-group pairwise} AUC (xAUC) fairness \citep{kallus19}. \cite{yang2023minimax} present a minimax learning and bias mitigation framework that integrates intra-group and inter-group AUC metrics to address algorithmic bias.
\cite{vogel2021} examine fairness in ranking problems, developing a general class of AUC-based fairness notions. They demonstrate that AUC-based fairness notions do not capture all forms of bias, as AUC summarizes classifier performance. They propose a stronger notion called point wise ROC-based fairness and design an in-processing algorithm for this purpose.

Our fairness definition ($\varepsilon_p$\ourdef) is inspired by equalized odds for all thresholds in ranking-based classification and is suitable for post-processing algorithms. It generalizes the approach of \cite{chen2020towards}, which uses the Manhattan distance as its norm. We later demonstrate the equivalency of both fairness notions {(ours $\varepsilon_1$). Note that the notion in \cite{chen2020towards} is not motivated by the same error rates at all thresholds,  and also, ours is more of a geometric approach from ROC curves, and theirs is an algebraic approach; ours is more general.}

\noindent \textbf{Post-processing for fair classification}
Post-processing techniques range from simple adjustments, such as thresholding or re-scaling, to complex methods like re-weighting or re-sampling. \cite{hardt16} argue that many existing fairness criteria are too restrictive, leading to sub-optimal solutions. They propose a fairness notion allowing some variation in prediction outcomes, defined by ``equality of opportunity'' constraints, ensuring the classifier is unbiased regarding the sensitive attribute. Their approach involves adjusting prediction thresholds for different groups based on their base rates to equalize false positive and false negative rates across groups. However, it does not involve \textit{transporting} ROC curves.
\cite{wei20} examine post-processing from the perspective of transformers, defining fairness as the expectation of scores and bounding the differences between true positive rates (TPRs) and false positive rates (FPRs) across protected groups. \cite{cui2021} propose a model-agnostic post-processing framework for balancing fairness in bipartite ranking scenarios.
\cite{zhao2024fair} introduces a novel approach using Wasserstein barycenters to quantify and address the cost of fairness, demonstrating that the complexity of learning an optimal fair predictor is comparable to learning the Bayes predictor.
\cite{tifreafrappe} propose a framework that transforms any regularized in-processing method into a post-processing approach, extending its applicability across a broader range of problem settings. 
\cite{cruz2023unprocessing} identifies two key methodological errors in prior work through empirical analysis: comparing methods with different unconstrained base models and differing levels of constraint relaxation.
\cite{jang22} introduce a method to optimize multiple fairness constraints through group-aware threshold adaptation, learning classification thresholds for each demographic group by optimizing the confusion matrix estimated from the model's probability distribution. Unlike \cite{jang22}, our approach starts with the fairness notion that differences between TPRs and FPRs of different groups must be bounded.
\cite{mishler2021fairness} use the bounded difference of counterfactual TPRs and FPRs as their fairness criterion, which differs from our $\varepsilon_p$\ourdef\ definition. Our $\varepsilon_p$\ourdef\ focuses on the bounded difference between TPRs and FPRs of different groups as the fairness criterion.

\section{Preliminaries}
\label{sec:prelim}

Consider a practitioner interested in binary classification, each data point having a binary-protected attribute. He/she is equipped with a scoring-based classifier trained on dataset $D=\{(x_i,a_i,y_i)_{i\in 1:n}\}$. Here, for $i$th data sample, $x_i \in \mathcal{X}\subset \mathbb{R}^d$ denotes features, $y_i\in\{0,1\}$ denotes the binary label, and $a_i\in\mathcal{A}=\{0,1\}$ denotes its binary protected attribute. We consider all these three as drawn from random variables $X, A, Y$, respectively.
There could be two scenarios - when the protected attribute is included or excluded from training (\cite{wei20})—our post-processing works for both cases as long as protected attributes are accessible during post-processing.

The random variables $X, A, Y$ are jointly distributed according to an unknown probability distribution over $(x_i, a_i, y_i)$. The cumulative conditional distributions on ${X\mid( Y=1)}$  and $X\mid (Y=0)$ are denoted by $G, H$, respectively. $ G^a, H^a$ are the corresponding distributions conditioned on $A=a$ (i.e. $G^a$ denotes the distribution of $X\mid (Y =1 , A = a)$) 

\subsection{Probabilistic Binary Classification}

Probabilistic Binary Classifier is equipped with a scoring function $s:\mathcal{X} \times \mathcal{A} \rightarrow \mathbb{R}$ mapping the feature space to a score.  
A deterministic classifier returns $s(X) \in \{0,1\}$ and a randomized one returns $s(X) \in [0, 1]$.
The higher the score $s(x)$, the higher the chance of the corresponding label $y = 1$.
The model prediction $\widehat{Y}$ , based on certain threshold $t\in [0,1]$, is given by $\widehat{Y} = \mathbb{I}(s(X) \geq t)$. $\mathcal{S}$ denotes the space of such scoring functions. 

The practitioner decides the threshold $t$ depending on the corresponding true positive rate (\emph{TPR}) and false positive rate (\emph{FPR}) (\cite{provost00,zhou05}). For deciding $t$, he is supplied with ROC -- \emph{receiver operator characteristic curve} for $s$. The ROC depicts the relation between TPR ($G_s(t)$) and FPR ($H_s(t)$) for $s$ at all possible thresholds $t$.  Note that, 
$G_s(t) \triangleq \mathbb{P}(s(X) \geq t\mid Y = 1)$ and $ H_s(t) \triangleq \mathbb{P}(s(X) \geq t\mid Y = 0)$.

\subsection*{ROC Curve and AUC}
The plot of a ROC-curve 
(Definition \eqref{def:roc}) 
is used to visualize homogeneity between two cumulative distributions~\cite{vogel2021}. 
The ROC curve is defined as:
\begin{definition}[ROC-Curve] \label{def:roc}
For any two cumulative distributions $g_1, g_2$ defined over the set $\mathbb{R}$, the ROC-curve is defined as the plot of 
$
    ROC_{g_1,g_2}(\alpha) \triangleq 1 - g_1 \circ g_2^{-1}(1-\alpha)
$
with domain $\alpha \in [0,1]$.
\end{definition}

The area under ROC-curve, \emph{AUC}, represents a summary of point-wise dissimilarity between the concerned distributions. Formally, let $S, S'$ be two independent random variables distributed according to $g_1, g_2$ respectively, then $AUC_{g_1,g_2} = \mathbb{P}(S'>S) + \frac{1}{2} \mathbb{P}(S' = S)$.

For a given scoring function $s$, we get two RVs, $G_s$ and $H_s$, by varying decision thresholds. We call the corresponding ROC curve $\roc$.
The area under $\roc$, i.e., $\auc =AUC_{H_s, G_s}$, is used to measure the ranking performance of a score function $s(.)$ (\cite{cortes03}; \cite{clemenccon2008ranking}). 
For a perfect classifier, $\auc = 1$, but such a classifier does not exist. Therefore, the optimal scoring function $s^{*}$ maximizes the $\auc$ amongst a certain subset of $\mathcal{S'}\subset \mathcal{S}$. Formally, $ s^* \in \argmax_{s\in \mathcal{S'}} \auc$.
In section ~\ref{ssec:randomclass}, we illustrate how a sub-optimal score function with lower TPRs can be achieved by randomizing outputs of $s(\cdot)$. This process is crucial in ensuring fairness. Let $\mathcal{S}|_s$ be the space of possible scoring functions through such randomization. We call it ROC-space of $s$.
Before designing our fair classifier, we formally define our notion of fairness in the next section.
\subsection{Fairness in Classification}

The typical group fairness notions in binary classifiers such as \emph{Demographic Parity }(DP) and \emph{Equalized Odds} (EO) are defined on deterministic predictions, i.e., in score-based classification, they work with a single threshold on scoring function $s$. Let $t^*$ be the threshold set by the practitioner. The resultant classifier is said to satisfy DP if
$
    G_s^0(t^*) + H_s^0(t^*) = G_s^1(t^*) + H_s^1(t^*)
$.
It satisfies the equivalence of \emph{acceptance rates} across groups.
Similarly, EO enforces equality of positive and negative error rates across protected groups, $1 - G_s^0(t^*) = 1 - G_s^1(t^*)$ and $H_s^0(t^*) = H_s^1(t^*)$.

\subsubsection*{$\varepsilon_p$\ourdef}
As discussed earlier, all group fairness notions are characterized by equality of a particular statistic across both the protected groups.  
In scoring-based probabilistic classifiers, these fairness notions depend on the selected threshold.
To achieve fairness across all thresholds, the practitioner can choose to retrain the model and achieve the right trade-offs between TPR and FNR.
However, retraining is expensive. Therefore, a desirable solution is 
To offer fair treatment to both protected groups using the pre-trained classifier.
However, this leads to invoking the post-processing technique every time the practitioner needs to update the threshold $t^*$. 
Instead, we propose a novel fairness measure to simplify the practitioner's job.
We perform post-processing on the given classifier once, and it ensures that no matter what threshold $t^*$ they choose to make decisions, the classifier offers similar treatment to both the protected groups. That is, the individual ROCs (Here on, we shall denote the ROCs of the protected groups, i.e., $ROC_{H_s^{0}, G_s^{0}}$ and $ROC_{H_s^{1}, G_s^{1}}$ by $\roc^0$ and $\roc^1$ respectively) should be within $\varepsilon$ distance ($\mathcal{L}_p$ norm) of each other.   We call it \emph{$\varepsilon_p$\ourdef}. More formally, 
\begin{definition}[$\varepsilon_p$\ourdef] \label{def:eroc}
A scoring function for binary classification $s$ with label prediction $\widehat{Y} = \mathbb{I}(s(x) \geq t)$ is said to satisfy \ourdef\ if for all $\alpha \in (0,1)$ the following holds:
\begin{equation}
    {\mid \mid \roc^1(\alpha) - \roc^0(\alpha) \mid \mid}_p \leq \varepsilon
\end{equation}
\end{definition}

\noindent In $\varepsilon_p$\ourdef, we utilize standard metrics (i.e. $\mathcal{L}_p$ norms) as the fairness statistic to quantify fairness. Thus, $\varepsilon_p$\ourdef\ is feasible for post-processing algorithms.
Next, we formulate the problem of fair post-processing. Note: $\varepsilon_1$\ourdef\ is a generalization of Equalized Odds to all the given thresholds of the scoring function. The proofs and detailed discussion are in Appendix B.

\subsection{Problem Formulation}

Given $s\in \mathcal{S}$, we would like to find $h\in\mathcal{S}|_s = \mathcal{H}(s)$ -- a transformation of a given scoring function such that $h$ satisfies $\varepsilon_1$\ourdef. Additionally, we want the loss in AUC due to transformation $\mathcal{H}$ minimal. That is, $\mathcal{L}_{F} = \auc - \texttt{AUC}_h$ must be minimal to retain the maximum performance guarantee of $s$. Thus, our goal is to get transformation $\mathcal{H}$ that solves the following optimization problem and returns the optimal transformed score $h^*$:

\begin{equation} \label{eq:fpp}
    h^* \in \argmax_{h \in \mathcal{S}|_s} \; \texttt{AUC}_h 
\end{equation}
    \[\text{s.t. } \mid \mid {\texttt{ROC}_{h}}^0(\alpha) - {\texttt{ROC}_{h}}^1(\alpha)\mid \mid_1 \leq \varepsilon , \;
    \forall \alpha \in [0,1]
\]

\section{Our Approach}
\label{sec:froc}
First, we explain query access to $\roc$ to sample from the desired statistic at various thresholds and its piece-wise linear approximation in Section~\ref{ssec:query} and Section~\ref{ssec:pla}, respectively. Since we cannot sample a continuum of thresholds, our $\roc$ will be discrete. In Section~\ref{ssec:ouralgo}, we describe the transport of ROCs.
Finally, we summarize our transformation as \ouralgo\ in Section~\ref{ssec:randomclass}.

\subsection{Query Model} \label{ssec:query}
Let $\mathcal{T} = \{ t_1, \ldots t_k \}$ be the set of thresholds at which we sample $\roc$ for each sensitive group ($t_i=\frac{i}{k}$).
Let $\mathcal{Q}^a(t_i)$ denote the query output at threshold $t_i$ for sensitive group $A = a$ on the $\roc^a$.
$
    \mathcal{Q}^a(t_i) \triangleq ROC_{H_s^{a}, G_s^{a}}(t_i)
$.

Abusing notations, we use $\mathcal{Q}^a(t_i)$ and $\mathcal{Q}^a_i$ interchangeably. Let $\mathcal{Q}^a = (\mathcal{Q}^a_1, \ldots, \mathcal{Q}^a_k)$ be the sequence of all query outputs for group $a$. In the next section, we construct the piece-wise linear approximation of the group-wise ROC curves using the group-wise query outputs $\mathcal{Q}^a$.

\subsection{Piece-wise Linear Approximation (PLA) of ROC-curves} \label{ssec:pla}
To obtain the piece-wise linear approximation (PLA), we sample $k$ points from ROC and construct a straight line from $\mathcal{Q}_i^a$ to $\mathcal{Q}_{i+1}^a$ for all $i = 1 \ldots k-1$. Lastly, we join $(0,0)$ to $\mathcal{Q}_1^a$ (see Figure \ref{fig:losslpa}). Following these steps on the query sets $\mathcal{Q}^a$ will generate the PLAs for protected groups $a \in \{0,1\}$.
We denote by $\widehat{G_s^{a}}, \widehat{H_s^{a}}$, the cumulative distributions induced by the linear approximation of the ROC-curve on $s$.

Due to PLA, we incur a loss $\mathcal{L}_{LPA}$ in $AUC_{H_s, G_s}$(shaded region in Figure \eqref{fig:losslpa}). $\mathcal{L}_{LPA}$ is inversely proportional to the number of queries $k$, see Section \ref{ssec:lossplabound} for bounds on this loss. Hence, we shall ignore this loss in our fairness analysis as it can be brought  arbitrarily close to $0$ by increasing $k$.

\begin{figure*}
    \centering
    \hspace{-1.5cm}
    \begin{minipage}{0.3\linewidth}
         \centering
         \includegraphics[scale = 0.07]{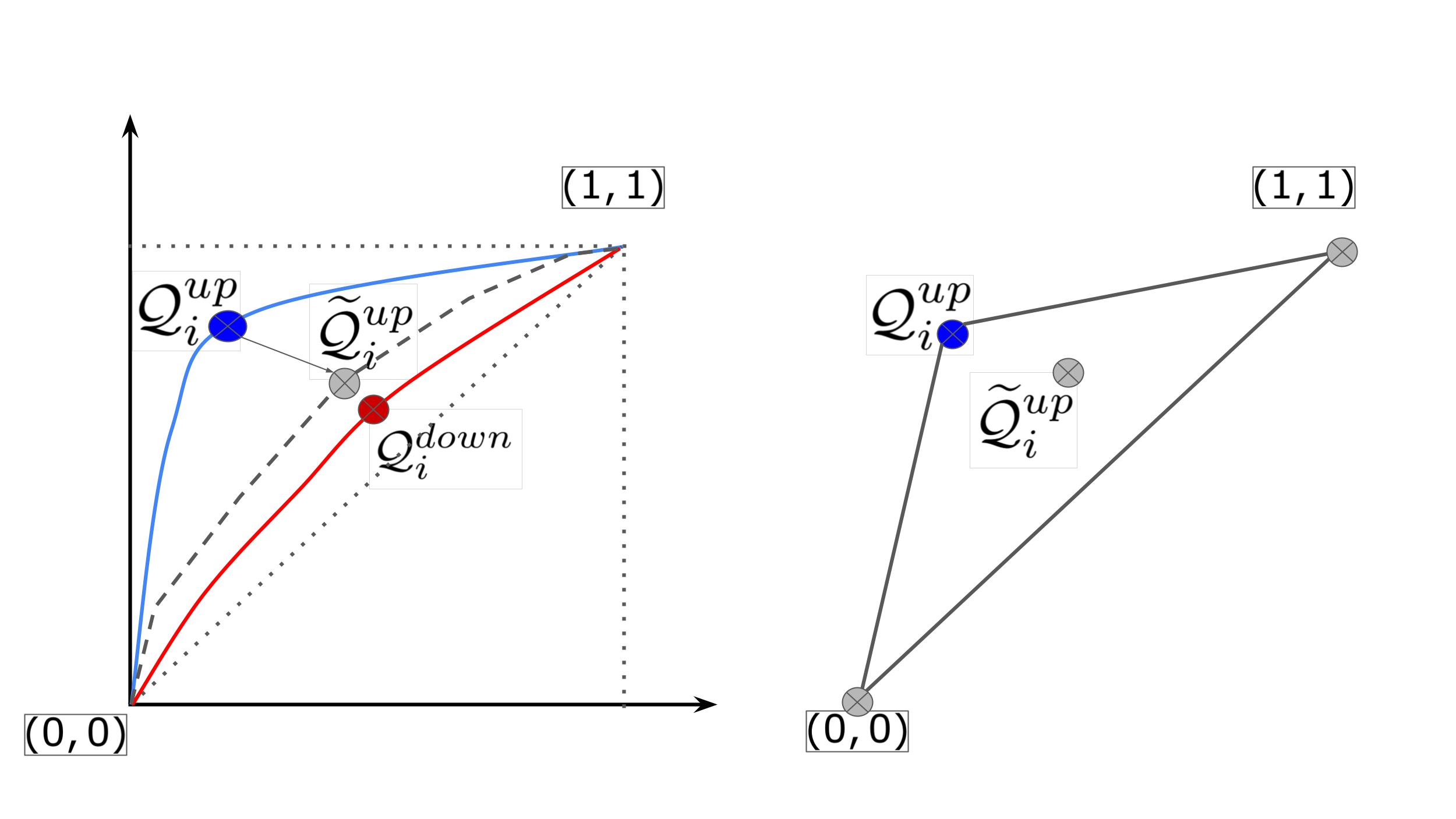}
         \caption{ROCs and convex hull}
         \label{fig:conv}
     \end{minipage}
    \hspace{1.5cm}
     \begin{minipage}{0.3\linewidth}
         \centering
         \includegraphics[scale = 0.05]{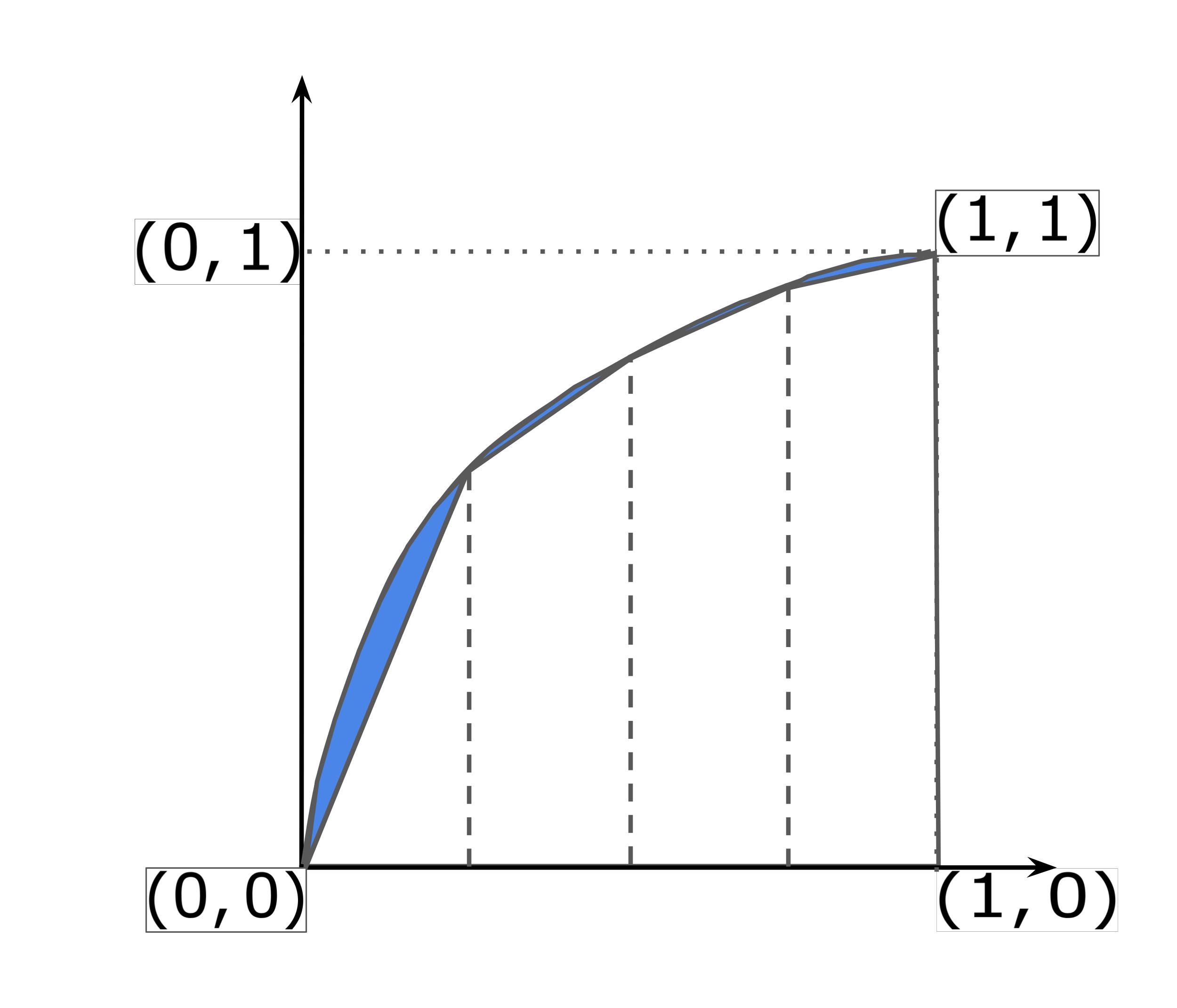}
         \caption{Shaded Area indicates $\mathcal{L}_{PLA}$}
         \label{fig:losslpa}
     \end{minipage}
     \begin{minipage}{0.3\linewidth}
         \centering
         \includegraphics[scale = 0.07]{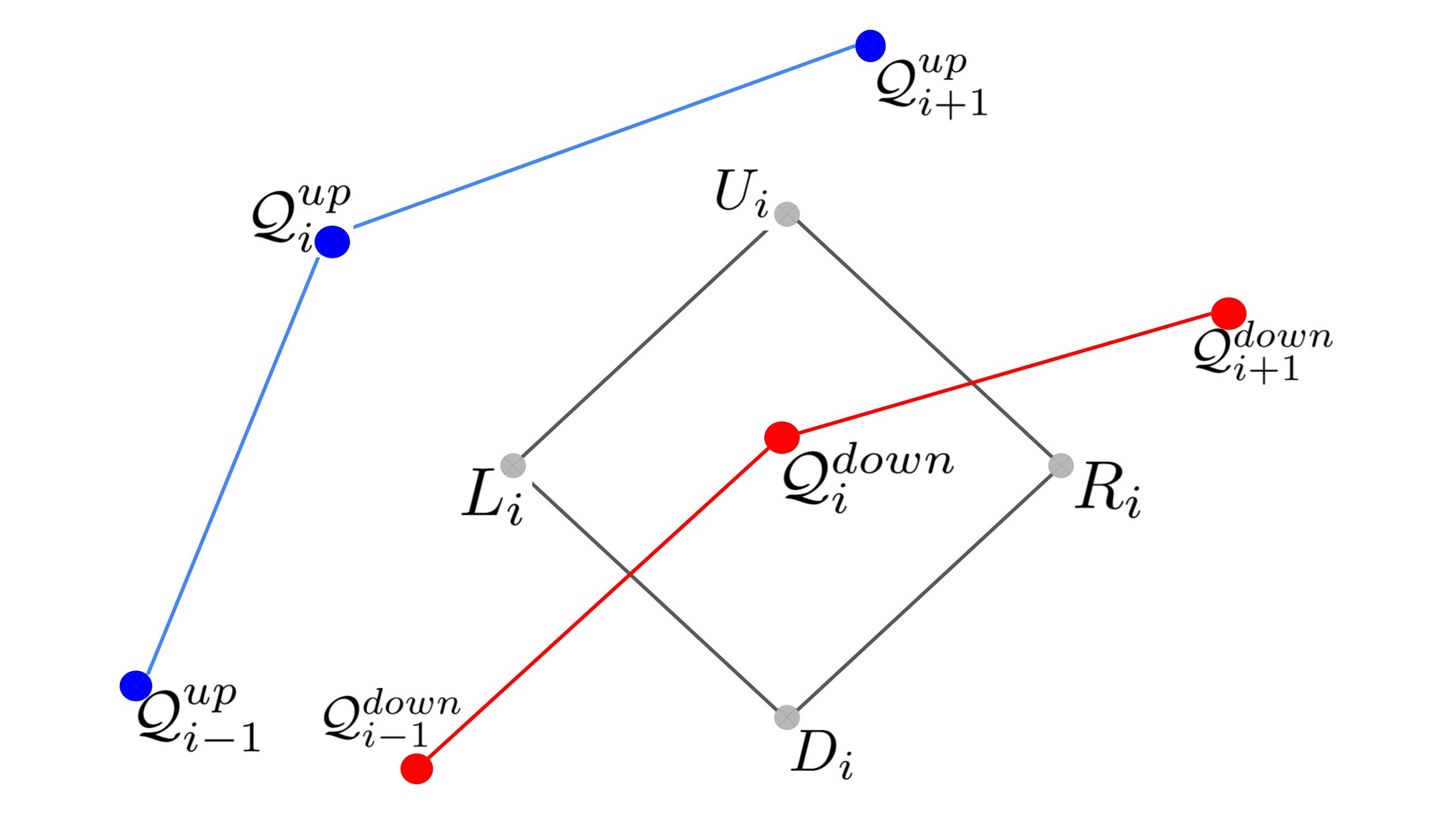}
         \caption{Norm Boundary}
         \label{fig:five over x}
     \end{minipage}
        \label{fig:three graphs}
\end{figure*}

\subsection{Transporting ROCs for \texorpdfstring{$\varepsilon_1$\ourdef}{}} \label{ssec:ouralgo}

Since we are using post-processing technique to ensure fairness, it is impossible to shift any \texttt{ROC} above its current position, i.e., build a classifier corresponding to any point in the epigraph (the points above the ROC curve) of $\roc$ just with the help of $s$. Interestingly, a classifier representing a point in the hypograph (points below the curve) of $s$ $\cap \mathcal{S}$ can be obtained through randomization on the predicted scores (see Chapter 3 in~\cite{fairmlbook19}). The key idea involves abstracting out the convex hull (Fig \ref{fig:conv}) formed by the three points $(0,0)$, $(1,1)$ and $\mathcal{Q}_i^{up}$, and sampling outcomes from classifiers representing $(0,0)$, $(1,1)$\footnote{Note that $(0,0)$ and $(1,1)$ represent `always reject' and `always accept' classifiers.}  and $\mathcal{Q}_i^{up}$ with specific probabilities. By taking convex combinations of the three aforementioned points in the ROC space, we can represent any point lying in their convex hull. The exact convex combinations are described in \textbf{C2}. We leverage this property to achieve $\varepsilon_1$\ourdef. We denote this space as \emph{ROC-space of $s$} -- $\mathcal{S}|_s$. Each point in $\mathcal{S}|_s$ represents a binary classifier in terms of its performance at a certain threshold $t$. Each point is of the form $(FPR(t), TPR(t))$. 

In the realm of binary classification, it is a common occurrence for one group to be subject to discrimination. Specifically, if we plot $\roc^0,\roc^1$, we will find that one of the ROCs is notably situated below the other.
For this study, the ROC predominantly above the other will be designated as $ROC_{up}$, while the other ROC will be referred to as $ROC_{down}$. We believe this is a reasonable assumption because we observed that in most classifiers (for which present the results and others we explored on the datasets mentioned in Section~E3) the ROCs don't intersect or intersect at regions where $FPR \le 0.2$ or $TPR \ge 0.5$. Typically, no practitioner will work in those areas of ROCs. We leave for future work to address intersecting ROCs.

Let $\mathcal{Q}^{up}, \mathcal{Q}^{down}$ be the corresponding set of query points for $\texttt{ROC}_{up}, \texttt{ROC}_{down}$ respectively. We also denote their fair counterparts by $\widetilde{\mathcal{Q}}^{up}, \widetilde{\mathcal{Q}}^{down}$.

\subsubsection{Algorithm Definitions}
We need to transport $\texttt{ROC}_{up}$ towards $\texttt{ROC}_{down}$ such that the new ROCs are within $\varepsilon$ distance of each other. Our approach is geometric. We need to identify certain points/curves in the epigraph of $\texttt{ROC}_{down}$ as follows.
\begin{definition}[Norm Boundary] \label{def:eNBoundary}
The set of all points within $\varepsilon$ distance ($\ell_1$ norm) from $\mathcal{Q}_i^{down}$ is known as the \emph{norm set} $\mathfrak{C}_i$. Formally, we have \[\mathfrak{C}_i \triangleq \{ x: x\in [0,1]^2 , ||x - \mathcal{Q}_i^{down}||_1 \le \varepsilon\}\]
The set of all points exactly $\varepsilon$ distance (in $\mathcal{L}_1$ norm) from $\mathcal{Q}_i^a$ is known as \emph{Norm Boundary} $\mathfrak{B}_i$. Formally, 
\[\mathfrak{B}_i \triangleq \{ x: x\in [0,1]^2 , ||x - \mathcal{Q}_i^{down}||_1 = \varepsilon\}\]
Additionally, we denote the vertices of the Norm Boundary Rhombus (starting from the top most point and moving clockwise) as $U_i$, $R_i$, $D_i$, and $L_i$.
\end{definition}

We say that an index $i \in [1 ,2 , \hdots, k]$ is a Boundary Cut index when $ROC_{up}$ intersects the Norm Boundary $\mathfrak{B}_i$. Formally,
\begin{definition}[Boundary Cut] \label{def:BCut}
 Index $i \in [1 ,2 , \hdots, k]$ is a \emph{Boundary Cut index} when $\mathfrak{B}_i \cap ROC_{up} \neq \phi$.
\end{definition}

We now define the three kinds of shifts that will be used in our Algorithm:
For a given $ i \in [1 , 2, \hdots ,k]$, Upshift is the transportation of $\mathcal{Q}_i^{up}$ to the point $U_i$. 
\begin{definition}[UpShift] \label{def:ush}
    For a given $ i \in [1 , 2, \hdots ,k]$, Upshift is the transportation of $\mathcal{Q}_i^{up}$ to the point $U_i$. Formally, \emph{UpShift} can be defined as the function that returns a fair threshold $\widetilde{\mathcal{Q}}_i^{up}$ (i.e. $U_i$) by taking the $\mathcal{Q}_i^{down}$ and $\varepsilon$ as the arguments.
\end{definition}

For a given $ i \in [1 , 2, \hdots ,k]$, Leftshift is the transportation of $\mathcal{Q}_i^{up}$ to the point $L_i$. Formally,
\begin{definition}[LeftShift] \label{def:lsh}
LeftShift is a function that returns a fair threshold $\widetilde{\mathcal{Q}}_i^{up}$ (i.e. $L_i$) by taking the $\mathcal{Q}_i^{down}$ and $\varepsilon$ as the arguments.
\end{definition}

\begin{definition}[CutShift] \label{def:csh}
    For a given $i \in [1 , 2, \hdots ,k]$ (representing the index of the $ROC_{down}$), we run through all the points of the $ROC_{up}$ and return the set of all points that intersect the Norm Boundary $\mathfrak{B_i}$. Formally, we define \emph{Cutshift} as a function that takes $\mathcal{Q}_i^{down}$ and $\varepsilon$ as the arguments and returns $ROC_{up} \cap \mathfrak{B}_i$. The set $ROC_{up} \cap \mathfrak{B}_i$ can be represented as $\{p_{left} , p_{right}\}$ denoting the points at the intersection of $ROC_{up}$ at the \textbf{left-side} of the Norm Boundary and the \textbf{right-side} of the Norm Boundary respectively.
\end{definition}

Now, we elaborate on the above procedure to transport points from $ROC_{up}$ towards $ROC_{down}$.

\subsubsection{Algorithm for ROC Transport}
We provide a geometric algorithm that returns a classifier equivalent to the scoring function $h^*$ in $\mathcal{S}|_s$.

\begin{algorithm}
\caption{\textsc{FairROC Algorithm}}
\label{alg:fairroc}
\begin{algorithmic}[1]
\Require $ROC_{up}$, $ROC_{down}$, $\varepsilon$
\Ensure $FairROC_{up}$, $FairROC_{down}$
\State Initialize $i \gets 1$, $k \gets \text{length}(ROC_{up})$
\State $FairROC_{up} \gets \emptyset$, $FairROC_{down} \gets ROC_{down}$

\While{$i < k-1$}
    \State $i \gets i+1$
    \If{\Call{BoundaryCut}{$i, \varepsilon$} == TRUE}
        \State $p_{left}, p_{right} \gets \Call{CutShift}{i, ROC_{up}, ROC_{down}}$
        \If{$FPR(\mathcal{Q}_i^{up}) \geq FPR(\mathcal{Q}_i^{down})$}
            \State $\widetilde{\mathcal{Q}}_i^{up} \gets p_{right}$
        \Else
            \State $\widetilde{\mathcal{Q}}_i^{up} \gets p_{left}$
        \EndIf
    \ElsIf{$\mathcal{Q}_i^{up} \in \Call{Hypograph}{ROC_{down}}$}
        \State $\widetilde{\mathcal{Q}}_i^{up} \gets \mathcal{Q}_i^{up}$
        \State \textbf{continue}
    \Else
        \If{$\text{Area}(\square \mathcal{Q}_{i+1}^{up} \mathcal{Q}_{i}^{up} \mathcal{Q}_{i-1}^{up} {L}_i) \geq \text{Area}(\square \mathcal{Q}_{i+1}^{up} \mathcal{Q}_{i}^{up} \mathcal{Q}_{i-1}^{up} {U}_i)$}
            \State $\widetilde{\mathcal{Q}}_i^{up} \gets U_i$
        \Else
            \State $\widetilde{\mathcal{Q}}_i^{up} \gets L_i$
        \EndIf
    \EndIf
    \State $FairROC_{up} \gets \Call{Append}{\widetilde{\mathcal{Q}}_i^{up}}$
\EndWhile

\end{algorithmic}
\end{algorithm}

Note that, Algorithm \ref{alg:fairroc} treats $ROC_{down}$ as \emph{implicitly} fair.
Also, by $Area(\square ABCD)$, we denote the area of the quadrilateral whose vertices are $A,B,C$, and $D$. This area is easily found in this context by splitting $\square ABCD$ into two disjoint triangles- $\Delta ABC$ and $\Delta ACD$ and using the Herons formula \cite{kendig20002000} on each triangle. 

For example, consider $Area(\Delta \mathcal{Q}_i^{up} \mathcal{Q}_{i-1}^{up} L_i)$. Let $a = ||\mathcal{Q}_i^{up} \mathcal{Q}_{i-1}^{up}||_2$, $b = ||\mathcal{Q}_i^{up} L_i||_2$ and $c = ||\mathcal{Q}_{i-1}^{up} L_i||_2$. Additionally, we define $s = \frac{a+b+c}{2}$. Then, it is true that:
\[Area(\Delta \mathcal{Q}_i^{up} \mathcal{Q}_{i-1}^{up} L_i ) = \sqrt{s(s-a)(s-b)(s-c)}\]

\subsection{Obtaining fair classifier from the updated ROCs} \label{ssec:randomclass}

The algorithm described in the previous subsection returns the fair ROC curves according to $\varepsilon_1$\ourdef. As a final step, we 
need to find the transformed classifier.  We call it \texttt{ConstructClassifier}($FairROC_{up}$,$FairROC_{down}$
,$\roc^0$,$\roc^1$) which returns a probabilistic binary classifier representing $h = \mathcal{H}(s)$ such that it represents the FairROCs. We construct one using the procedure explained in Section 3.3.

Now, we establish the optimality of our solution within specific assumptions.
\section{Theoretical Analysis}
As described in Section ~\eqref{ssec:pla}, we work with PLA of the ROC curves $ROC_{H_s^a, G_s^a}$, $a\in \{0,1\}$. This causes a \emph{loss} in area under ROC. We denote this loss by $\mathcal{L}_{PLA}$ and is quantified as the difference in AUCs of $ROC_{H_s^a, G_s^a}$ and $ROC_{\widehat{H_s^a}, \widehat{G_s^a}}$.

In Section \ref{ssec:ouralgo}, transporting the ROC query points, $\mathcal{Q}^{up}$ introduces a decrease of the area under the ROC curve due to the transformation of scoring function $s$ to $h$. We denote this loss by $\mathcal{L}_{AUC}$. 
This loss can be quantified as the difference in AUCs of $ROC_{\widehat{H_s^a}, \widehat{G_s^a}}$ and $ROC_{H_h^a, G_h^a}$
The total loss in AUC, $\mathcal{L}$, induced by \ouralgo\ is given by:
$\mathcal{L} = \mathcal{L}_{PLA} + \mathcal{L}_{AUC}$

\subsection{PLA Loss analysis} \label{ssec:lossplabound}

We start our analysis by making a few standard assumptions regarding the continuity and differentiability of the cumulative distributions on the family of scoring functions $\mathcal{S}$.
We adopt a less stringent assumption than that presented in \cite{vogel2021}, as we impose only an upper bound on the slopes. This contrasts with the approach in \cite{vogel2021}, which necessitates both an upper and lower bound on the slopes.
\begin{assumption} \label{assumption1}

We assume that the rate of change (with respect to the thresholds $t$) of the $TPR$s and $FPR$s are upper bounded. I.e. we assume that $\exists~ u_T , u_F \in \mathbb{R}$ such that $\frac{d~TPR}{dt}\le u_T$ and $\frac{d~FPR}{dt}\le u_F$. 
\end{assumption}


\begin{theorem} \label{th:pla}
Let $ROC_{\widehat{H_s^a}, \widehat{G_s^a}}$ be the \emph{PLA} of $ROC_{H_s^a,G_s^a}$ over the query set of $k$ equidistant thresholds, $\mathcal{T} = \{ t_i \mid t_i = i/k \ \forall i \in [k] \}$. The corresponding $\mathcal{L}_{PLA}$ is bounded as:
    $\mathcal{L}_{PLA} \le\frac{1}{2} \frac{u_Tu_F}{k}$
\end{theorem}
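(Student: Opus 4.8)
The plan is to bound the area between the true ROC curve $ROC_{H_s^a,G_s^a}$ and its piecewise-linear chord approximation on each subinterval $[t_i,t_{i+1}]$ of the threshold grid, then sum over the $k$ pieces. The key observation is that the ROC curve is parametrized by the threshold $t$ via $\alpha \mapsto (H_s(t),G_s(t))$ (equivalently, the ROC is the image of $t \mapsto (FPR(t),TPR(t))$ traced as $t$ runs over $[0,1]$), and the PLA replaces the arc between consecutive query points by the straight chord joining them. So on the $i$-th piece the lost area is the area enclosed between the arc $\{(FPR(t),TPR(t)) : t\in[t_i,t_{i+1}]\}$ and its chord.

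First I would fix a subinterval $I_i=[t_i,t_{i+1}]$ of length $1/k$. Over $I_i$, by Assumption~\ref{assumption1} the horizontal extent of the arc is $\Delta F_i := FPR(t_{i+1}) - FPR(t_i) \le u_F/k$ and the vertical extent is $\Delta T_i := TPR(t_{i+1}) - TPR(t_i) \le u_T/k$ (using monotonicity of TPR, FPR in $t$, so these increments have a fixed sign). The arc, being the graph of a monotone function over the $FPR$-interval of width $\Delta F_i$, and its chord both lie inside the axis-aligned bounding box of dimensions $\Delta F_i \times \Delta T_i$; hence the area between arc and chord is at most the area of that box, $\Delta F_i \Delta T_i \le \frac{u_T u_F}{k^2}$. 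Actually a cleaner bound: the region between a monotone arc and its chord inside a box of width $w$ and height $h$ has area at most $\tfrac12 wh$ (the arc and chord together bound a region contained in one of the two triangles obtained by cutting the box along the chord — both triangles have area $\tfrac12 wh$; more carefully, arc-minus-chord area $\le \tfrac12 wh$ because the arc is monotone). This gives per-piece loss $\le \tfrac12 \cdot \frac{u_F}{k}\cdot\frac{u_T}{k} = \frac{u_T u_F}{2k^2}$.

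Then I would sum over $i=1,\dots,k-1$ (plus the initial segment from $(0,0)$ to $\mathcal{Q}_1^a$, which is handled by the same bounding-box argument on $[0,t_1]$), giving a total loss of at most $k \cdot \frac{u_T u_F}{2k^2} = \frac{u_T u_F}{2k}$, which is the claimed bound $\mathcal{L}_{PLA} \le \tfrac12 \frac{u_T u_F}{k}$.

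The main obstacle — really the only delicate point — is justifying the per-piece inequality "area between monotone arc and its chord $\le \tfrac12 wh$" rigorously, and making sure the bounding box dimensions are correctly controlled by the derivative bounds. One has to be careful that the ROC arc is genuinely monotone on each piece (which follows since both $G_s(t)$ and $H_s(t)$ are monotone in $t$, so as $t$ decreases both FPR and TPR increase together, tracing a monotone curve in the unit square) and that $\Delta F_i \le u_F/k$ follows from $\frac{d\,FPR}{dt}\le u_F$ by the mean value theorem over an interval of length $1/k$. A subtlety worth a sentence: if one wants the sharp constant $\tfrac12$ rather than $1$, one should note that the chord splits the bounding box into two triangles of equal area $\tfrac12 wh$, and the monotone arc stays on one side of... no — in general the arc may cross the chord, but arc and chord both being monotone from one corner of the box to the opposite corner, the symmetric difference of the two subgraphs is contained in a single triangle of area $\tfrac12 wh$. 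I would make this precise with a short convexity-free geometric argument (or, if the ROC piece is concave as is typical after taking the convex hull, the region is exactly below the chord and the bound is immediate). Everything else is summation and substitution.
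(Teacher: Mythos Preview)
Your proposal is correct and follows essentially the same approach as the paper: bound the per-piece area between arc and chord by a triangle of base $u_F/k$ and height $u_T/k$, yielding $\tfrac{u_T u_F}{2k^2}$, then sum over the $k$ pieces. The paper's proof is more pictorial (it argues via a figure that the maximal deviation on each piece is contained in such a triangle because ROCs are monotone), whereas you give the same bounding-box-and-triangle argument analytically and are somewhat more careful about why the $\tfrac12$ constant holds without assuming concavity; but the structure and the arithmetic are identical.
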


\subsection{AUC loss analysis}
We start our analysis by making a few assumptions regarding the spacing of the ROC thresholds and the ROC curve.
\begin{assumption} \label{assumption2}
We have two assumptions:
\begin{compactitem}
    \item $\forall i\in \{1 ,2 , \hdots , k\}$, we assume that $FPR(\mathcal{Q}_{i-1}^{down}) \le FPR(\mathcal{Q}_i^{up}) \le FPR(\mathcal{Q}_{i+1}^{down})$.
    \item We assume that the $ROC_{up}$ can intersect any Norm boundary (i.e. $(\mathfrak{B}_i)_{i\in \{1 ,2 , \hdots , k\}}$) at most 2 times.
\end{compactitem}
\end{assumption}
We note that even if \textbf{Assumption 4.2} does not hold, \ouralgo remains operational and continues to produce outputs that are \ourdef fair. However, under these conditions, the optimality with respect to AUC is not guaranteed, as \textbf{Theorem 4.4} no longer applies.

\begin{theorem}
\label{thm:}If a given classifier $s$ is piece-wise linear and satisfies assumption 4.2, the ROCs returned by \ouralgo\ represent the classifier solving optimization problem~\ref{eq:fpp}.
\end{theorem}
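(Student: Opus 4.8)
The plan is to prove that the pair returned by \ouralgo\ is (i) \emph{feasible} for \eqref{eq:fpp} --- i.e.\ it is realized by some $h\in\mathcal{S}|_s$ and satisfies $\varepsilon_1$\ourdef\ --- and (ii) \emph{optimal} --- no feasible $h$ has strictly larger $\texttt{AUC}_h$. Since Algorithm~\ref{alg:fairroc} leaves $ROC_{down}$ untouched, I would first isolate the effect of the transport: prove a short monotonicity lemma stating that, with $ROC_{down}$ held fixed and $ROC_{up}$ replaced by any achievable curve lying weakly below it, $\texttt{AUC}_h$ is a nondecreasing functional of the pointwise height of that replacement curve. Given this lemma, \eqref{eq:fpp} reduces to a clean geometric problem: among all achievable curves $C$ (curves below the concave hull $\widehat{ROC}_{up}$, realizable by per-group randomization of $s$ as in Section~\ref{ssec:randomclass}) such that $C$ lies inside the $\varepsilon$-$\ell_1$ tube around $ROC_{down}$ (and conversely $ROC_{down}$ lies inside the $\varepsilon$-$\ell_1$ tube around $C$), \emph{maximize the area under $C$}. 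The rhombi $\mathfrak{C}_i$ of Definition~\ref{def:eNBoundary} are exactly the cross-sections of this tube at the query points.

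\paragraph{Feasibility.}
I would go branch-by-branch through Algorithm~\ref{alg:fairroc}: each candidate $\widetilde{\mathcal{Q}}_i^{up}$ is either a vertex $U_i$ or $L_i$ of the Norm Boundary $\mathfrak{B}_i$, or one of the intersection points $p_{left},p_{right}\in\mathfrak{B}_i\cap ROC_{up}$, or the unmoved point $\mathcal{Q}_i^{up}$ lying in the hypograph of $ROC_{down}$; in every case $\widetilde{\mathcal{Q}}_i^{up}\in\mathfrak{C}_i$ and $\widetilde{\mathcal{Q}}_i^{up}$ is weakly below $ROC_{up}$ (for $U_i,L_i$ this uses that $i$ is not a Boundary Cut, so $ROC_{up}$ does not enter $\mathfrak{B}_i$ and hence stays above it near index $i$). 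To pass from the breakpoints to the whole piecewise-linear $FairROC_{up}$, I would use convexity of each rhombus $\mathfrak{C}_i$ together with the first bullet of Assumption~\ref{assumption2} (the interleaving $FPR(\mathcal{Q}_{i-1}^{down})\le FPR(\mathcal{Q}_i^{up})\le FPR(\mathcal{Q}_{i+1}^{down})$), which forces $\widetilde{\mathcal{Q}}_i^{up}$ and $\widetilde{\mathcal{Q}}_{i+1}^{up}$ into a region where the joining segment stays inside $\mathfrak{C}_i\cup\mathfrak{C}_{i+1}$, hence inside the tube; concavity of $ROC_{up}$ (or passing to $\widehat{ROC}_{up}$) then gives that an interpolant of points below $ROC_{up}$ is itself below $ROC_{up}$, so the curve is achievable and \texttt{ConstructClassifier} returns a valid $h\in\mathcal{S}|_s$. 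The reverse tube containment follows symmetrically since every $\mathcal{Q}_i^{down}$ has $\widetilde{\mathcal{Q}}_i^{up}$ (or an interpolated point above it) within $\ell_1$-distance $\varepsilon$.

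\paragraph{Optimality.}
Here I would argue by pointwise domination. Let $E$ be the upper boundary, as a graph over FPR, of the $\varepsilon$-$\ell_1$ tube around $ROC_{down}$; a short computation (optimizing the inner support over the $\ell_1$ ball) shows $E$ is concave whenever $ROC_{down}$ is, so $C^\star(x)\triangleq\min\{\widehat{ROC}_{up}(x),\,E(x)\}$ is concave, monotone, achievable, and (using that $ROC_{up}$ lies above $ROC_{down}$) satisfies both tube-containment conditions; thus $C^\star$ is feasible and, by construction, $C\le C^\star$ pointwise for every feasible $C$, so $C^\star$ uniquely maximizes area under the curve and hence $\texttt{AUC}_h$. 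It then remains to check that \ouralgo\ outputs precisely the breakpoints of $C^\star$: at an index $i$ the defining minimum is attained on $\widehat{ROC}_{up}$ exactly in the ``$\mathcal{Q}_i^{up}\in\mathrm{Hypograph}(ROC_{down})$'' case and in the Boundary-Cut case (where $C^\star$ leaves the tube along $ROC_{up}$ at $\mathfrak{B}_i\cap ROC_{up}$, the side chosen by the test $FPR(\mathcal{Q}_i^{up})\gtrless FPR(\mathcal{Q}_i^{down})$), and on $E$ otherwise, where, under Assumption~\ref{assumption2}, the tube boundary near index $i$ is made of the rhombus edges through $U_i$, so the area-maximal interpolation breakpoint is $U_i$ or $L_i$ --- and the larger-quadrilateral test in Algorithm~\ref{alg:fairroc} is exactly the comparison of the area retained under the two candidate interpolants. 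Finally, for piecewise-linear $s$ one can place the $k$ query thresholds at the kinks of $ROC_s^a$, making $\mathcal{L}_{PLA}=0$ (Theorem~\ref{th:pla}); hence the total AUC loss equals $\mathcal{L}_{AUC}$, which $C^\star$ minimizes, and the returned $h$ solves \eqref{eq:fpp}.

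\paragraph{Main obstacle.}
The substantive work is the optimality step, and inside it the claim that maximizing the \emph{linear} area functional over the feasible set \emph{decomposes} into the independent per-index tests that the algorithm performs. This is precisely where Assumption~\ref{assumption2} must be used in full: the FPR-interleaving decouples the admissible positions of $\widetilde{\mathcal{Q}}_i^{up}$ across indices, and the ``at most two intersections with each Norm Boundary'' rules out a rhombus whose boundary meets $ROC_{up}$ in a way that would create extra extreme points or disconnect the feasible slice --- either of which would break the reduction of the global optimum to the local tests of Algorithm~\ref{alg:fairroc}. I also expect nontrivial care in the $\texttt{AUC}_h$ monotonicity lemma, since the overall AUC mixes within-group and cross-group ranking probabilities and each such term must be shown monotone under lowering $ROC_{up}$.
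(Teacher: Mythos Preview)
Your route differs from the paper's. The paper does not build a global dominating curve $C^\star$; it proves three separate local statements (Theorems~4.3--4.5, with proofs in Appendix~D): (i) any optimal $\widetilde{\mathcal{Q}}_i^{up}$ must lie on $\mathfrak{B}_i$ rather than in the interior of $\mathfrak{C}_i$; (ii) at a Boundary-Cut index the CutShift point dominates every other boundary point; (iii) otherwise the better of $U_i,L_i$ dominates. Each is argued by a direct area-of-quadrilateral comparison with the \emph{original} neighbours $\mathcal{Q}_{i-1}^{up},\mathcal{Q}_{i+1}^{up}$ held fixed, and Theorem~4.2 is asserted to follow from their conjunction. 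Your pointwise-dominance construction is conceptually cleaner and, if it went through, would yield a genuinely global statement rather than the per-index one the paper actually proves; it also makes explicit the monotonicity-of-$\texttt{AUC}_h$ reduction that the paper leaves implicit.

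There is, however, a gap in your optimality step. Your claim ``\ouralgo\ outputs precisely the breakpoints of $C^\star$'' is not correct as stated. At a non-Boundary-Cut index the algorithm may return $L_i$, yet $L_i$ lies weakly below $E$ (and hence weakly below $C^\star$) whenever $ROC_{down}$ has any segment of slope exceeding~$1$ near $\mathcal{Q}_i^{down}$; likewise $U_i$ need not lie on $E$ either. The algorithm is not sampling $C^\star$: it is producing a PL curve on $k$ breakpoints, and the quadrilateral test compares the area under the two candidate PL interpolants through $U_i$ versus $L_i$ \emph{with the original neighbours} $\mathcal{Q}_{i\pm1}^{up}$. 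That comparison is exactly what the paper's Theorem~4.5 analyses, and it is not derivable from the $C^\star$ picture alone. To close your argument you would either have to show that the PL curve returned by \ouralgo\ has the same area as the achievable portion of $C^\star$ (which requires controlling where the kinks of $C^\star$ fall relative to the query abscissae, and is where both bullets of Assumption~4.2 do real work), or abandon $C^\star$ at this point and argue directly, as the paper does, that among points in $\mathfrak{C}_i$ the extremal ones for the local area functional are $U_i$, $L_i$, and the CutShift points.
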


\subsection{Optimally fair points and Norm Boundary}
This section proves that all optimally fair points must lie on some Norm Boundary. We do this by establishing that the performance of any point in the Norm Set can be improved by appropriate transportation to a point on the Norm Boundary.

\begin{theorem}(Norm Boundary)
    If $(\widetilde{\mathcal{Q}}_i^{up})_{i\in \{1 ,2 , \hdots , k\}}$ is the set of optimal fair (points that maximize the AUC and also satisfy the $\varepsilon_1$ \ourdef) thresholds must necessarily be a subset of $(\mathfrak{B}_i)_{i\in \{1 ,2 , \hdots , k\}}$. 
\end{theorem}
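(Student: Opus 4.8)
The plan is to read the statement as a necessary optimality condition for the program~\eqref{eq:fpp}: in any optimal solution, every fairness constraint is \emph{active}, i.e.\ $\lVert \widetilde{\mathcal{Q}}_i^{up}-\mathcal{Q}_i^{down}\rVert_1=\varepsilon$, which is exactly $\widetilde{\mathcal{Q}}_i^{up}\in\mathfrak{B}_i$. I would prove the contrapositive by an exchange argument: if some optimal vertex lies in the (relative) interior of the norm set $\mathfrak{C}_i$, I can perturb it so as to strictly increase the AUC while staying feasible, contradicting optimality.

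First I would make the dependence of the objective on a single vertex explicit. Writing $FairROC_{up}$ as the piece-wise linear curve through $(0,0),\widetilde{\mathcal{Q}}_1^{up},\dots,\widetilde{\mathcal{Q}}_k^{up}$ and its area as a sum of trapezoids, the AUC is affine in the coordinates of $\widetilde{\mathcal{Q}}_i^{up}=(x_i,y_i)$ once the two neighbours are fixed, and a short computation gives that it is \emph{non-decreasing} in $y_i$ (the TPR) and \emph{non-increasing} in $x_i$ (the FPR), the sign being governed by the monotone ordering of the vertices in both coordinates — which follows from Assumption~\ref{assumption2} together with the ordering of the queries $\mathcal{T}$. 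Moreover the increase is \emph{strict} whenever the neighbouring FPRs (resp.\ TPRs) are distinct, which is the generic situation. Hence the AUC strictly improves when $\widetilde{\mathcal{Q}}_i^{up}$ is moved towards higher TPR and/or lower FPR.

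Next I would verify that such a small improving perturbation of an interior vertex stays feasible. Membership in $\mathfrak{C}_i$ survives since we started in its interior. Membership in the ROC-space $\mathcal{S}\vert_s$ survives because $\widetilde{\mathcal{Q}}_i^{up}$ lies weakly below the concave hull of $ROC_{up}$ and, sitting in a neighbourhood of $\mathcal{Q}_i^{down}\in ROC_{down}$ with $ROC_{up}$ strictly above $ROC_{down}$, it is in fact in the interior of $\mathcal{S}\vert_s$, so the perturbed point is still a convex combination of the classifiers at $(0,0),(1,1),\mathcal{Q}_i^{up}$. Finally, the $\varepsilon_1$\ourdef\ requirement at \emph{all} $\alpha$ — not merely at the query points — survives because, under the PLA and Assumption~\ref{assumption2}, the point-wise $\ell_1$ gap between $FairROC_{up}$ and $FairROC_{down}$ is dominated by the vertex-wise gaps, so keeping every vertex inside its $\mathfrak{C}_i$ keeps the entire curve pair within $\varepsilon$. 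Together this produces a feasible solution with strictly larger AUC, contradicting optimality; hence the optimal $\widetilde{\mathcal{Q}}_i^{up}$ must lie on $\mathfrak{B}_i$ (in fact on its north-west edge $\overline{U_iL_i}$, consistent with the UpShift/LeftShift endpoints of Algorithm~\ref{alg:fairroc}).

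The remaining case is a Boundary Cut index, where $ROC_{up}$ dips into the interior of $\mathfrak{C}_i$ and therefore $\mathcal{S}\vert_s$, rather than $\mathfrak{B}_i$, is what binds in that column; there the north-west Pareto frontier of $\mathfrak{C}_i\cap\mathcal{S}\vert_s$ contains an arc of $ROC_{up}$, and I would argue from the concavity and monotonicity of $ROC_{up}$ against the rhombus shape of $\mathfrak{C}_i$ that the AUC-maximising vertex on that arc is exactly a point where $ROC_{up}$ meets $\mathfrak{B}_i$, i.e.\ one of the points $p_{left},p_{right}$ of Definition~\ref{def:csh}, which again lies on $\mathfrak{B}_i$. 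Combining the two cases over all $i$ yields $(\widetilde{\mathcal{Q}}_i^{up})_i\subseteq(\mathfrak{B}_i)_i$. I expect the two genuine obstacles to be: (i) reducing the continuum of fairness constraints to the finitely many vertex constraints, where the interleaving condition and the ``at most two intersections'' clause of Assumption~\ref{assumption2} are needed to keep a segment incident to the moved vertex from drifting more than $\varepsilon$ from $FairROC_{down}$; and (ii) the Boundary Cut corner case, which is essentially the UpShift-versus-LeftShift area comparison of Algorithm~\ref{alg:fairroc} in disguise and must be handled by a careful geometric argument rather than the clean monotonicity used elsewhere.
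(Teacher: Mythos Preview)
Your approach is essentially the paper's: both argue by contradiction that a point $C$ strictly inside the Norm Set $\mathfrak{C}_i$ can be replaced by a boundary point with strictly smaller AUC loss, so no interior point can be optimal. The paper's execution is more bare-bones and more geometric than yours: rather than appealing to coordinate-wise monotonicity of the trapezoid area, it picks a specific witness $A\in\mathfrak{B}_i$ (implicitly a CutShift point on $ROC_{up}\cap\mathfrak{B}_i$) and observes the area decomposition
\[
Area(\square\,\mathcal{Q}_{i-1}^{up}\,C\,\mathcal{Q}_{i+1}^{up}\,\mathcal{Q}_i^{up})=Area(\triangle\,\mathcal{Q}_i^{up}\,A\,\mathcal{Q}_{i+1}^{up})+Area(\square\,\mathcal{Q}_{i-1}^{up}\,C\,\mathcal{Q}_{i+1}^{up}\,A)\ge Area(\triangle\,\mathcal{Q}_i^{up}\,A\,\mathcal{Q}_{i+1}^{up}),
\]
relying on figures rather than algebra. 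The paper does \emph{not} address the issues you flag as obstacles --- feasibility in $\mathcal{S}\vert_s$, the continuum of $\alpha$'s, or the Boundary Cut corner case --- within this theorem; those are silently absorbed into Assumption~4.2 and deferred to the separate CutShift and UpShift theorems. Your monotonicity formulation and explicit feasibility checks are more careful and self-contained; the paper's argument is shorter but correspondingly leaves more to the reader and the pictures.
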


\begin{theorem}(CutShift)
    If index $i$ is a Boundary cut point, then the CutShift operation must be performed. Of the 2 points ($p_{left}$ and $p_{right}$) returned by the Cutshift operation, the point that is closer to $\mathcal{Q}_{i}^{up}$ must be chosen i.e.$\mathcal{\widetilde{Q}}_i^{up} = argmin_{p \in \{ p_{left} , p_{right}\}} |FPR(\mathcal{Q}_i^{up}) - FPR(p)|$
\end{theorem}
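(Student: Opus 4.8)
The plan is to read off \ouralgo's choice at a boundary-cut index $i$ as the solution of a one-variable subproblem and show the algorithm picks its maximizer. Fix all transported points $\widetilde{\mathcal{Q}}_j^{up}$ for $j\neq i$; since \ouralgo\ leaves $ROC_{down}$ unchanged, maximizing $\texttt{AUC}_h$ is the same as maximizing the area under the piecewise-linear $FairROC_{up}$, and the only free choice is $\widetilde{\mathcal{Q}}_i^{up}$. Its feasible set is cut out by three requirements: it must lie in the norm set $\mathfrak{C}_i$ ($\varepsilon_1$\ourdef\ at index $i$); it must lie weakly below $ROC_{up}$, because post-processing can only realize points in the hypograph of a group's ROC (Section~\ref{ssec:randomclass}); and the resulting curve must stay a monotone ROC, which couples $\widetilde{\mathcal{Q}}_i^{up}$ to $\widetilde{\mathcal{Q}}_{i-1}^{up}$ and $\widetilde{\mathcal{Q}}_{i+1}^{up}$. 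By the Norm Boundary theorem the optimizer sits on $\mathfrak{B}_i$, so $\widetilde{\mathcal{Q}}_i^{up}\in\mathfrak{B}_i\cap\mathrm{hypo}(ROC_{up})$. The computational fact I use repeatedly: with the neighbours fixed, the area under $FairROC_{up}$ is a constant plus $\tfrac12(y_{i-1}-y_{i+1})\,\mathrm{FPR}(\widetilde{\mathcal{Q}}_i^{up})+\tfrac12(x_{i+1}-x_{i-1})\,\mathrm{TPR}(\widetilde{\mathcal{Q}}_i^{up})$ — the bilinear terms cancel — so it is affine in the coordinates of $\widetilde{\mathcal{Q}}_i^{up}$, strictly decreasing in FPR and strictly increasing in TPR (the coefficient signs coming from monotonicity of $FairROC_{up}$). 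Hence its maximum over a union of line segments is attained at a breakpoint.

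\textbf{Part 1 (CutShift is forced).} Since $ROC_{up}$ lies strictly above $ROC_{down}$ at every FPR, it passes strictly above the centre $\mathcal{Q}_i^{down}$ of the rhombus $\mathfrak{B}_i$. Combining this with Assumption~\ref{assumption2} (at most two intersections with $\mathfrak{B}_i$) and monotonicity of $ROC_{up}$ (which forbids, e.g., entering on $L_iD_i$ and leaving on $D_iR_i$), I would show $ROC_{up}$ enters $\mathfrak{C}_i$ through a left edge and leaves through a right edge, and in particular that the top vertex $U_i$ — and the whole sub-arc of $\mathfrak{B}_i$ between $p_{left}$ and $p_{right}$ on the $U_i$ side — lies strictly above $ROC_{up}$ and is therefore infeasible. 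So \textsc{UpShift} is unavailable and \textsc{CutShift} must be used. The feasible arc of $\mathfrak{B}_i$ is the complementary one through $L_i,D_i,R_i$, whose breakpoints are $p_{left},p_{right},L_i,D_i,R_i$; by the objective's monotonicity $D_i$ (dominated by $L_i$) and $R_i$ (dominated by $p_{right}$) are never optimal. Finally I would eliminate $L_i$: whichever side of $\mathcal{Q}_i^{down}$ the test $\mathrm{FPR}(\mathcal{Q}_i^{up})\ge\mathrm{FPR}(\mathcal{Q}_i^{down})$ places $\mathcal{Q}_i^{up}$ on, Assumption~\ref{assumption2} (first item) forces the neighbour on that side into a norm set $\mathfrak{C}_{i\mp1}$ all of whose points have FPR strictly on the wrong side of $\mathrm{FPR}(L_i)$, so $\widetilde{\mathcal{Q}}_i^{up}=L_i$ would break monotonicity of $FairROC_{up}$. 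This leaves exactly $\{p_{left},p_{right}\}$.

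\textbf{Part 2 (pick the nearer crossing).} Both candidates lie on $ROC_{up}$, with $p_{left}$ lower-left and $p_{right}$ upper-right of $\mathcal{Q}_i^{down}$, so $p_{right}-p_{left}$ has positive FPR- and TPR-components. Since the objective is affine, $f(p_{right})-f(p_{left})$ is a positive multiple of $\big(\text{slope of the }ROC_{up}\text{-secant }p_{left}p_{right}\big)-\big(\text{slope of the }FairROC_{up}\text{-chord }\widetilde{\mathcal{Q}}_{i-1}^{up}\widetilde{\mathcal{Q}}_{i+1}^{up}\big)$. I would use Assumption~\ref{assumption2} (first item) to locate $\mathcal{Q}_i^{up}$ and thereby fix this sign: when $\mathrm{FPR}(\mathcal{Q}_i^{up})\ge\mathrm{FPR}(\mathcal{Q}_i^{down})$ the $ROC_{up}$-secant is the steep side and $p_{right}$ wins, and symmetrically otherwise. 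Equivalently — and this is the cleanest way to present it — the optimal move transports $\mathcal{Q}_i^{up}$ along $ROC_{up}$ to the first point of $\mathfrak{B}_i$ it meets, preserving as much of the original curve as possible; that point is precisely $\arg\min_{p\in\{p_{left},p_{right}\}}|\mathrm{FPR}(\mathcal{Q}_i^{up})-\mathrm{FPR}(p)|$, which is \ouralgo's choice.

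\textbf{Main obstacle.} The geometric bookkeeping in Part 1 is the crux: proving $ROC_{up}$ must enter and leave $\mathfrak{C}_i$ on opposite sides and that $U_i$ is strictly above it needs a careful joint use of ``$ROC_{up}$ above $ROC_{down}$'', Assumption~\ref{assumption2}, and monotonicity of both curves, and one must verify that Assumption~\ref{assumption2} excludes the degenerate configuration in which $ROC_{up}$ pierces only the left edges of $\mathfrak{B}_i$ (there $U_i$ could be feasible and \textsc{UpShift} would beat \textsc{CutShift}). The elimination of $L_i$ and the sign determination in Part 2 are likewise not purely local at $i$: they genuinely rely on the monotonicity coupling of $FairROC_{up}$ with the neighbouring norm sets, which is exactly what the spacing part of Assumption~\ref{assumption2} provides.
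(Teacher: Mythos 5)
Your proposal is correct in spirit but follows a genuinely different route from the paper. The paper's own argument is a proof by contradiction driven entirely by area containment in figures: it supposes some other boundary point $C$ is optimal, decomposes the quadrilateral of lost area as $Area(\square \mathcal{Q}^{up}_{i-1} C \mathcal{Q}^{up}_{i+1} \mathcal{Q}^{up}_{i}) = Area(\square \mathcal{Q}^{up}_{i} A \mathcal{Q}^{up}_{i+1}) + Area(\square \mathcal{Q}^{up}_{i-1} C \mathcal{Q}^{up}_{i+1} A)$ with $A$ the CutShift point, and concludes from nonnegativity of the second term; it never writes the objective as a function of the transported point. You instead linearize: with the neighbours fixed, the trapezoid/shoelace expansion makes the area under $FairROC_{up}$ affine in $(\mathrm{FPR},\mathrm{TPR})$ of $\widetilde{\mathcal{Q}}_i^{up}$ with signs pinned by monotonicity, so the optimum sits at a breakpoint of the feasible arc of $\mathfrak{B}_i\cap\mathrm{hypo}(ROC_{up})$ and the comparison of $p_{left}$ versus $p_{right}$ reduces to a slope test. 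This buys precision the paper lacks (the paper's figures silently assume $ROC_{up}$ enters on a left edge and exits on a right edge), and the obstacle you flag is real: in the configuration where $ROC_{up}$ crosses only the left edges of $\mathfrak{B}_i$, the vertex $U_i$ is feasible and the theorem's blanket prescription to CutShift is not obviously optimal --- the paper's proof does not address this case either, so this is a limitation of the theorem as stated rather than a defect of your argument alone. The one step you should not leave as an ``equivalently'' is the final identification of the affine-objective maximizer with $\arg\min_{p}|FPR(\mathcal{Q}_i^{up})-FPR(p)|$: the sign of $f(p_{right})-f(p_{left})$ must actually be derived from the spacing condition in Assumption~4.2, since ``preserve as much of the original curve as possible'' is an intuition, not the quantity being maximized.
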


\begin{theorem}(UpShift)
    If index $i$ is not a Boundary cut point and if $Area(\square \mathcal{Q}_{i+1}\mathcal{Q}_{i}\mathcal{Q}_{i-1}{L}_i \ge Area(\square \mathcal{Q}_{i+1}\mathcal{Q}_{i}\mathcal{Q}_{i-1}{U}_i$), then UpShift operation must be performed. The resulting point ($U_i$) is the new fair point $\mathcal{\widetilde{Q}}_i^{up}$. Otherwise, the LeftShift operation must be performed. The resulting point ($L_i$) is the new fair point $\mathcal{\widetilde{Q}}_i^{up}$.
\end{theorem}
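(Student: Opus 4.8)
The plan is to turn the choice of the new fair point $\widetilde{\mathcal{Q}}_i^{up}$ into a one–variable geometric minimisation over the rhombus $\mathfrak{B}_i$ and to show that its optimum is attained at a vertex. First I would pin down the cost of a shift. In the piece‑wise–linear regime (Theorem~4.4) $ROC_{up}$ coincides with its PLA, and since we work with the concave majorant of $ROC_{up}$ (the ROC‑space of $s$ is the convex hull of the query points together with $(0,0),(1,1)$), the breakpoint $\mathcal{Q}_i^{up}$ lies on or above the chord $\mathcal{Q}_{i-1}^{up}\mathcal{Q}_{i+1}^{up}$. Replacing $\mathcal{Q}_i^{up}$ by a point $p$ below the polyline $\mathcal{Q}_{i-1}^{up}\mathcal{Q}_i^{up}\mathcal{Q}_{i+1}^{up}$ lowers the area under $ROC_{up}$ by exactly $\Lambda_i(p):=\mathrm{Area}(\square\,\mathcal{Q}_{i+1}^{up}\mathcal{Q}_i^{up}\mathcal{Q}_{i-1}^{up}\,p)$, the region trapped between the old polyline and the new one $\mathcal{Q}_{i-1}^{up}\,p\,\mathcal{Q}_{i+1}^{up}$. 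By the Norm Boundary theorem the optimal $\widetilde{\mathcal{Q}}_i^{up}$ lies on $\mathfrak{B}_i$, so the task is to minimise $\Lambda_i$ over $p\in\mathfrak{B}_i$. The hypothesis that $i$ is \emph{not} a Boundary Cut index does the structural work: $ROC_{up}$ misses $\mathfrak{B}_i$, and since a bounded rhombus cannot contain the whole curve from $(0,0)$ to $(1,1)$, $ROC_{up}$ misses the closed norm set $\mathfrak{C}_i$ entirely; as $\mathfrak{C}_i$ is connected and its centre $\mathcal{Q}_i^{down}$ lies strictly below $ROC_{up}$ — the two ROCs do not cross in the working region, and the case $\mathcal{Q}_i^{up}\in\mathrm{Hypograph}(ROC_{down})$ is handled by a separate branch of Algorithm~\ref{alg:fairroc} — the whole of $\mathfrak{C}_i$ (hence $\mathfrak{B}_i$) lies strictly below the polyline. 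Thus, for every $p\in\mathfrak{B}_i$ the quadrilateral above is a genuine simple polygon and $\Lambda_i(p)>0$.

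Next I would show the minimiser is a vertex of the rhombus. Splitting the quadrilateral along the diagonal $\mathcal{Q}_{i-1}^{up}\mathcal{Q}_{i+1}^{up}$ writes $\Lambda_i(p)=\mathrm{Area}(\triangle\,\mathcal{Q}_{i+1}^{up}\mathcal{Q}_i^{up}\mathcal{Q}_{i-1}^{up})+\tfrac12\,\|\mathcal{Q}_{i-1}^{up}\mathcal{Q}_{i+1}^{up}\|_2\cdot h(p)$, where $h(p)$ is the signed distance of $p$ to the chord line $\ell$ through $\mathcal{Q}_{i-1}^{up},\mathcal{Q}_{i+1}^{up}$, signed so that the second term agrees with the two pieces of the polygon (this identity holds for all $p$ below the polyline, whether $p$ is above or below $\ell$). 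The first summand is constant and $h$ is affine, so $\Lambda_i$ is an affine function of $(\mathrm{FPR}(p),\mathrm{TPR}(p))$ on $\mathfrak{C}_i$; since ROC chords have positive slope and $\mathcal{Q}_i^{up}$ lies above the chord, pushing $p$ to the right or downward pushes it further below $\ell$ and strictly increases $\Lambda_i$, i.e. $\Lambda_i$ is non‑decreasing in $\mathrm{FPR}(p)$ and non‑increasing in $\mathrm{TPR}(p)$. An affine function on $\mathfrak{B}_i$ is affine on each of the rhombus' four edges, so its minimum over $\mathfrak{B}_i$ is attained at one of the vertices $U_i,R_i,D_i,L_i$.

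Then I would eliminate $R_i$ and $D_i$ and read off the test. Writing $\mathcal{Q}_i^{down}=(\phi,\tau)$, the candidates are $U_i=(\phi,\tau+\varepsilon)$, $R_i=(\phi+\varepsilon,\tau)$, $D_i=(\phi,\tau-\varepsilon)$, $L_i=(\phi-\varepsilon,\tau)$. Monotonicity of $\Lambda_i$ gives $\Lambda_i(U_i)\le\Lambda_i(D_i)$ (equal FPR, larger TPR) and $\Lambda_i(U_i)\le\Lambda_i(R_i)$ (smaller FPR, larger TPR), so $R_i$ and $D_i$ are never optimal; only $U_i$ and $L_i$ are incomparable, since passing from $U_i$ to $L_i$ trades a smaller FPR for a smaller TPR. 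Hence the optimal point is $U_i$ exactly when $\Lambda_i(U_i)\le\Lambda_i(L_i)$, i.e. when $\mathrm{Area}(\square\,\mathcal{Q}_{i+1}\mathcal{Q}_i\mathcal{Q}_{i-1}L_i)\ge\mathrm{Area}(\square\,\mathcal{Q}_{i+1}\mathcal{Q}_i\mathcal{Q}_{i-1}U_i)$, and $L_i$ otherwise — precisely the rule triggering UpShift versus LeftShift. I would close with feasibility: at $\mathrm{FPR}=\phi$ we have $ROC_{up}(\phi)\ge ROC_{down}(\phi)=\tau$ and, since $ROC_{up}$ misses $\mathfrak{C}_i$, in fact $ROC_{up}(\phi)>\tau+\varepsilon$, so $U_i$ is strictly below $ROC_{up}$ and realizable in $\mathcal{S}|_s$; the analogous check for $L_i$, together with the FPR–ordering in Assumption~4.2 (which also places $\phi$ between $\mathrm{FPR}(\mathcal{Q}_{i-1}^{up})$ and $\mathrm{FPR}(\mathcal{Q}_{i+1}^{up})$, so that $\Lambda_i$ really is the local AUC loss), shows that inserting the chosen vertex keeps the new $ROC_{up}$ a monotone curve below the original. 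Combined with the per‑index decomposition of the AUC loss from the proof of Theorem~4.4, minimising $\Lambda_i$ at each non‑cut index yields the globally optimal fair ROC, so the UpShift/LeftShift choice is forced.

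The step I expect to be the main obstacle is the structural claim behind the first two paragraphs: showing that, for a non–Boundary‑Cut index, $\mathfrak{C}_i$ lies entirely below the polyline $\mathcal{Q}_{i-1}^{up}\mathcal{Q}_i^{up}\mathcal{Q}_{i+1}^{up}$, so that $\Lambda_i$ is a bona‑fide simple‑polygon area — affine and monotone in $(\mathrm{FPR}(p),\mathrm{TPR}(p))$ — rather than a function whose minimum could slip into the interior of an edge. This is exactly the place where the non‑Boundary‑Cut hypothesis, the exclusion of the $\mathrm{Hypograph}$ case, the non‑crossing of $ROC_{up}$ and $ROC_{down}$, and the concavity/piece‑wise linearity coming from the ROC‑space construction all have to be used together; the remaining feasibility and monotonicity checks for $U_i$ and $L_i$ lean similarly on Assumption~4.2.
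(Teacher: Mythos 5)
Your proposal is correct and lands on exactly the decision rule the paper uses: restrict to the norm boundary $\mathfrak{B}_i$, observe that only $U_i$ and $L_i$ can be optimal, and break the tie by comparing $\mathrm{Area}(\square\,\mathcal{Q}_{i+1}\mathcal{Q}_i\mathcal{Q}_{i-1}L_i)$ with $\mathrm{Area}(\square\,\mathcal{Q}_{i+1}\mathcal{Q}_i\mathcal{Q}_{i-1}U_i)$ (the paper evaluates these via Heron's formula, you via the shoelace/affine form --- an immaterial difference). Where you genuinely diverge is in how the candidate set is reduced to $\{U_i, L_i\}$: the paper asserts this ``by a similar argument as the previous proofs'' and points to figures, essentially repeating the picture-based contradiction used for the Norm Boundary and CutShift theorems, whereas you prove it by noting that the AUC loss $\Lambda_i(p)$ is an affine function of $p$ on the norm set (constant triangle plus a signed-distance term to the chord through $\mathcal{Q}_{i-1}^{up},\mathcal{Q}_{i+1}^{up}$), so its minimum over the rhombus boundary is attained at a vertex, and then eliminating $R_i$ and $D_i$ by the monotonicity of $\Lambda_i$ in $(\mathrm{FPR},\mathrm{TPR})$. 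This buys you something real: it rules out interior points of the rhombus edges and the vertices $R_i, D_i$ by calculation rather than by inspection of a figure, and it makes explicit where the non-Boundary-Cut hypothesis, the non-crossing of the two ROCs, and Assumption~4.2 are actually consumed (to guarantee $\mathfrak{C}_i$ sits below the local polyline so that $\Lambda_i$ is a bona fide simple-polygon area). The one caveat, which applies equally to the paper's own argument, is that both treatments take the loss at index $i$ to be the quadrilateral area with the neighbours $\mathcal{Q}_{i\pm 1}^{up}$ held at their original positions; the per-index decomposition of the global AUC loss that justifies optimising each index independently is assumed rather than derived in either version.
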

The proofs of all the above theorems are given in the appendix.
However, the following is brief sketch of the proof:\\
\underline{Step 1:} We prove that all optimally fair points $(\widetilde{\mathcal{Q}}_{i}^{up})_{i \in \{ 1 ,2 , \hdots , k\}}$ must lie on the Norm Boundaries of the corresponding $\mathcal{Q}_i^{down}$. (i.e. $(\mathfrak{B}_i)_{i \in\{ 1 ,2 , \hdots , k\} }$)\\
\underline{Step 2:} We then prove that if $\mathfrak{B}_i \cap ROC_{up} \neq \phi $, then the CutShift transportation is the optimal transportation.\\
\underline{Step 3:} We then prove that if $\mathfrak{B}_i \cap ROC_{up} = \phi $, then, based on the Cover and aforementioned area condition, the UpShift or the LeftShift transportation is the optimal transportation.

In the next section, we experimentally analyze \ouralgo. 
\section{Empirical Analysis}
\subsection{Experimental Setup}
\noindent \textbf{Datasets}: 
We train different classifiers on the widely-used ADULT~\cite{misc_adult_2} and COMPAS~\cite{angwin2022machine} benchmark datasets, selecting MALE and FEMALE as protected groups in ADULT, and BLACK and OTHERS in COMPAS. ROCs are generated, with additional experiments on datasets like CelebA in Appendix E and F.

\noindent \textbf{Classifiers}: We test \ouralgo\ on ROCs from the following classifiers:
\footnote{We choose these classifiers as per the availability of experiment hyper-parameters from other in-processing and post-processing benchmarks.}.
    C1: \emph{FNNC}(~\cite{padala2020fnnc}): This is a neural network-based classifier with a target parameter for fairness.
    C2: \emph{Logistic Regression} and 
    C3: \emph{Random Forest}
We used the code from the author's GitHub for C1 and sklearn implementations for C2 and C3.

\begin{figure*}
    \centering
    \hspace{-2cm}
    \begin{minipage}{0.3\linewidth}
         \centering
         \includegraphics[scale = 0.3]{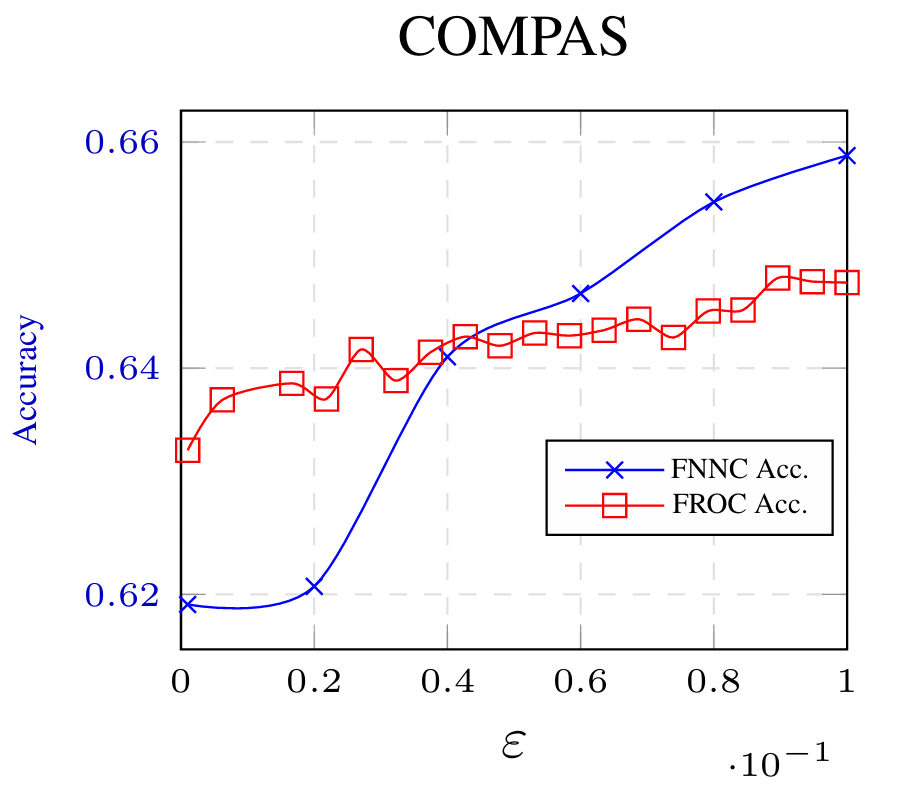}
         \caption{C1  vs. C1-\ouralgo}
        \label{fig:COMPAS_acc}
     \end{minipage}
     \begin{minipage}{0.3\linewidth}
         \centering
        \includegraphics[scale = 0.3]{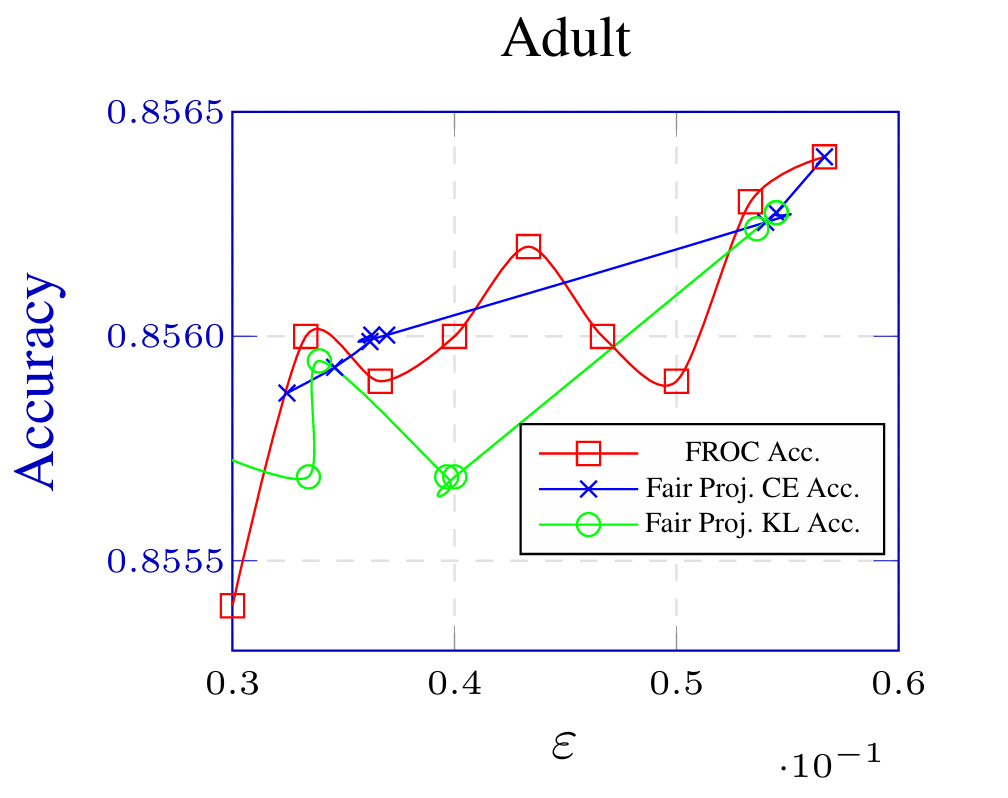}
    \caption{C3-Fair Fair vs. C3-\ouralgo}
    \label{fig:FairProj} 
     \end{minipage}
     \begin{minipage}{0.3\linewidth}
         \centering
         \includegraphics[scale = 0.3]{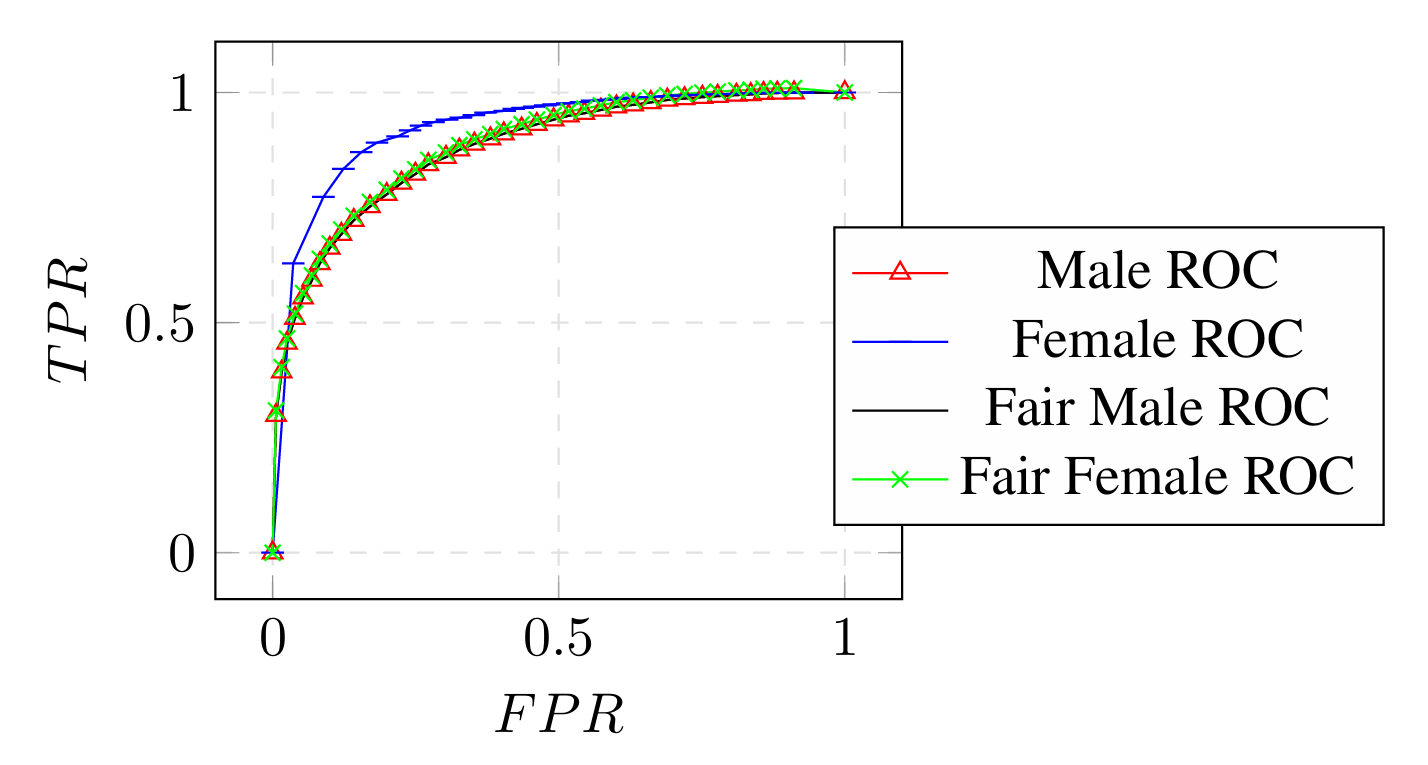}
   \caption{C2 Before and After \ouralgo}
    \label{fig:roc_froc}
     \end{minipage}
        \label{fig:three graphs exp}
\end{figure*}

\noindent \textbf{Post-Processing methods}: 
We compare \ouralgo\  against the following baselines: 
    B1: \emph{FairProjection-CE} and \emph{FairProjection-KL} ~\cite{alghamdi2022}: Transforms the score to achieve mean equalized odds fairness through information projection.

\subsection{Experiments} 

We train C1 on both datasets, C2 and C3 on the Adult dataset, and generate their ROCs for all the protected groups. FNNC, we train by ignoring its fairness components in the loss function and then generate ROC. We then invoke \ouralgo\ for different $\varepsilon$ values and check the best possible threshold for accuracy. We refer to the new classifier as C1-C3-\ouralgo.

\noindent \textbf{Baseline Post-Processing Method}: We evaluate \ouralgo, and the baselines B1 on ADULT dataset against the fairness metric \emph{mean equalized odds}(B2)~\cite{alghamdi2022}  in Figs.~\ref{fig:FairProj}.
For consistent comparison, we adopt the training parameters for base classifiers from ~\cite{alghamdi2022} and keep it identical across all experiments.

\subsection{Results}
We show the results on the COMPAS and Adult dataset (using FNNC and \ouralgo) here, along with a comparison with existing post-processing baselines.
The remaining experimental observations are detailed in the supplementary.
\textbf{Figure \ref{fig:roc_froc}} displays the ROC curves (Before and After \ouralgo) for both males and females, on the ADULT dataset for C2. The female ROC consistently occupies the higher position, indicating a positive bias for males. This establishes $ROC_{0}$ as our counterpart to $ROC_{down}$. Thus, we apply \ouralgo\ to the alternate curve, $ROC_{1}$, showcased in the figure. Before \ouralgo, the maximum difference between Male ROC and Female ROC is $0.08$. However, after post-processing with \ouralgo, the loss in accuracy is $<0.1\%$ for $\varepsilon=0.05$. 
In general, across all experiments (more experiments in Appendix), we observe a 7-8\% improvement in fairness, \ouralgo incurs at most a 2\% drop in accuracy. As seen in \textbf{Figure \ref{fig:COMPAS_acc}} and \textbf{Figure \ref{fig:FairProj}} for smaller values of $\varepsilon$, we also observe the performance may beat FNNC and the post-processing methods.
We assign it to the fact that FNNC (and the other methods) may overachieve the target fairness for smaller values of $\varepsilon$ (Evident from Table 2~\cite{padala2020fnnc}). \ouralgo\ drops AUC minimally to achieve target fairness.

\section{Conclusion}
In this work, we addressed the problem of practitioners aiming to achieve fair classification without retraining MLMs. Specifically, we provide a post-processing framework that takes a potentially unfair classification score function and returns a probabilistic fair classifier. 
The practitioner need not worry about fairness across different thresholds, so we proposed a new notion $\varepsilon_1$\ourdef\ (Definition \ref{def:eroc}), which ensures fairness for all thresholds. To achieve $\varepsilon_1$\ourdef, we proposed \ouralgo\ (Algorithm \ref{alg:fairroc}), which transports the ROC for each sensitive group within $\epsilon$ distance while minimizing the loss in AUC of the resultant ROC. 
We geometrically proved its optimality conditions (Theorem 4.2) and bounds under certain technical assumptions. We observed empirically that its performance might differ at most by 2\% compared to an in-processing technique while ensuring stronger fairness and avoiding retraining. We leave it for future work to explore the possibility of different distance metrics for fairness and optimizing for different performance measures.

\section*{Note}
The official code for this paper can be found in this \href{https://github.com/Avyukta-Manjunatha-Vummintala/FROC_code/tree/main}{link}.
\newpage
\section*{Appendix}

\appendix
\section{Notation Table}
\begin{table}[H]
  \centering
  \begin{tabular}{c p{0.35\textwidth}}
    \toprule
    \textbf{Notation} & \textbf{Description} \\
    \midrule
    $\varepsilon$ & Fairness measure of $\varepsilon_1$\ourdef\ and \ouralgo \\
    $ROC$ & Receiver Operator Characteristic (plot of FPR vs. TPR) \\
    $AUC$ & Area under ROC curve \\
    $s$ & Scoring function \\
    $k$ & Number of queries submitted to the scoring function \\
    $D$ & Dataset \\
    $x_i$ & Feature vector \\
    $\mathcal{X}$ & Sample space of feature vectors  \\
    $y_i$ & Binary label \\
    $a_i$ & Binary protected attribute \\
    $X$ & Random vector modeling feature vectors \\
    $Y$ & Random variable modeling labels \\
    $A$ & Random variable modeling protected attributes \\
    $\mathcal{S}$ & Space of scoring functions \\
    $\mathcal{S}|_s$ & Space of feasible scoring functions\\
    $G_s(t)$ & $\mathbb{P}(s(X) \ge t | Y = 1)$ \\
    $H_s(t)$ & $\mathbb{P}(s(X) \ge t | Y = 0)$ \\
    $G^a_s(t)$ & $\mathbb{P}(s(X) \ge t | Y = 1, A=a)$ \\
    $H^a_s(t)$ & $\mathbb{P}(s(X) \ge t | Y = 0, A=a)$ \\
    $\roc$& $\mathrm{ROC}_{H_s , G_s}$\\
    $\texttt{AUC}_s$ & AUC of $\roc$\\
    $\mathcal{Q}^a$ & Sequence of query point from Group $a$\\
    $\mathcal{Q}^a_i$ & Query point of Group $a$ at threshold $t_i$\\
    $\mathcal{L}_{LPA}$ & Loss due to Linear Piecewise Approximation\\
    $\mathfrak{C}_i$ & Norm Set of $i^{th}$ threshold\\
    $\mathfrak{B}_i$ & Norm Boundary of $i^{th}$ threshold\\
    \bottomrule
  \end{tabular}
  \caption{Mathematical Notations}
  \label{tab:notations}
\end{table}

\section{Relation to Equalized Odds}
Equalized Odds is defined in \cite{padala2020fnnc} and \cite{madras2018learning}, is the sum of the absolute differences of the $FNR$ and the $FPR$ of both the protected groups. Formally,
\[EO \triangleq |FPR_0 - FPR_1| + |FNR_0 - FNR_1|\]
However, this defintion is equivalent to $\varepsilon_1$\ourdef since $|FPR_0 - FPR_1| + |FNR_0 - FNR_1| = |FPR_0 - FPR_1| + |(1 - TPR_0) -(1 +  TPR_1)| =|FPR_0 - FPR_1| + |TPR_0 - TPR_1|$.

\section{Algorithm Description}

\subsection{\ouralgo}
\begin{definition}[Norm Boundary] \label{def:eNBoundary_appendix}
The set of all points within $\varepsilon$ distance ($\ell_1$ norm) from $\mathcal{Q}_i^{down}$ is known as the \emph{norm set} $\mathfrak{C}_i$. Formally, we have:
\[\mathfrak{C}_i \triangleq \{ x: x\in [0,1]^2 , ||x - \mathcal{Q}_i^{down}||_1 \le \varepsilon\}\]
The set of all points exactly $\varepsilon$ distance from $\mathcal{Q}_i^a$ is known as \emph{Norm Boundary} $\mathfrak{B}_i$. Formally, 
\[\mathfrak{B}_i \triangleq \{ x: x\in [0,1]^2 , ||x - \mathcal{Q}_i^{down}||_1 = \varepsilon\}\]
Additionally, we denote the vertices of the Norm Boundary Rhombus (starting from the top most point and moving clockwise) as $U_i$, $R_i$, $D_i$, and $L_i$.
\end{definition}

\begin{figure}[!ht]
    \centering
    \includegraphics[scale = 0.2]{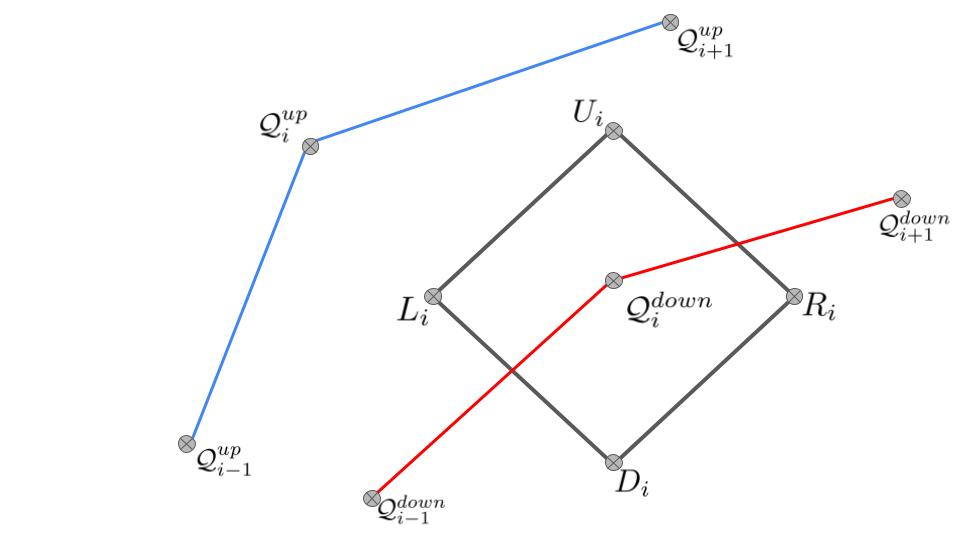}
    \caption{The area inside the rhombus is the Norm set $\mathfrak{C}_i$. The boundary (denoted by the thick bold border) is $\mathfrak{B}_i$. The topmost point of $\mathfrak{B}_i$ is denoted by $U_i$}
    \label{fig:NBoundary_appendix}
\end{figure}



We say that a $i \in [1 ,2 , \hdots, k]$ is a Boundary Cut point when $ROC_{up}$ intersects the Norm Boundary $\mathfrak{B}_i$. Formally,
\begin{definition}[Boundary Cut] \label{def:BCut_appendix}
 $i \in [1 ,2 , \hdots, k]$ is a \emph{Boundary Cut point} when $\mathfrak{B}_i \cap ROC_{up} \neq \phi$.
\end{definition}
This is illustrated in the \textbf{Figure 8}.
\begin{figure}[!ht]
    \centering
    \includegraphics[scale = 0.1]{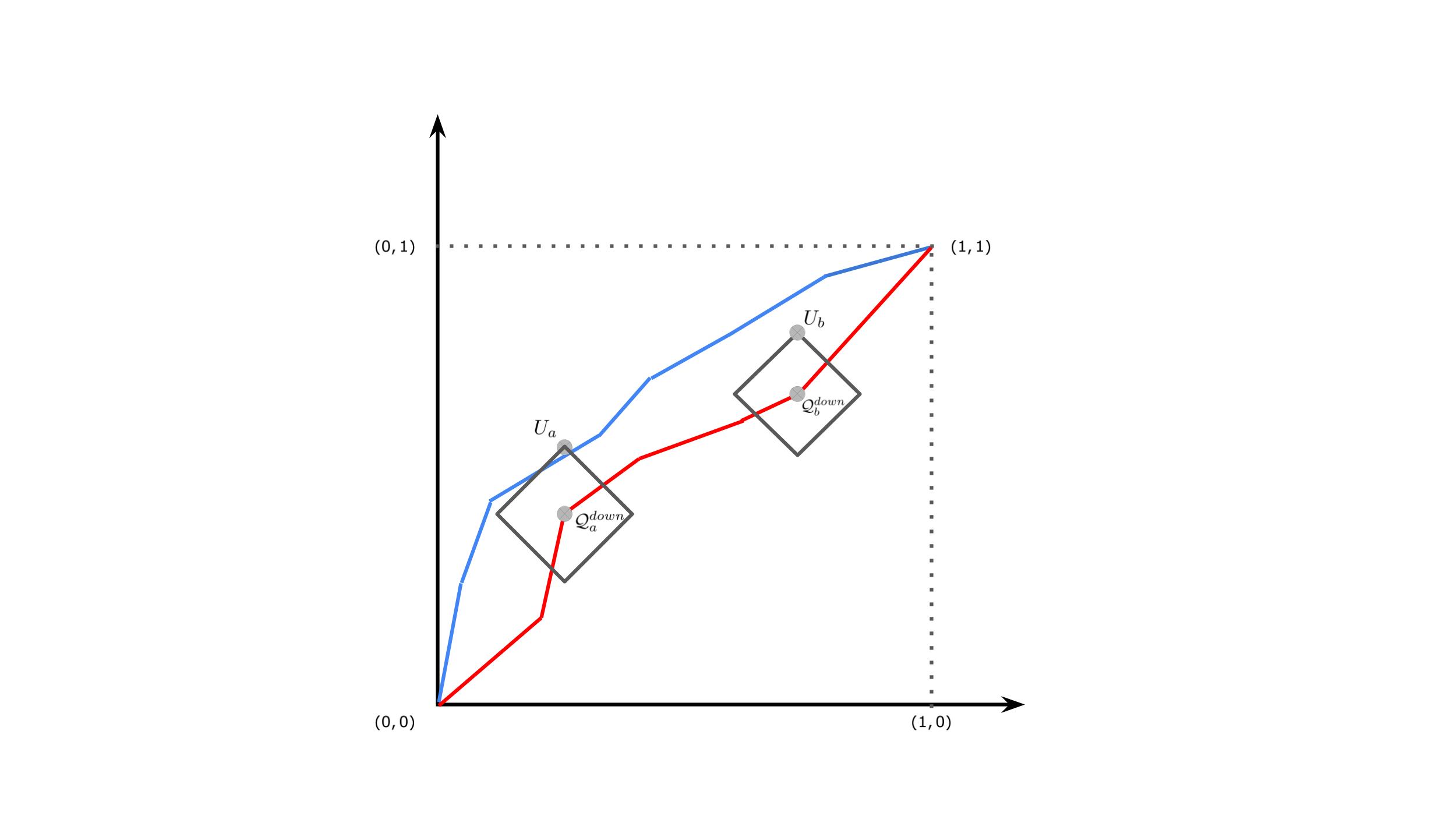}
    \caption{We have two points - $\mathcal{Q}_a^{down}$ and $\mathcal{Q}_b^{down}$ (in increasing order of FPR). We find that $\mathcal{Q}_a^{down}$ is a Boundary Cut point, whereas $\mathcal{Q}_b^{down}$ is not.}
    \label{fig:BoundCut_Appendix}
\end{figure}

We now define the three kinds of shifts that will be used in our Algorithm:
For a given $ i \in [1 , 2, \hdots ,k]$, Upshift is the transportation of $\mathcal{Q}_i^{up}$ to the point $U_i$. 
\begin{definition}[UpShift] \label{def:ush_appendix}
    For a given $ i \in [1 , 2, \hdots ,k]$, Upshift is the transportation of $\mathcal{Q}_i^{up}$ to the point $U_i$. Formally, \emph{UpShift} can be defined as the function that returns a fair threshold $\widetilde{\mathcal{Q}}_i^{up}$ (i.e. $U_i$) by taking the $\mathcal{Q}_i^{down}$ and $\varepsilon$ as the arguments.
\end{definition}
This is illustrated in the following \textbf{Figure \ref{fig:US}}.
\begin{figure}[!ht]
    \centering
    \includegraphics[scale = 0.35]{diagrams/UpShift_updated.png}
    \caption{UpShift}
    \label{fig:US}
\end{figure}

For a given $ i \in [1 , 2, \hdots ,k]$, Leftshift is the transportation of $\mathcal{Q}_i^{up}$ to the point $L_i$. Formally,
\begin{definition}[LeftShift] \label{def:lsh_appendix}
LeftShift is a function that returns a fair threshold $\widetilde{\mathcal{Q}}_i^{up}$ (i.e. $L_i$) by taking the $\mathcal{Q}_i^{down}$ and $\varepsilon$ as the arguments.
\end{definition}
This is illustrated in the following \textbf{Figure \ref{fig:LS}}.
\begin{figure}[!ht]
    \centering
    \includegraphics[scale = 0.35]{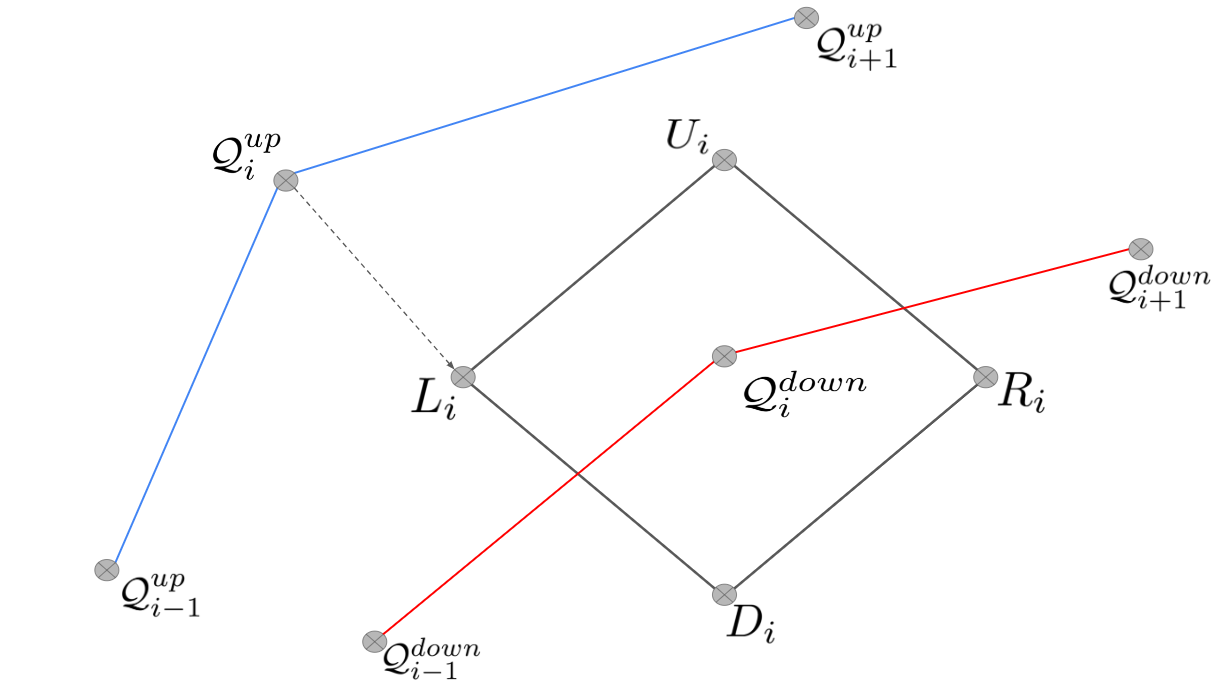}
    \caption{LeftShift}
    \label{fig:LS}
\end{figure}
\begin{definition}[CutShift] \label{def:c_appendixsh}
    For a given $i \in [1 , 2, \hdots ,k]$ (representing the index of the $ROC_{down}$), we run through all the points of the $ROC_{up}$ and return the set of all points that intersect the Norm Boundary $\mathfrak{B_i}$. Formally, we define \emph{Cutshift} as a function that takes $\mathcal{Q}_i^{down}$ and $\varepsilon$ as the arguments and returns $ROC_{up} \cap \mathfrak{B}_i$. The set $ROC_{up} \cap \mathfrak{B}_i$ can be represented as $\{p_{left} , p_{right}\}$ denoting the points at the intersection of $ROC_{up}$ at the \textbf{left-side} of the Norm Boundary and the \textbf{right-side} of the Norm Boundary respectively.
\end{definition}

This is illustrated in the following \textbf{Figure \ref{fig:CS}}.
\begin{figure}[!ht]
    \centering
    \includegraphics[scale = 0.35]{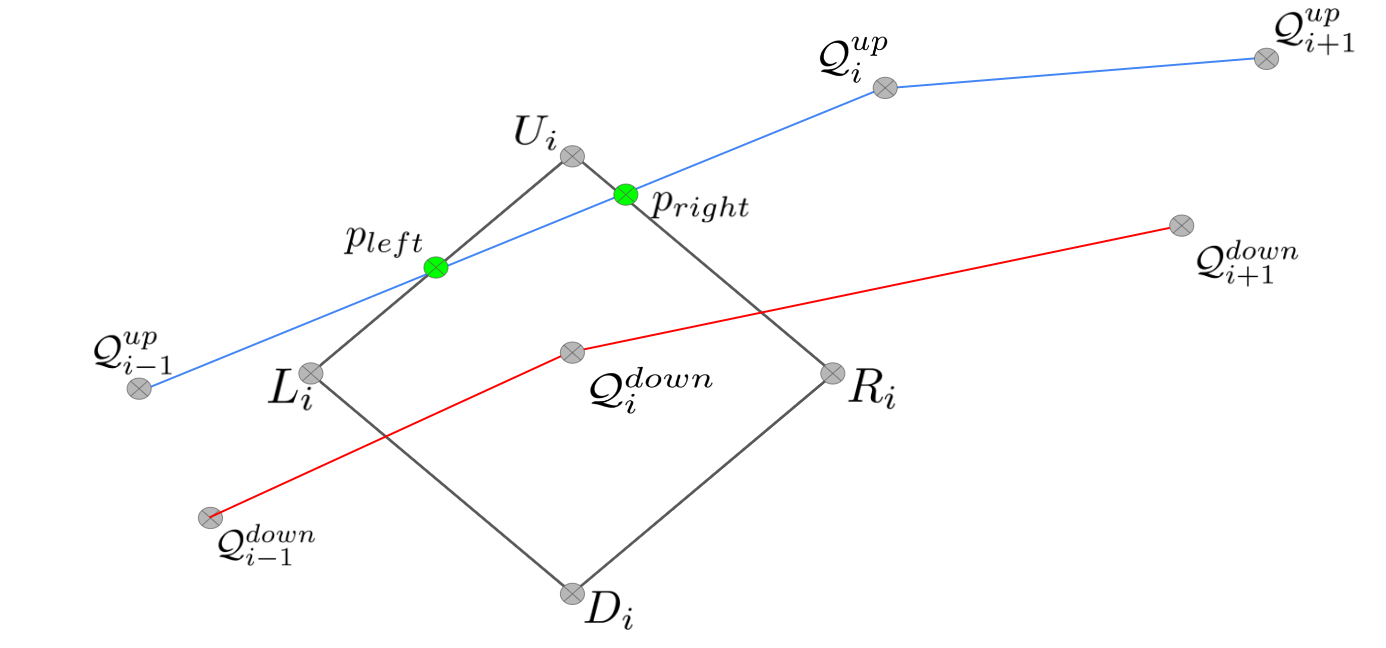}
    \caption{CutShift}
    \label{fig:CS}
\end{figure}

Note that the two intersection points - $p_{left}$ and $p_{right}$ will be to the right of $\mathcal{Q}_{i}^{up}$ when $FPR(\mathcal{Q}_{i}^{up}) \le FPR(\mathcal{Q}_{i}^{down})$. Note that it is also possible for $p_{left}$ to lie on the line segment $\overline{L_i D_i}$ instead of line segment $\overline{U_i L_i}$ when $\mathcal{Q}_i^{up}$ has sufficiently low TPR.

We elaborate on the above procedure to transport points from $ROC_{up}$ towards $ROC_{down}$ in the following subsection.

\subsection{Randomization to obtain new classifiers}
\begin{theorem}
    If $\mathcal{Q}_a,\mathcal{Q}_b,\mathcal{Q}_c$ are points in $\mathcal{S}|_s$ forming a convex hull $\Delta$ and $\mathcal{Q} \in \Delta$, then the classifier equivalent to $\mathcal{Q}$ can be obtained by following the below procedure. For each test data point $x$, use the following randomization scheme:
\begin{equation}
    Classifier_{\mathcal{Q}}(x)=
    \begin{cases}
        Classifier_{\mathcal{Q}_a}(x) & \text{w.p. } p_a \\
        Classifier_{\mathcal{Q}_b}(x) & \text{w.p. } p_b \\
        Classifier_{\mathcal{Q}_c}(x) & \text{w.p. } 1-p_a-p_b \\
    \end{cases}
\end{equation}
\end{theorem}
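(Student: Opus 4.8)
The plan is to let $(p_a,p_b,p_c)$ be the barycentric coordinates of $\mathcal{Q}$ with respect to the triangle $\Delta$, and then check by linearity of expectation that the resulting mixture classifier realises exactly the point $\mathcal{Q}$ in ROC space. Since $\mathcal{Q}$ lies in $\Delta = \mathrm{conv}(\mathcal{Q}_a,\mathcal{Q}_b,\mathcal{Q}_c)$, there exist $p_a,p_b,p_c \ge 0$ with $p_a+p_b+p_c = 1$ such that $\mathcal{Q} = p_a\mathcal{Q}_a + p_b\mathcal{Q}_b + p_c\mathcal{Q}_c$; writing $p_c = 1-p_a-p_b$ matches the statement. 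The point here is that because $\mathcal{Q}$ belongs to the \emph{convex} hull (not merely the affine hull), these coefficients are genuine probabilities, so the randomization scheme in the statement is well defined.

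Next I would introduce an auxiliary selector $Z \in \{a,b,c\}$ with $\mathbb{P}(Z=z) = p_z$, drawn independently of $(X,A,Y)$ and of any internal randomness used by the three base classifiers; the mixture $Classifier_{\mathcal{Q}}$ is then ``apply $Classifier_{Z}$.'' Conditioning on $Z$ and using independence,
\[
FPR(Classifier_{\mathcal{Q}}) = \mathbb{P}\bigl(Classifier_{\mathcal{Q}}(X)=1 \mid Y=0\bigr) = \sum_{z\in\{a,b,c\}} p_z\,\mathbb{P}\bigl(Classifier_{z}(X)=1 \mid Y=0\bigr) = \sum_{z} p_z\,FPR(\mathcal{Q}_z),
\]
and identically $TPR(Classifier_{\mathcal{Q}}) = \sum_{z} p_z\,TPR(\mathcal{Q}_z)$. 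Since each $\mathcal{Q}_z \in \mathcal{S}|_s$ is by definition the pair $\bigl(FPR(\mathcal{Q}_z),TPR(\mathcal{Q}_z)\bigr)$, the ROC point of $Classifier_{\mathcal{Q}}$ equals $\sum_z p_z\mathcal{Q}_z = \mathcal{Q}$. As a single-threshold prediction is fully characterised by its FPR and TPR, this is exactly what ``equivalent to $\mathcal{Q}$'' means, and it simultaneously exhibits $\mathcal{Q} \in \mathcal{S}|_s$.

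There is no real obstacle, only a few points that need to be stated carefully. First, the statement resamples $Z$ \emph{per data point}, whereas the computation above implicitly uses one global $Z$; both give the same population FPR and TPR, because $FPR$ and $TPR$ are themselves expectations over the population and the per-point coin is independent of $(X,A,Y)$, so the conditional-expectation identities are unchanged. Second, if $\mathcal{Q}_a,\mathcal{Q}_b,\mathcal{Q}_c$ are collinear or coincide, the barycentric coordinates are not unique — one simply fixes any valid choice and the argument is unaffected. Third, the base maps $Classifier_{\mathcal{Q}_z}$ may themselves be randomized (they live in $\mathcal{S}|_s$), but randomization composes, so their internal coins are absorbed into the conditional probabilities above with no change. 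Finally, I would remark that this is precisely the mechanism used in Section~3.3 for the triple $(0,0)$, $(1,1)$, $\mathcal{Q}_i^{up}$, so the theorem both justifies and generalises that construction, and gives the explicit weights $p_a,p_b,p_c$ required to instantiate \ouralgo's output classifier.
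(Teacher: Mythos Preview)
Your proposal is correct and follows the same idea as the paper: write $\mathcal{Q}$ as a convex combination of the three vertices and use linearity of expectation on FPR/TPR. The paper itself does not actually give a proof --- it only records the explicit Cramer's-rule formulas $p_a = \frac{c_1b_2 - c_2b_1}{a_1b_2 - a_2b_1}$, $p_b = \frac{c_1a_2 - c_2a_1}{a_1b_2 - a_2b_1}$ obtained by solving the $2\times 2$ linear system $p_a(\mathcal{Q}_a-\mathcal{Q}_c)+p_b(\mathcal{Q}_b-\mathcal{Q}_c)=\mathcal{Q}-\mathcal{Q}_c$ --- so your conditioning-on-$Z$ computation is in fact supplying the verification step the paper omits.
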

Here, we have, 
$p_a = \frac{c_1b_2 - c_2b_1}{a_1b_2 - a_2b_1}$, $p_b = \frac{c_1a_2 - c_2a_1}{a_1b_2 - a_2b_1}$ and 
\[a_1 = TPR(\mathcal{Q}_a) - TPR(\mathcal{Q}_c)~and~a_2 = FPR(\mathcal{Q}_a) - FPR(\mathcal{Q}_c)\] 
\[b_1 = TPR(\mathcal{Q}_b) - TPR(\mathcal{Q}_c)~and~b_2 = FPR(\mathcal{Q}_b) - FPR(\mathcal{Q}_c)\] 
\[c_1 = TPR(\mathcal{Q}) - TPR(\mathcal{Q}_c)~and~c_2 = FPR(\mathcal{Q}) - FPR(\mathcal{Q}_c)\]

\section{Theoretical Results}
\subsection{Piecewise Linear Approximation}

\begin{theorem} \label{th:pla_appendix}
Let $ROC_{\widehat{H_s^a}, \widehat{G_s^a}}$ be the \emph{PLA} of $ROC_{H_s^a,G_s^a}$ over the query set of $k$ equidistant thresholds, $\mathcal{T} = \{ t_i \mid t_i = i/k \ \forall i \in [k] \}$ then the corresponding $\mathcal{L}_{PLA}$ is bounded as:
\begin{equation*}
    \mathcal{L}_{PLA} \le \frac{1}{2} \frac{u_T u_F}{k^2} \times k  = \frac{1}{2} \frac{u_T u_F}{k}
\end{equation*}
\end{theorem}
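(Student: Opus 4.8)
The plan is to decompose the AUC gap created by the piece-wise linear approximation into $k$ independent ``per-segment'' contributions, one for each threshold subinterval, and to bound each contribution by the area of a small right triangle whose legs are controlled by Assumption~\ref{assumption1}. Concretely, set $t_0 = 0$ and $t_i = i/k$, so that $[0,1]$ is split into $k$ subintervals $[t_{i-1},t_i]$ of width $1/k$, and the PLA is exactly the polygonal chain through the points $\mathcal{Q}^a_i = (H_s^a(t_i), G_s^a(t_i))$ (including the endpoints $(1,1)$ at $t=0$ and $(0,0)$ at $t=1$). Since $H_s^a(\cdot)$ and $G_s^a(\cdot)$ are both monotone in $t$, the true ROC arc over $[t_{i-1},t_i]$ is a monotone curve in the FPR--TPR plane joining $\mathcal{Q}^a_{i-1}$ to $\mathcal{Q}^a_i$, and hence is contained in the axis-aligned rectangle $R_i$ whose opposite corners are these two points. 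The chord used by the PLA is a diagonal of $R_i$, which is also contained in $R_i$. Thus $\mathcal{L}_{PLA} = \sum_{i=1}^k \delta_i$, where $\delta_i$ is the difference between the area under the true arc and the area under the chord over the $x$-strip spanned by $R_i$.

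The key per-segment estimate is $|\delta_i| \le \tfrac12 \,\Delta F_i\, \Delta T_i$, where $\Delta F_i = |H_s^a(t_{i-1}) - H_s^a(t_i)|$ and $\Delta T_i = |G_s^a(t_{i-1}) - G_s^a(t_i)|$ are the width and height of $R_i$. I would prove this as a one-line sandwiching argument: a monotone curve joining the two corners of $R_i$, viewed as a function of the FPR coordinate, lies between the step function hugging the bottom edge and the one hugging the top edge, so its integral lies in an interval of length $\Delta F_i\,\Delta T_i$ centered at the trapezoid value $\tfrac12(y_{i-1}+y_i)\Delta F_i$, which is precisely the area under the chord; hence the discrepancy is at most $\tfrac12 \Delta F_i\,\Delta T_i$, the area of the triangle cut off by the diagonal (the shaded region of Figure~\ref{fig:losslpa}). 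Note this uses only monotonicity; concavity of the ROC — which is anyway ensured by the convex-hull step of Section~\ref{ssec:randomclass} — merely makes $\delta_i$ a genuine nonnegative loss that equals this triangle's area.

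It remains to bound the rectangle dimensions using Assumption~\ref{assumption1}. By the fundamental theorem of calculus, $\Delta T_i = \bigl|\int_{t_{i-1}}^{t_i} \tfrac{d\,TPR}{dt}\,dt\bigr| \le u_T\,(t_i - t_{i-1}) = u_T/k$, and likewise $\Delta F_i \le u_F/k$. Substituting, $|\delta_i| \le \tfrac12 \cdot \tfrac{u_F}{k}\cdot\tfrac{u_T}{k} = \tfrac{u_T u_F}{2k^2}$, and summing over the $k$ subintervals yields $\mathcal{L}_{PLA} \le \tfrac12 \tfrac{u_T u_F}{k^2}\times k = \tfrac12 \tfrac{u_T u_F}{k}$, as claimed.

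The main obstacle — really the only nontrivial step — is making the per-segment bound $|\delta_i|\le\tfrac12\Delta F_i\Delta T_i$ rigorous rather than merely ``read off the picture''; I would isolate it as a short lemma about monotone curves inscribed in a rectangle and prove it by the step-function sandwich above (being a little careful that the ROC arc is a bona fide function of the FPR coordinate, which follows from strict-enough monotonicity, or otherwise argue directly with the parametrization in $t$). The remaining bookkeeping — that the $k$ unit-$1/k$ subintervals, including the two incident to $(1,1)$ and $(0,0)$, exactly tile $[0,1]$, and that $\mathcal{L}_{PLA}$ additively splits across them — is routine, and the curve-intersection subtleties addressed by Assumption~\ref{assumption2} are irrelevant here since this argument concerns a single group's ROC in isolation.
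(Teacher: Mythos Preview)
Your proposal is correct and follows essentially the same approach as the paper: decompose $\mathcal{L}_{PLA}$ into $k$ per-segment contributions, bound each by the area $\tfrac12\Delta F_i\,\Delta T_i$ of the triangle cut off by the chord inside the bounding rectangle, then use Assumption~\ref{assumption1} to get $\Delta F_i\le u_F/k$, $\Delta T_i\le u_T/k$ and sum. The only difference is presentational: the paper argues the per-segment triangle bound pictorially (``the area cannot go beyond the dotted line because ROCs are one-to-one and monotonically increasing''), whereas you supply the step-function sandwich that makes this rigorous.
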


\begin{proof}

In \textbf{Figure \ref{fig:PLA_p1}}, shaded area is the approximation loss $\mathcal{L}_{PLA}$. 
\begin{figure}[!ht]
    \centering
    \includegraphics[scale = 0.06]{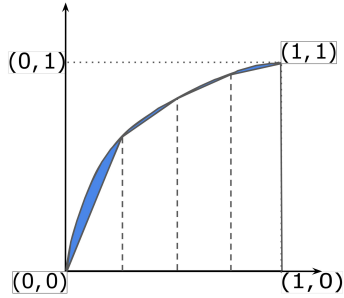}
    \caption{$\mathcal{L}_{PLA}$}
    \label{fig:PLA_p1}
\end{figure}
Let us consider the situation where $ROC_{H_s^a, G_s^a}$ \emph{maximally} deviates from its PLA $ROC_{\widehat{H_s^a}, \widehat{G_s^a}}$. To find an upper bound to this area, we must stretch it till the dotted line 

\begin{figure}[!ht]
    \centering
    \includegraphics[scale = 0.3]{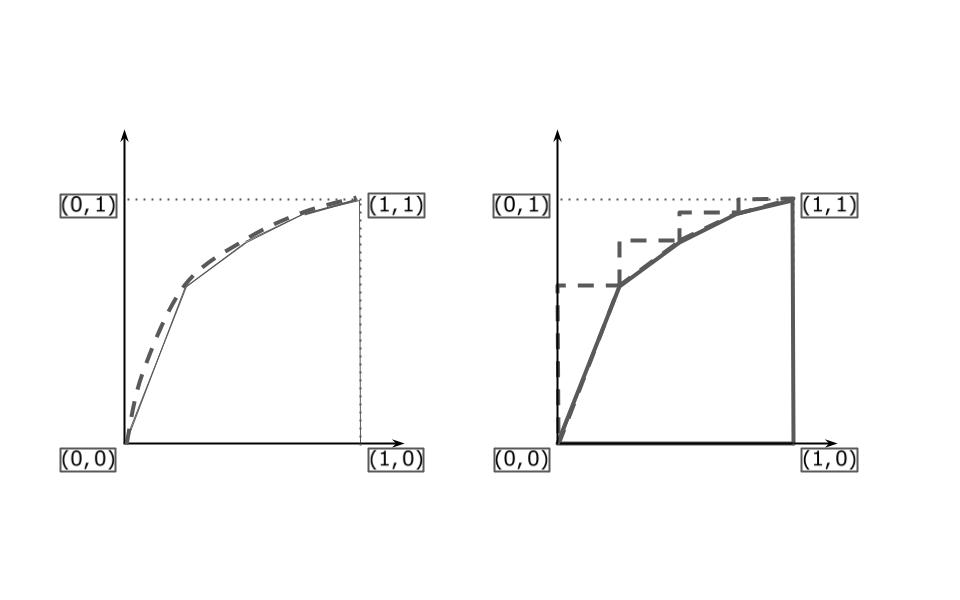}
    \caption{Maximally streching the ROC (Dotted line)}
    \label{fig:pla_proof_appendix}
\end{figure}

The area cannot go beyond the dotted line (\textbf{Figure \ref{fig:PLA2_p2}})because ROCs are one-to-one and monotonically increasing functions.
\begin{figure}[!ht]
    \centering
    \includegraphics[scale = 0.25]{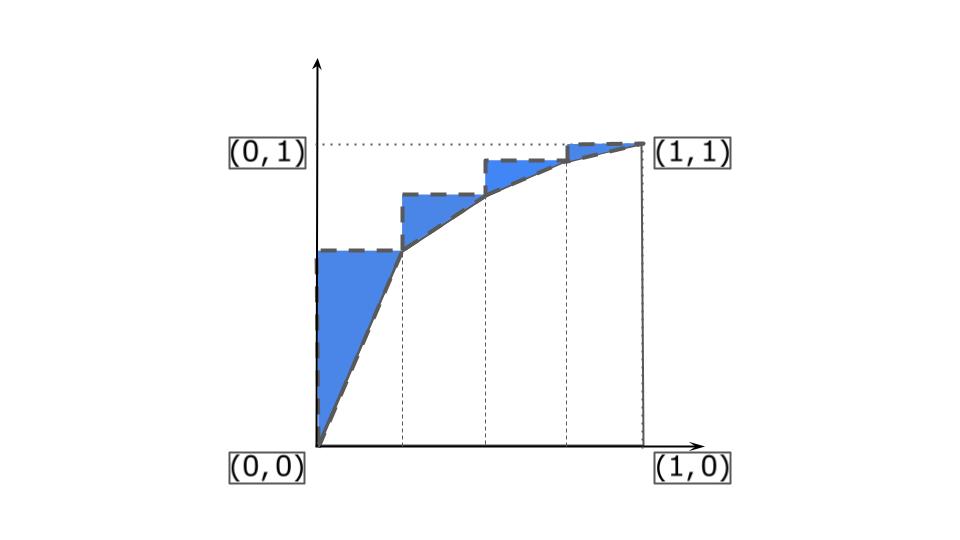}
    \caption{The area shaded by the darker shade of blue is the maximum possible loss of AUC due to Linear Interpolation.}
    \label{fig:PLA2_p2}
\end{figure}
So, our goal now, is to bound the sum of areas of the blue shaded triangles. We have the base of each triangle to be $\frac{1}{k}\times u_F$ (since $k$ thresholds and maximum slope of $FPR$ with respect to the thresholds is $u_F$). We have the maximum possible height of each triangle $\frac{1}{k}\times u_T$ (since $k$ thresholds and maximum slope of $TPR$ with respect to the thresholds is $u_T$). This makes the maximum possible area of each triangle $\frac{u_T u_F}{2k^2}$.
So, for an interval between thresholds $t_i, t_{i+1}$, the loss incurred is $\leq \frac{1}{2} \frac{u_T u_F}{k^2}$. To extend this for the entire ROC over $k$ intervals, we have:
\begin{equation*}
    \mathcal{L}_{PLA} \leq \frac{1}{2} \frac{u_Tu_F}{k^2} \times k  = \frac{1}{2} \frac{u_Tu_F}{k}    
\end{equation*}
\end{proof}

Therefore, we can infer:
\[ \lim_{k\to\infty} \mathcal{L}_{LPA} =\lim_{k\to\infty} \frac{1}{2} \frac{u_Tu_F}{k} =0 \]
\subsection{Boundary Optimality}
\subsubsection{All optimal points lie on the Norm boundary}



\begin{theorem}(Norm Boundary)
    If $(\widetilde{\mathcal{Q}}_i^{up})_{i\in \{1 ,2 , \hdots , k\}}$ is the set of optimal fair (points that maximize the AUC and also satisfy the $\varepsilon$ fairness criteria) thresholds must necessarily be a subset of $(\mathfrak{B}_i)_{i\in \{1 ,2 , \hdots , k\}}$. 
\end{theorem}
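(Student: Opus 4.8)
The plan is to establish the claim one transported vertex at a time, by a local improvement argument. On the query grid a transported pair of curves satisfies $\varepsilon_1$\ourdef\ exactly when $\widetilde{\mathcal{Q}}_i^{up}\in\mathfrak{C}_i$ for every $i$, and $\mathfrak{C}_i=\mathfrak{B}_i\cup\mathrm{int}(\mathfrak{C}_i)$; so it is enough to rule out an optimal fair tuple having some vertex in $\mathrm{int}(\mathfrak{C}_i)$. I would assume this happens for some $i$ and derive a contradiction by perturbing that single vertex so as to strictly increase the area under $FairROC_{up}$ while leaving feasibility and the other $k-1$ membership constraints intact.

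The objective is the AUC of $FairROC_{up}$, the polyline through $(0,0),\widetilde{\mathcal{Q}}_1^{up},\dots,\widetilde{\mathcal{Q}}_k^{up},(1,1)$. Freezing all vertices but $\widetilde{\mathcal{Q}}_i^{up}=(f,\tau)$, only two trapezoids depend on it, with total area
\[
A_i(f,\tau)=\tfrac12\bigl(\tau_{i-1}+\tau\bigr)\bigl(f-f_{i-1}\bigr)+\tfrac12\bigl(\tau+\tau_{i+1}\bigr)\bigl(f_{i+1}-f\bigr),
\]
where $(f_{i\pm1},\tau_{i\pm1})$ are the coordinates of $\widetilde{\mathcal{Q}}_{i\pm1}^{up}$. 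Hence $\partial A_i/\partial\tau=\tfrac12(f_{i+1}-f_{i-1})$ and $\partial A_i/\partial f=\tfrac12(\tau_{i-1}-\tau_{i+1})$; since the optimal fair curve is a genuine ROC it is FPR- and TPR-monotone, so $f_{i-1}\le f_{i+1}$ and $\tau_{i-1}\le\tau_{i+1}$, and $A_i$ strictly increases in $\tau$ (away from degenerate zero-width intervals) and weakly decreases in $f$ --- the ``north-west'' direction is an ascent direction. If the offending vertex is in $\mathrm{int}(\mathfrak{C}_i)$ \emph{and} lies strictly below $ROC_{up}$, then $\widetilde{\mathcal{Q}}_i^{up}+(0,\delta)$ for small $\delta>0$ is still in $\mathrm{int}(\mathfrak{C}_i)$ and still in $\mathcal{S}|_s$, so it is fair and feasible, and $A_i$ has strictly increased --- contradiction.

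It remains to handle $\widetilde{\mathcal{Q}}_i^{up}\in ROC_{up}\cap\mathrm{int}(\mathfrak{C}_i)$, i.e.\ $ROC_{up}$ cutting through the open rhombus, which makes $i$ a Boundary Cut index. I would use that $\mathcal{S}|_s$ is the hypograph of a concave, piecewise-linear frontier (the convex hull of the queried points of $ROC_{up}$ with $(0,0)$ and $(1,1)$, under the piecewise-linearity hypothesis on $s$), so that $F_i:=\mathfrak{C}_i\cap\mathcal{S}|_s$ is a convex polygon and the affine $A_i$ is maximized over $F_i$ at one of its vertices. Each vertex of $F_i$ is a vertex of the rhombus (on $\mathfrak{B}_i$), a crossing $ROC_{up}\cap\mathfrak{B}_i$ (on $\mathfrak{B}_i$; at most two by Assumption~\ref{assumption2}, i.e.\ $p_{left},p_{right}$), or a breakpoint $\mathcal{Q}_j^{up}$ of $ROC_{up}$ inside $\mathfrak{C}_i$; in the last case $\mathcal{Q}_j^{up}$ is $\varepsilon$-close to $\mathcal{Q}_i^{down}$, and Assumption~\ref{assumption2} forces $j=i$, i.e.\ $\mathcal{Q}_i^{up}\in\mathfrak{C}_i$ --- the degenerate ``already fair'' case the statement excludes. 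Thus the optimum of $A_i$ over $F_i$ lies on $\mathfrak{B}_i$; combining the cases, $(\widetilde{\mathcal{Q}}_i^{up})_i\subseteq(\mathfrak{B}_i)_i$.

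The step I expect to be the main obstacle is this Boundary-Cut case: keeping the feasibility region $\mathcal{S}|_s$ and the fairness rhombus $\mathfrak{C}_i$ consistently coordinated, ruling out (or labelling as the excluded trivial case) an optimum lying on $ROC_{up}$ strictly inside $\mathfrak{C}_i$, and using Assumption~\ref{assumption2} to show that no ``foreign'' breakpoint $\mathcal{Q}_j^{up}$ with $j\neq i$ can fall inside $\mathfrak{C}_i$. A secondary but essential ingredient is FPR-monotonicity of the transported vertices, which underlies $\partial A_i/\partial\tau>0$; this is exactly where Assumption~\ref{assumption2} enters, and its failure is what voids the optimality guarantee, as the paper notes after stating that assumption.
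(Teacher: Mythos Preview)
Your argument follows the same local-improvement contradiction the paper uses, but you carry it out analytically where the paper stays pictorial. The paper's proof simply exhibits a specific boundary point---the CutShift point $A\in ROC_{up}\cap\mathfrak{B}_i$---and compares the quadrilateral $\square\,\mathcal{Q}_{i-1}^{up}C\,\mathcal{Q}_{i+1}^{up}\mathcal{Q}_i^{up}$ to the smaller region $\square\,\mathcal{Q}_i^{up}A\,\mathcal{Q}_{i+1}^{up}$ via an additive area identity; it leans on the figures and does not separately treat the case where $C$ already sits on $ROC_{up}$ (so the naive ``move up'' leaves $\mathcal{S}|_s$), nor the non-Boundary-Cut case where no CutShift point exists. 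Your trapezoid gradient $\nabla A_i=\tfrac12(\tau_{i-1}-\tau_{i+1},\,f_{i+1}-f_{i-1})$ makes the improving direction explicit, and your affine-over-convex-polygon argument for $F_i=\mathfrak{C}_i\cap\mathcal{S}|_s$ is the clean way to close the on-$ROC_{up}$ edge case that the paper glosses over. What the paper's version buys is brevity: one explicit witness point and one area inequality, with the geometry doing the bookkeeping.

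One step in your Boundary-Cut analysis deserves tightening: the claim that Assumption~\ref{assumption2} forces any breakpoint $\mathcal{Q}_j^{up}\in\mathrm{int}(\mathfrak{C}_i)$ to have $j=i$ does not follow directly from the stated FPR sandwiching (nothing there rules out $\mathcal{Q}_{i\pm1}^{up}$ landing inside a rhombus of radius $\varepsilon$ when the grid is fine relative to $\varepsilon$). You do not actually need $j=i$, though: since $A_i$ is affine and the two $ROC_{up}$ segments meeting at $\mathcal{Q}_j^{up}$ lie in $F_i$, sliding from $\mathcal{Q}_j^{up}$ along $ROC_{up}$ toward $\mathfrak{B}_i$ cannot strictly decrease $A_i$ in both directions, so some point of $ROC_{up}\cap\mathfrak{B}_i$ attains the same value---which already places an optimizer on $\mathfrak{B}_i$. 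With that patch your argument is complete and somewhat more careful than the paper's.
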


\begin{proof}
(Proof by Contradiction) 
Let us assume that some point $C$ in the interior of the Norm Set is the optimal fair (point that leads to ROC with maximum possible AUC while satisfying $\varepsilon_1$\ourdef) point. 
As we can see in \textbf{Figure \ref{fig:n1}}, we have transported $\mathcal{Q}_{i}^{up}$ to $C$ in the interior of the Norm set. The shaded area denotes the AUC loss due to this transformation. However, as seen in the next figure \textbf{Figure \ref{fig:n2}}, the AUC loss can be decreased by choosing a point (we choose the CutShift point) on the Norm boundary. Thus, we can always decrease AUC loss by choosing a point on the Norm Boundary. Formally, if point $C$ was the optimal fair point, then the AUC loss with respect to that point is $Area(\square \mathcal{Q}^{up}_{i-1} C \mathcal{Q}^{up}_{i+1} \mathcal{Q}^{up}_{i})$. 
\\
However, if $A$ is the optimal fair point (Fig \ref{fig:n2}), then the AUC loss with respect to that point is $Area(\square \mathcal{Q}^{up}_{i} A \mathcal{Q}^{up}_{i+1})$. However, we notice that:$Area(\square \mathcal{Q}^{up}_{i-1} C \mathcal{Q}^{up}_{i+1} \mathcal{Q}^{up}_{i}) = Area(\square \mathcal{Q}^{up}_{i} A \mathcal{Q}^{up}_{i+1}) + Area(\square \mathcal{Q}^{up}_{i-1} C \mathcal{Q}^{up}_{i+1} A)$. Since $Area(\square \mathcal{Q}^{up}_{i-1} C \mathcal{Q}^{up}_{i+1} A) \ge 0$, we have:
    \[Area(\square \mathcal{Q}^{up}_{i-1} C \mathcal{Q}^{up}_{i+1} \mathcal{Q}^{up}_{i}) \ge Area(\square \mathcal{Q}^{up}_{i} A \mathcal{Q}^{up}_{i+1}) \]
This is a contradiction to the assumption that $C$ is the optimal fair point. Therefore, $C$ is not an optimal fair point.
\end{proof}

\begin{figure}[!ht]
    \centering
    \includegraphics[scale = 0.23]{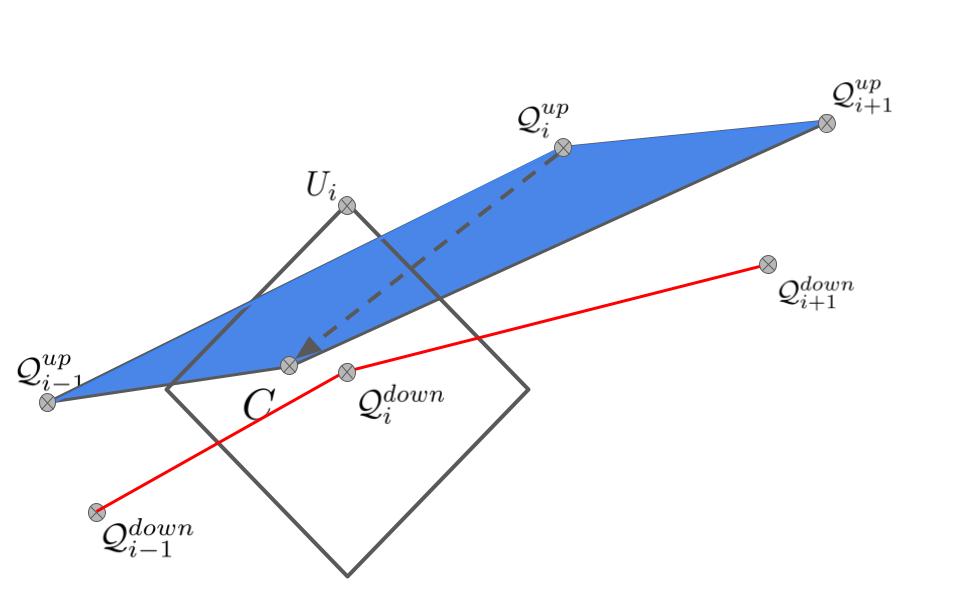}
    \caption{The blue colored region indicates the AUC loss.}
    \label{fig:n1}
\end{figure}
\begin{figure}[!ht]
    \centering
    \includegraphics[scale = 0.23]{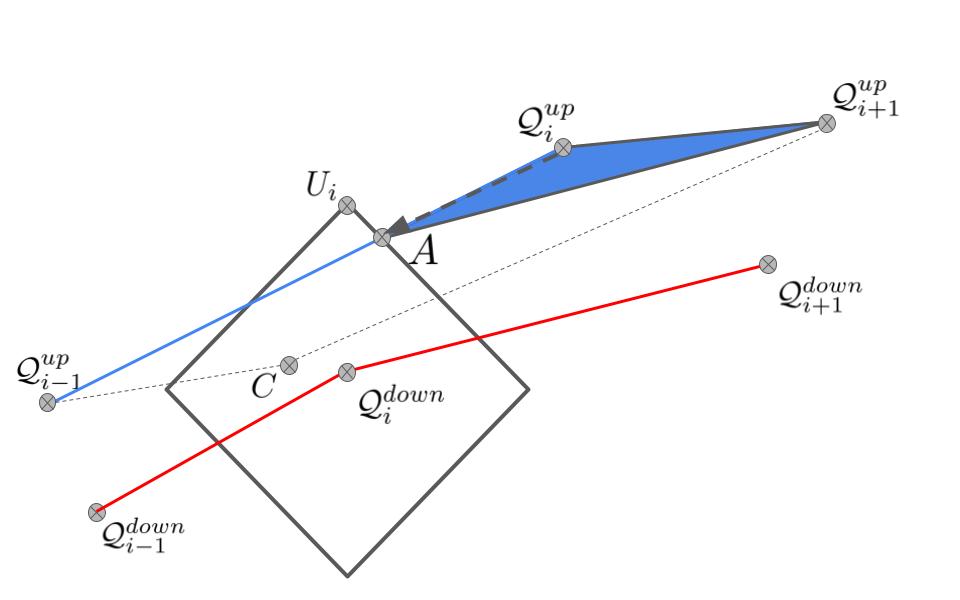}
    \caption{The dark blue colored region indicates the new AUC loss. The light blue region indicates the previous AUC loss.}
    \label{fig:n2}
\end{figure}

\subsection{CutShift Optimality}


\begin{theorem}
If $i$ is a Boundary cut point, then the CutShift operation must be performed. Of the 2 points ($p_{left}$ and $p_{right}$) returned by the Cutshift operation, the point that is closer to $\mathcal{Q}_{i}^{up}$ must be chosen i.e.
    
    $\mathcal{\widetilde{Q}}_i^{up} = argmin_{p \in \{ p_{left} , p_{right}\}} |FPR(\mathcal{Q}_i^{up}) - FPR(p)|$
\end{theorem}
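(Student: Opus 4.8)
The plan is to sharpen the Norm Boundary theorem one notch further. Since $i$ is a Boundary cut point, $ROC_{up}$ meets the rhombus boundary $\mathfrak{B}_i$, and by the second clause of Assumption~\ref{assumption2} it meets it in at most the two points $p_{left},p_{right}$, which cut $\mathfrak{B}_i$ into two arcs. First I would pin down the feasible set of transported points for index $i$: by the Norm Boundary theorem the optimal $\widetilde{\mathcal{Q}}_i^{up}$ must lie on $\mathfrak{B}_i$, and by the randomization construction of Section~\ref{ssec:ouralgo} (a point of $\mathcal{S}|_s$ can never sit strictly above $ROC_{up}$, only in its hypograph), $\widetilde{\mathcal{Q}}_i^{up}$ is confined to the sub-arc of $\mathfrak{B}_i$ lying weakly below $ROC_{up}$, and the endpoints of that arc are exactly $p_{left}$ and $p_{right}$.

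The first step is then to reduce the candidate set to $\{p_{left},p_{right}\}$. Reusing the decomposition from the Norm Boundary proof, the AUC loss charged to index $i$ for a candidate $\widetilde{\mathcal{Q}}_i^{up}$ equals $\mathrm{Area}(\square\,\mathcal{Q}_{i-1}^{up}\,\widetilde{\mathcal{Q}}_i^{up}\,\mathcal{Q}_{i+1}^{up}\,\mathcal{Q}_i^{up})$, the region swept between the old polyline $\mathcal{Q}_{i-1}^{up}\mathcal{Q}_i^{up}\mathcal{Q}_{i+1}^{up}$ and the new one through $\widetilde{\mathcal{Q}}_i^{up}$. Any interior point of the feasible arc lies strictly below $ROC_{up}$, whereas $p_{left},p_{right}\in ROC_{up}$; sliding $\widetilde{\mathcal{Q}}_i^{up}$ along the arc toward either endpoint strips off a non-negative sub-region, via the same additivity identity $\mathrm{Area}(\text{big})=\mathrm{Area}(\text{small})+\mathrm{Area}(\text{remainder})$ exploited there, so an optimal fair point must be $p_{left}$ or $p_{right}$.

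It remains to compare the two losses $\mathrm{Area}(\square\,\mathcal{Q}_{i-1}^{up}\,p_{left}\,\mathcal{Q}_{i+1}^{up}\,\mathcal{Q}_i^{up})$ and $\mathrm{Area}(\square\,\mathcal{Q}_{i-1}^{up}\,p_{right}\,\mathcal{Q}_{i+1}^{up}\,\mathcal{Q}_i^{up})$. The first clause of Assumption~\ref{assumption2}, $FPR(\mathcal{Q}_{i-1}^{down})\le FPR(\mathcal{Q}_i^{up})\le FPR(\mathcal{Q}_{i+1}^{down})$, keeps both replacement polylines FPR-monotone, so each loss is a genuine area between monotone curves; concavity and monotonicity of $ROC_{up}$ then let me show this area is increasing in the horizontal displacement $|FPR(\mathcal{Q}_i^{up})-FPR(\widetilde{\mathcal{Q}}_i^{up})|$, i.e. the detour through the farther endpoint encloses the detour through the nearer one, so the FPR-closer endpoint wins. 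Finally I would verify that the branch in Algorithm~\ref{alg:fairroc} (take $p_{right}$ when $FPR(\mathcal{Q}_i^{up})\ge FPR(\mathcal{Q}_i^{down})$, else $p_{left}$) is exactly this FPR-closer choice: when $\mathcal{Q}_i^{up}$ lies on the right half of the rhombus, $p_{right}$ sits on edge $\overline{U_iR_i}$ or $\overline{R_iD_i}$, hence on the same side of $FPR(\mathcal{Q}_i^{down})$ as $\mathcal{Q}_i^{up}$, which forces $|FPR(\mathcal{Q}_i^{up})-FPR(p_{right})|\le|FPR(\mathcal{Q}_i^{up})-FPR(p_{left})|$, and symmetrically on the left. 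Combining the two steps gives $\widetilde{\mathcal{Q}}_i^{up}=\argmin_{p\in\{p_{left},p_{right}\}}|FPR(\mathcal{Q}_i^{up})-FPR(p)|$, and global $\varepsilon_1$\ourdef{} feasibility follows from each vertex lying in its own norm set together with convexity of the $\ell_1$ norm between consecutive thresholds.

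The main obstacle is the monotonicity claim in the last step. Making it rigorous requires a short case analysis on where $p_{left}$ and $p_{right}$ land on $\mathfrak{B}_i$ — on either upper edge $\overline{U_iL_i}$, $\overline{U_iR_i}$, or (as noted after Definition~\ref{def:csh}) on the lower-left edge $\overline{L_iD_i}$ when $\mathcal{Q}_i^{up}$ has small TPR — since each placement changes the triangulation of the swept quadrilateral $\square\,\mathcal{Q}_{i-1}^{up}\,\widetilde{\mathcal{Q}}_i^{up}\,\mathcal{Q}_{i+1}^{up}\,\mathcal{Q}_i^{up}$ and hence the Heron-type area expression; in each case one must check, using concavity and monotonicity of $ROC_{up}$ and the ordering in Assumption~\ref{assumption2}, that all intermediate polylines stay valid monotone ROC curves and that the area comparison runs in the claimed direction.
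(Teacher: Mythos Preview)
Your plan follows essentially the same route as the paper: the paper's proof is a contradiction argument that takes an arbitrary candidate $C$ on $\mathfrak{B}_i$ other than the CutShift point $A$, writes the area decomposition
\[
\mathrm{Area}(\square\,\mathcal{Q}^{up}_{i-1}\, C\, \mathcal{Q}^{up}_{i+1}\, \mathcal{Q}^{up}_{i}) = \mathrm{Area}(\square\,\mathcal{Q}^{up}_{i}\, A\, \mathcal{Q}^{up}_{i+1}) + \mathrm{Area}(\square\,\mathcal{Q}^{up}_{i-1}\, C\, \mathcal{Q}^{up}_{i+1}\, A),
\]
and concludes from non-negativity of the remainder that $A$ has smaller loss; the argument is carried largely by two figures. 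Your first step (reduce from the feasible arc to its endpoints $p_{left},p_{right}$ via exactly this additivity identity) is this same argument, so on the core idea you and the paper agree.

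Where you go beyond the paper is in two places. First, you make the feasibility constraint explicit: the paper never spells out that $\widetilde{\mathcal{Q}}_i^{up}$ must lie in the hypograph of $ROC_{up}$, which is what singles out the sub-arc of $\mathfrak{B}_i$ with endpoints $p_{left},p_{right}$ rather than, say, $U_i$. Second, you actually attempt the comparison between $p_{left}$ and $p_{right}$ (the second clause of the theorem), whereas the paper's written proof stops after showing that \emph{some} CutShift point dominates any other boundary point and leaves the tie-breaking to the figure. Your monotonicity-in-$|FPR(\mathcal{Q}_i^{up})-FPR(\cdot)|$ argument, together with the case analysis you flag as the main obstacle, is genuinely additional content relative to what the paper proves in text; the paper does not supply an analytic argument for that half of the statement at all. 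So your plan is the paper's approach, but carried through with more care on the parts the paper leaves to pictures.
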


\begin{proof}
(Proof by Contradiction) 
Let us assume that some point $C$ on the Norm Boundary is the optimal fair (point that leads to ROC with maximum possible AUC while satisfying $\varepsilon_1$\ourdef) point. 
As we can see in \textbf{Figure \ref{fig:c1}}, we have transported $\mathcal{Q}_{i}^{up}$ to $C$ in the interior of the Norm set. The shaded area denotes the AUC loss due to this transformation. However, as seen in the next figure Fig \ref{fig:n2}, the AUC loss can be decreased by choosing a point (we choose the CutShift point) on the Norm boundary. Thus, we can always decrease AUC loss by choosing a point on the Norm Boundary. Formally, if point $C$ was the optimal fair point, then the AUC loss with respect to that point is $Area(\square \mathcal{Q}^{up}_{i-1} C \mathcal{Q}^{up}_{i+1} \mathcal{Q}^{up}_{i})$. 
\\
However, if $A$ is the optimal fair point (Fig \ref{fig:c2}), then the AUC loss with respect to that point is $Area(\square \mathcal{Q}^{up}_{i} A \mathcal{Q}^{up}_{i+1})$. However, we notice that:$Area(\square \mathcal{Q}^{up}_{i-1} C \mathcal{Q}^{up}_{i+1} \mathcal{Q}^{up}_{i}) = Area(\square \mathcal{Q}^{up}_{i} A \mathcal{Q}^{up}_{i+1}) + Area(\square \mathcal{Q}^{up}_{i-1} C \mathcal{Q}^{up}_{i+1} A)$. Since $Area(\square \mathcal{Q}^{up}_{i-1} C \mathcal{Q}^{up}_{i+1} A) \ge 0$, we have:
    \[Area(\square \mathcal{Q}^{up}_{i-1} C \mathcal{Q}^{up}_{i+1} \mathcal{Q}^{up}_{i}) \ge Area(\square \mathcal{Q}^{up}_{i} A \mathcal{Q}^{up}_{i+1}) \]
This is a contradiction to the assumption that $C$ is the optimal fair point. Therefore, $C$ is not an optimal fair point.

\end{proof}

\begin{figure}[!ht]
    \centering
    \includegraphics[scale =0.2]{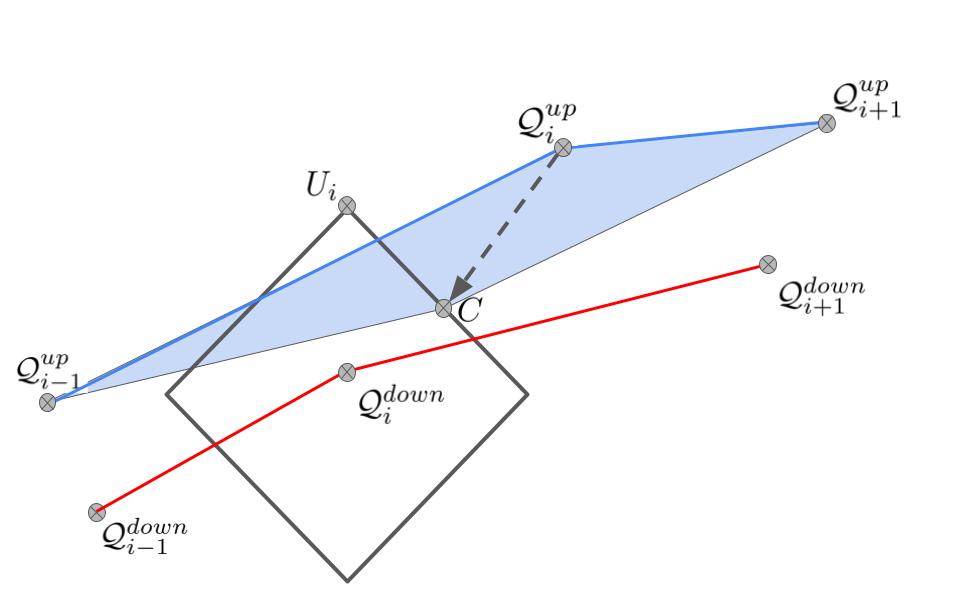}
    \caption{CutShift Operation is not followed. The light blue area indicates the AUC loss due to this operation.}
    \label{fig:c1}
\end{figure}

\begin{figure}[!ht]
    \centering
    \includegraphics[scale =0.2]{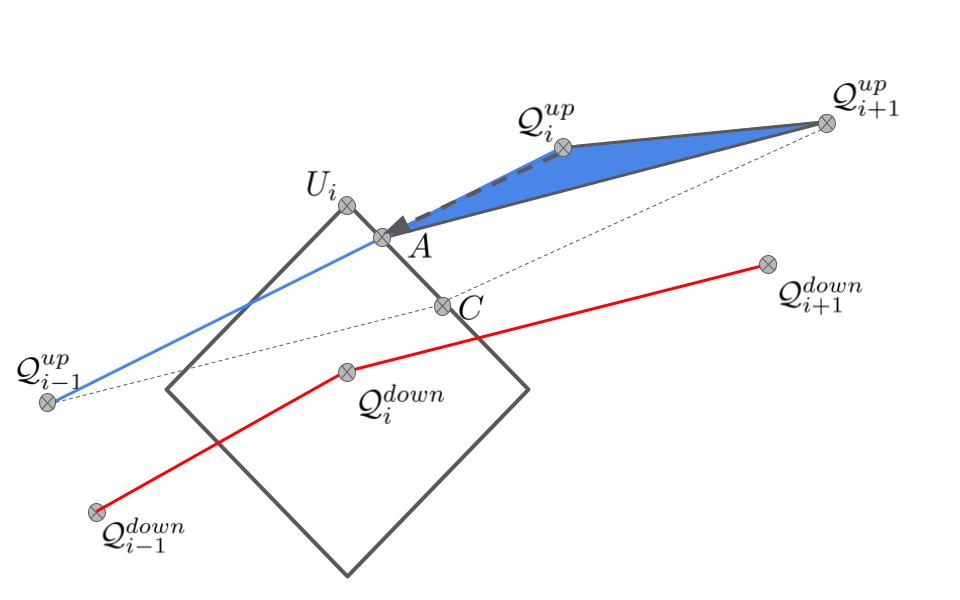}
    \caption{CutShift Operation is followed. The dark blue area indicates the AUC loss due to this operation. It is lesser than the previous AUC loss as seen in Figure 9.}
    \label{fig:c2}
\end{figure}

\subsection{Upshift and Left Shift}
\begin{theorem}[UpShift]
    If $i$ is not a Boundary cut point and if $Area(\square \mathcal{Q}_{i+1}\mathcal{Q}_{i}\mathcal{Q}_{i-1}{L}_i \ge Area(\square \mathcal{Q}_{i+1}\mathcal{Q}_{i}\mathcal{Q}_{i-1}{U}_i$), then UpShift operation must be performed. The resulting point ($U_i$) is the new fair point $\mathcal{\widetilde{Q}}_i^{up}$. Else, LeftShift operation must be performed. The resulting point ($L_i$) is the new fair point $\mathcal{\widetilde{Q}}_i^{up}$.
\end{theorem}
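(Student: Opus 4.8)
The plan is to combine the already-established \emph{Norm Boundary} theorem with an affineness argument for the local AUC contribution at index $i$, and then translate the comparison between the two surviving candidates into exactly the two quadrilateral areas appearing in the statement (the quantities Algorithm~\ref{alg:fairroc} evaluates via Heron's formula). By the Norm Boundary theorem the optimal fair point $\widetilde{\mathcal{Q}}_i^{up}$ lies on $\mathfrak{B}_i$, so it suffices to show the optimum is $U_i$ or $L_i$. First I would record that the move is genuine: since $ROC_{up}$ is a connected curve joining $(0,0)$ and $(1,1)$, if it met the interior of $\mathfrak{C}_i$ it would also have to cross $\mathfrak{B}_i$, contradicting that $i$ is not a Boundary Cut index; together with the branch hypothesis $\mathcal{Q}_i^{up}\notin\mathrm{Hypograph}(ROC_{down})$ this puts $\mathcal{Q}_i^{up}$ strictly outside $\mathfrak{C}_i$ and strictly above $ROC_{down}$. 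The same connectedness argument, applied to the segment from the centre $\mathcal{Q}_i^{down}$ (which lies below $ROC_{up}$) to an arbitrary point of $\mathfrak{C}_i$, shows the whole closed rhombus $\mathfrak{C}_i$ — in particular $U_i$ and $L_i$ — lies weakly below $ROC_{up}$, hence inside the ROC-space $\mathcal{S}|_s$ of Section~\ref{ssec:randomclass}; and Assumption~\ref{assumption2} places $FPR(U_i)=FPR(L_i)+\varepsilon=FPR(\mathcal{Q}_i^{down})$ inside $[\,FPR(\mathcal{Q}_{i-1}^{up}),FPR(\mathcal{Q}_{i+1}^{up})\,]$, so the resulting PLA stays monotone (with the obvious clipping of $U_i$ or $L_i$ to $[0,1]^2$ in the degenerate cases $FPR(\mathcal{Q}_i^{down})<\varepsilon$ or $TPR(\mathcal{Q}_i^{down})+\varepsilon>1$).

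Next, holding the neighbours $\mathcal{Q}_{i-1}^{up},\mathcal{Q}_{i+1}^{up}$ at their original positions, the part of $\texttt{AUC}_h$ that depends on the choice $\widetilde{\mathcal{Q}}_i^{up}=(x,y)$ is the trapezoidal area under the two-segment path $\mathcal{Q}_{i-1}^{up}\to(x,y)\to\mathcal{Q}_{i+1}^{up}$, which the trapezoid rule renders as the affine function
\[
f(x,y)=\tfrac12\big[\,x\,(y_{i-1}-y_{i+1})+y\,(x_{i+1}-x_{i-1})\,\big]+\text{const},
\]
writing $(x_j,y_j)=(FPR(\mathcal{Q}_j^{up}),TPR(\mathcal{Q}_j^{up}))$. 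Monotonicity of the ROC gives $y_{i-1}\le y_{i+1}$ and $x_{i-1}\le x_{i+1}$, so $\partial_x f\le 0$ and $\partial_y f\ge 0$: $f$ increases as we move left or up. An affine function on the rhombus $\mathfrak{B}_i$ attains its maximum at a vertex, and the sign pattern of $\nabla f$ rules out the right vertex $R_i$ and the bottom vertex $D_i$; hence the optimal fair point is $U_i$ or $L_i$.

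It remains to identify which one, and to see that the decision is the stated area inequality. The AUC lost at index $i$ by moving $\mathcal{Q}_i^{up}$ to a feasible $V\in\{U_i,L_i\}$ equals (area under $\mathcal{Q}_{i-1}^{up}\mathcal{Q}_i^{up}\mathcal{Q}_{i+1}^{up}$) minus (area under $\mathcal{Q}_{i-1}^{up}V\mathcal{Q}_{i+1}^{up}$); since $V$ lies below the original path, this difference is precisely the area of the simple quadrilateral $\square\,\mathcal{Q}_{i+1}^{up}\mathcal{Q}_i^{up}\mathcal{Q}_{i-1}^{up}V$. Minimising the loss therefore means choosing $U_i$ iff $Area(\square\,\mathcal{Q}_{i+1}^{up}\mathcal{Q}_i^{up}\mathcal{Q}_{i-1}^{up}U_i)\le Area(\square\,\mathcal{Q}_{i+1}^{up}\mathcal{Q}_i^{up}\mathcal{Q}_{i-1}^{up}L_i)$, i.e. performing UpShift under the hypothesis of the theorem and LeftShift otherwise — exactly the rule executed in Algorithm~\ref{alg:fairroc}.

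The one genuinely delicate point is legitimising the ``one index at a time, against the \emph{original} neighbours'' reduction used above: a priori the best placement of $\widetilde{\mathcal{Q}}_i^{up}$ could depend on how $\mathcal{Q}_{i-1}^{up}$ and $\mathcal{Q}_{i+1}^{up}$ are themselves shifted. I would close this by showing, from Assumption~\ref{assumption2}, that the feasible windows $\mathfrak{B}_i$ are separated in the FPR coordinate and that each candidate in $\{U_i,L_i\}$ keeps the full PLA monotone regardless of the (feasible) choices made at $i\pm 1$, so that $\texttt{AUC}_h$ decomposes into a sum of per-index trapezoidal terms each maximised independently — the same decoupling that already underlies the Norm Boundary and CutShift theorems. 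Verifying this separation carefully, in particular that a choice at $i$ never renders a neighbour's optimal choice infeasible, is where the bulk of the work sits; everything else is the elementary planar geometry sketched above.
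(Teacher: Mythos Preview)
Your proposal is correct and considerably more rigorous than the paper's own proof. The paper's argument for this theorem is essentially a one-liner: it says ``by a similar argument, as the previous proofs'' (i.e., the Norm Boundary and CutShift contradiction arguments, which are themselves largely pictorial) one sees that the optimal fair point is either $U_i$ or $L_i$, and then one simply compares the two quadrilateral losses via Heron's formula. No explicit reason is given for why $R_i$, $D_i$, or non-vertex boundary points are dominated.

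Your route is genuinely different: you identify the local AUC contribution as an \emph{affine} function of the moved point, read off the sign of its gradient from ROC monotonicity ($\partial_x f\le 0$, $\partial_y f\ge 0$), and conclude that the maximum over the rhombus must occur at $U_i$ or $L_i$. This replaces an appeal to figures with a two-line linear-programming observation and makes transparent exactly which structural facts (monotonicity of the PLA neighbours, convexity of $\mathfrak{C}_i$) are doing the work. You also explicitly flag the decoupling issue --- optimising index $i$ against the \emph{original} neighbours $\mathcal{Q}_{i\pm1}^{up}$ rather than their shifted versions --- which the paper does not address; it simply asserts that Theorem~4.2 ``follows from'' the three per-index theorems. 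Your sketch of how Assumption~\ref{assumption2} separates the feasible windows in the FPR coordinate is the right idea for closing that gap and is strictly more than the paper provides.
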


\begin{proof}
    By a similar argument, as the previous proofs, we argue (through \textbf{Figure \ref{fig:Upsh}}, \textbf{Figure \ref{fig:nUpSh}} and \textbf{Figure \ref{fig:UpShp}}), we can prove that either the point recommended by UpShift ($U_i$) or LeftShift ($L_i$) is the optimal point. So, to decide between them, we use Heron's formula to find the area of both quadrilaterals and then compare their areas to find the least AUC loss. We can use Heron's formula to find the area of a quadrilateral in the following way: If $\square ABCD$ is a quadrilateral with vertices $A,B,C$ and $D$. This area is easily found in this context by splitting $\square ABCD$ into two disjoint triangles- $\Delta ABC$ and $\Delta ACD$ and using the Herons formula \cite{kendig20002000} on each triangle. For example, consider $Area(\Delta \mathcal{Q}_i^{up} \mathcal{Q}_{i-1}^{up} L_i)$. Let $a = ||\mathcal{Q}_i^{up} \mathcal{Q}_{i-1}^{up}||_2$, $b = ||\mathcal{Q}_i^{up} L_i||_2$ and $c = ||\mathcal{Q}_{i-1}^{up} L_i||_2$. Additionally, we define $s = \frac{a+b+c}{2}$. Then, it is true that:
\[Area(\Delta \mathcal{Q}_i^{up} \mathcal{Q}_{i-1}^{up} L_i ) = \sqrt{s(s-a)(s-b)(s-c)}\]

\end{proof}

\begin{figure}[!ht]
    \centering
    \includegraphics[scale = 0.35]{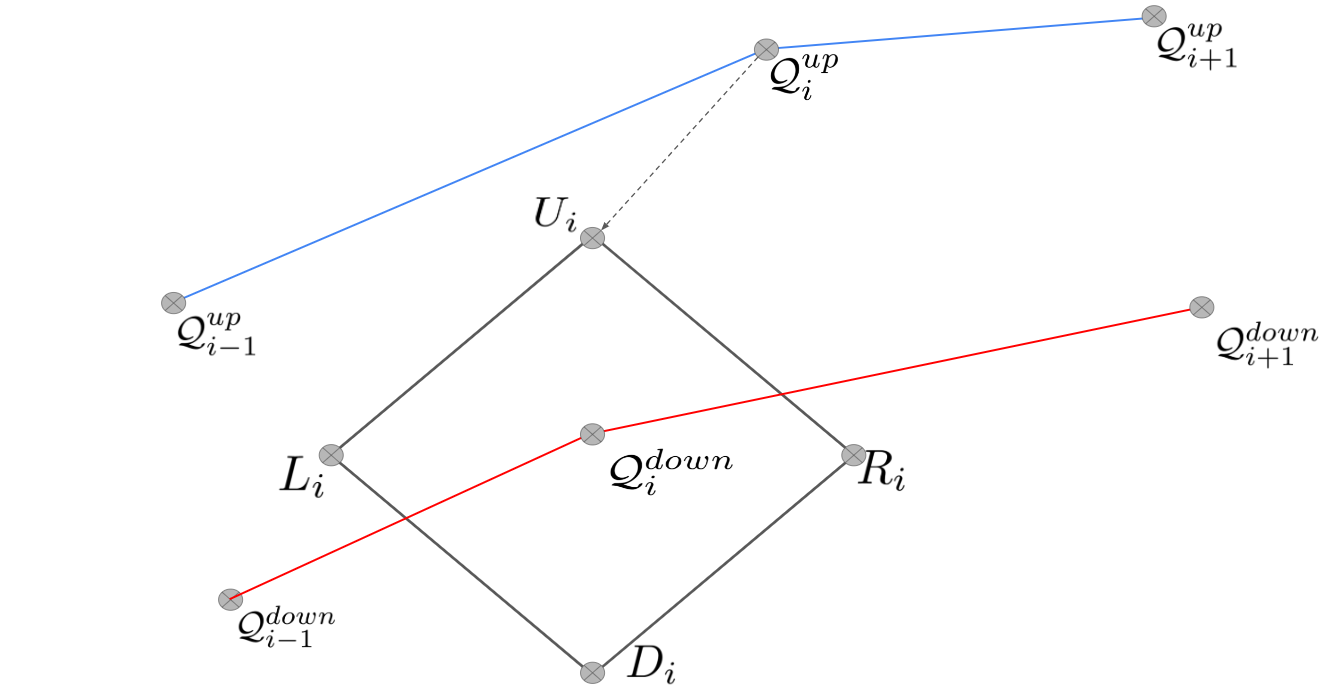}
    \caption{The dotted arrow represents the UpShift transportation of the point from $\mathcal{Q}_i^{up}$ to $U_i$}
    \label{fig:Upsh}
\end{figure}
\begin{figure}[!ht]
    \centering
    \includegraphics[scale = 0.4]{diagrams/Leftshift_updated.png}
    \caption{The dotted arrow represents the LeftShift transportation of the point from $\mathcal{Q}_i^{up}$ to $U_i$}
    \label{fig:lesh}
\end{figure}

\begin{figure}[!ht]
    \centering
    \includegraphics[scale =0.1]{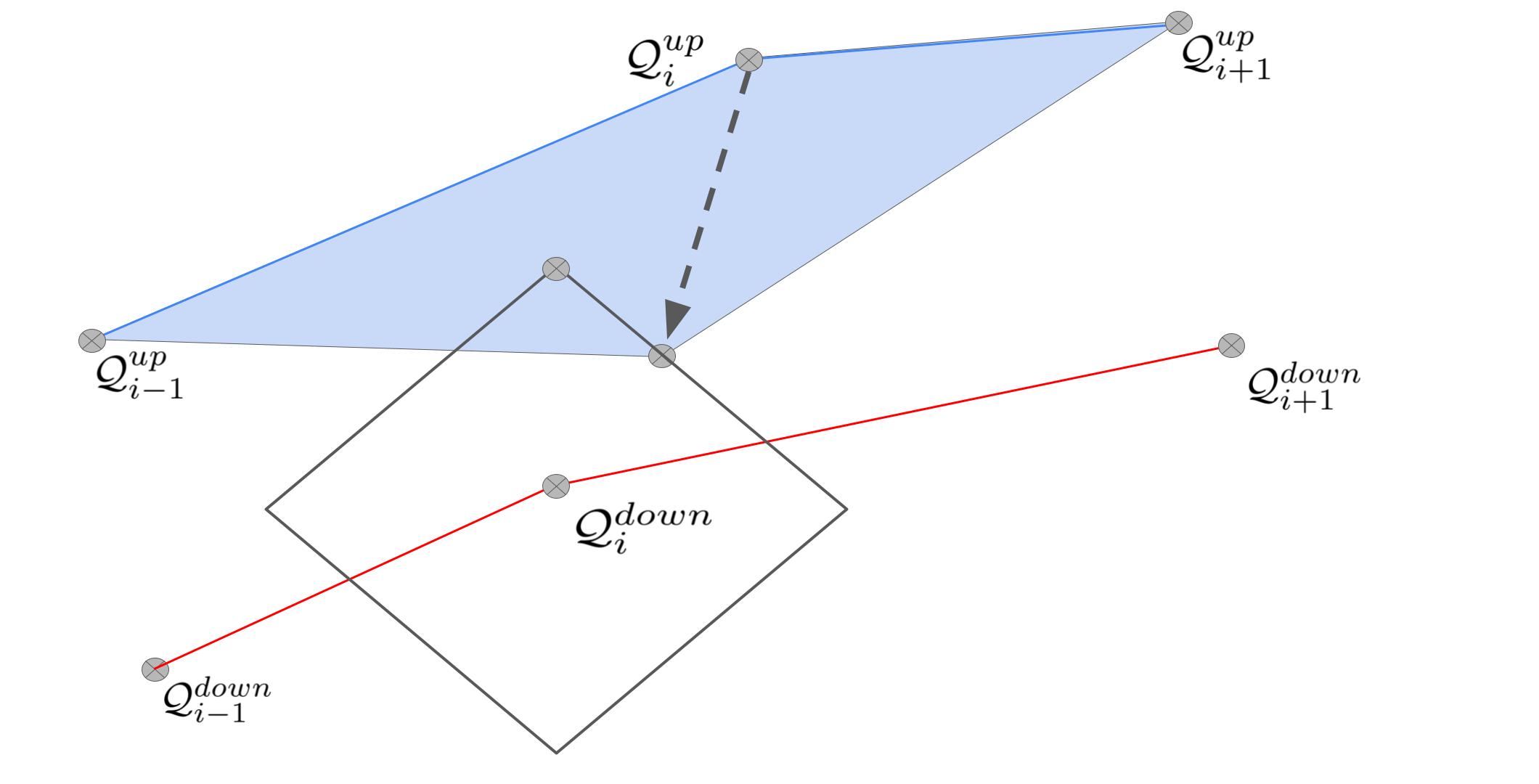}
    \caption{UpShift Operation is not followed. The light blue area indicates the AUC loss due to this operation.}
    \label{fig:nUpSh}
\end{figure}

\begin{figure}[!ht]
    \centering
    \includegraphics[scale =0.1]{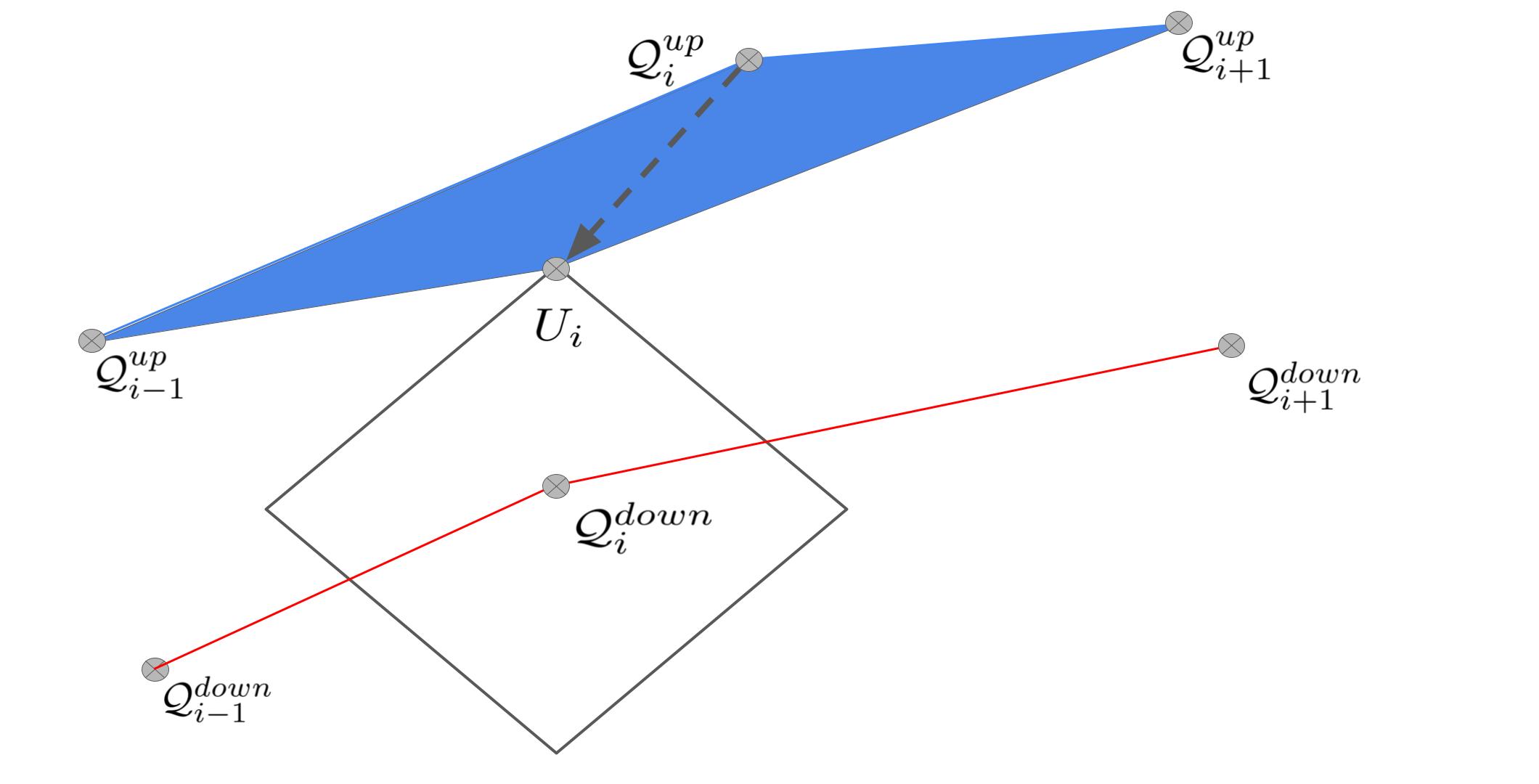}
    \caption{UpShift Operation is followed. The dark blue area indicates the AUC loss due to this operation. It is lesser than the previous AUC loss as seen in Figure 11.}
    \label{fig:UpShp}
\end{figure}

 The optimality of AUC (Theorem 4.2) follows from Theorem D.2, Theorem D.3 and Theorem D.4.

\subsection{Sample Complexity}
If the \textbf{Assumption 4.2} holds true, then we have the following analysis:
\begin{itemize}
    \item All UpShift Operations will be constant time ($O(1)$).
    \item All CutShift Operations will also be constant time ($O(1)$). This is because \textbf{Assumption 4.2} ensures that we do not have to run through the entire length of $ROC_{up}$ to find the intersection points i.e. $p_{left}$ and $p_{right}$.
\end{itemize}
Therefore, the running time of \ouralgo~ is $O(k)$.
However, when no assumptions are made, then the CutShift operation is no longer $O(1)$. We may have to run through the entire length of $ROC_{up}$ to find the intersection points i.e. $p_{left}$ and $p_{right}$. This makes the CutShift operation $O(k)$. Therefore, the time complexity of \ouralgo~ is $O(k^2)$.

\subsection{Further Variants}
\subsubsection{Multiple Protected Groups}
Our approach is extendable to scenarios involving multiple protected groups. The procedure begins by applying the FROC algorithm to the ROC curve that is immediately above the bottom-most ROC curve. Subsequently, FROC is applied to the ROC curve directly above the one previously processed. This iterative application continues until the top ROC curve is reached. While this method ensures \ourdef fairness across all protected groups, the proof of optimality remains an open question.

\subsubsection{Intersection of ROC Curves}
In cases where the ROC curves intersect more than twice, our algorithm will still produce a fair output. However, the existing optimality theorems do not apply in such scenarios. When intersections occur, the FROC algorithm can be applied to the dominant segments of the ROC curves—those portions where no intersections are present.

\section{Experiments}
\subsection{Datasets}
\subsubsection{UCI Adult Dataset}
The Adult Dataset \cite{misc_adult_2} comprises 48,842 instances, each containing 14 attributes, including both categorical and continuous variables. The dataset was designed to predict whether an individual's income exceeds \$50,000 per year, making it suitable for binary classification tasks. The features include demographic information such as age, education level, marital status, occupation, work hours per week, and native country, among others.
\subsubsection{COMPAS Recidivism Dataset}
COMPAS Dataset \cite{angwin2022machine} is a widely-discussed and controversial dataset utilized in the field of criminal justice and fairness-aware machine learning. The COMPAS (Correctional Offender Management Profiling for Alternative Sanctions) dataset is commonly employed to explore the potential bias and fairness issues that may arise in predictive models used for criminal justice decisions.
The COMPAS dataset consists of historical data on defendants who were considered for pretrial release in a U.S. county. The data includes various features extracted from defendant profiles, such as age, race, gender, past criminal history, pending charges, and other pertinent factors. Additionally, the dataset contains binary labels indicating whether a defendant was rearrested within a specific period after their release.
\subsection{Protected Groups}
In the context of this paper, we consider the relative performance of the classifiers with respect to the different protected groups - sex (Male and Female) for the Adult Dataset and Race (African Americans and Others) for the COMPAS Dataset.

\subsection{Experiment Details}
We have performed statistical analysis on FROC, but not on the original classifier. This is because studying the fairness-accuracy trade-of is our goal (as opposed to studying the performance of the baseline classifier). However, it must be noted that since the ROC shifting is deterministic, all randomness emerges from the post-shift classifier builder. For the statistical analysis, we have $10$ iterations of the experiment as $\varepsilon$ runs from $0.001$ to $0.1$ in $20$ intervals. (Except for the case of Random Forest Gini (Adult) : $0.001$ to $0.2$ in $20$ intervals.)

\subsection{Plots}
\subsubsection{Adult Dataset - Weighted ensemble L2}
\begin{figure}[!ht]
    \centering
    \includegraphics[width=1\linewidth]{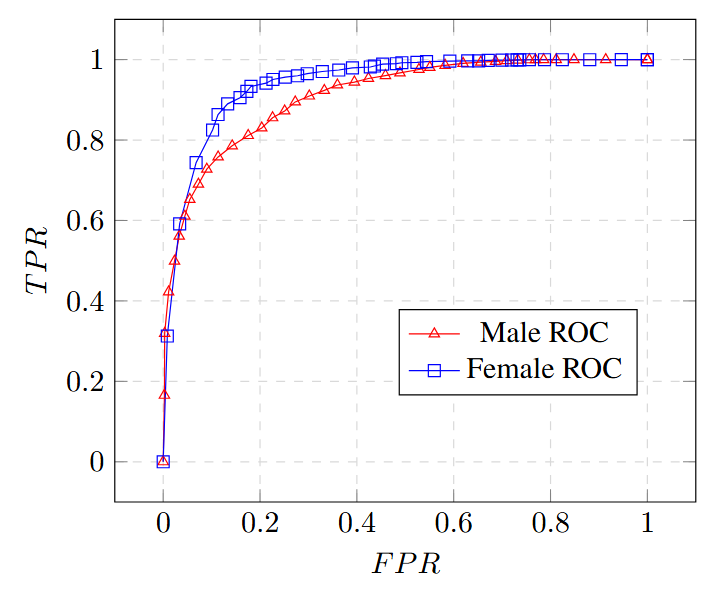}
    \caption{Weighted Ensemble L2 Baseline ROCs for Adult Dataset}
    \label{fig:WEL2_Adult_ROC}
\end{figure}

\begin{figure}[!ht]
    \centering
    \includegraphics[width=1\linewidth]{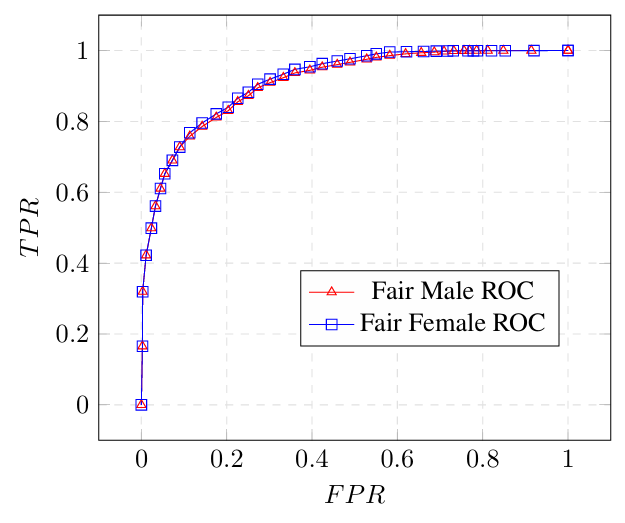}
    \caption{(Fair $\varepsilon_1 = 0.01$) Weighted Ensemble L2-\ouralgo\   ROCs for Adult Dataset}
    \label{fig:WEL2_Adult_ROC_FROC}
\end{figure}

\begin{figure}[!ht]
    \centering
    \includegraphics[width=1\linewidth]{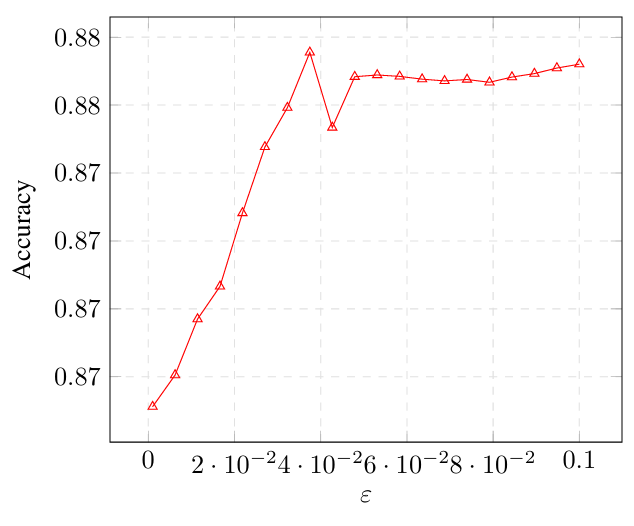}
    \caption{Weighted Ensemble L2-\ouralgo\  Accuracy vs. $\varepsilon_1$ (Adult)}
    \label{fig:WEL2_Adult_Accuracy}
\end{figure}

\begin{figure}[!ht]
    \centering
    \includegraphics[width=1\linewidth]{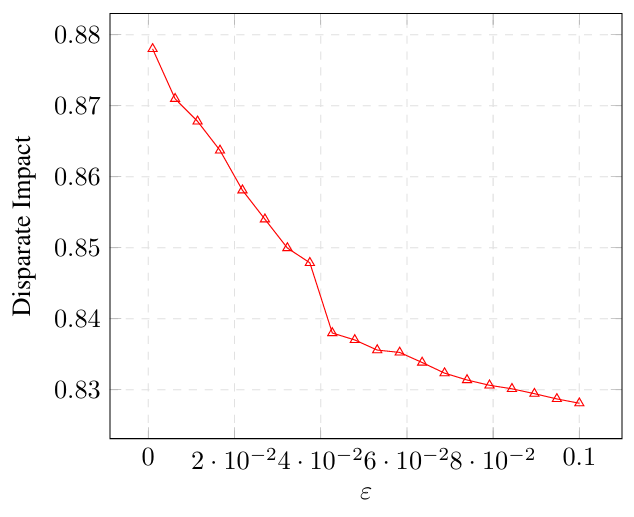}
    \caption{Weighted Ensemble L2-\ouralgo\  Disparate Impact vs. $\varepsilon_1$ (Adult)}
    \label{fig:WEL2_Adult_DI}
\end{figure}

\begin{figure}[!ht]
    \centering
    \includegraphics[width=1\linewidth]{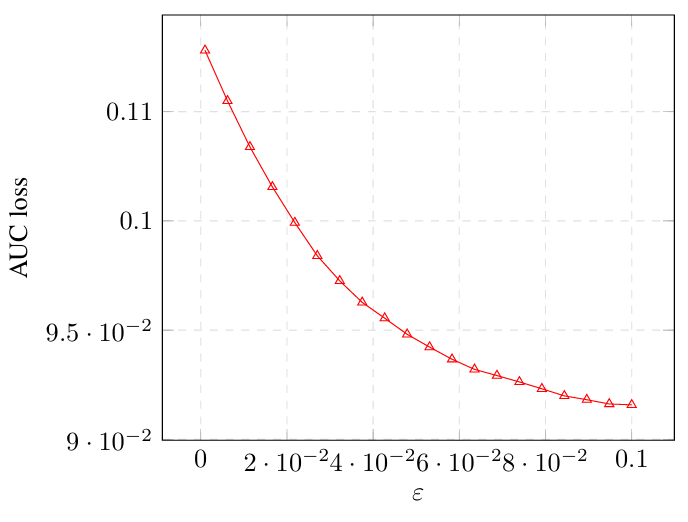}
    \caption{Weighted Ensemble L2-\ouralgo\ AUC loss vs. $\varepsilon_1$ (Adult)}
    \label{fig:WEL2_Adult_AUC}
\end{figure}

\begin{itemize}
    \item We have applied FROC with the our fairness parameter $\varepsilon = 0.01$ in \textbf{Figure \ref{fig:WEL2_Adult_ROC_FROC}}. As promised, the resulting ROCs are 'closer' to each other.
    \item In \textbf{Figure \ref{fig:WEL2_Adult_Accuracy}} and \textbf{Figure \ref{fig:WEL2_Adult_DI}}, we have the Accuracy vs. $\varepsilon_1$ and the Disparate Impact vs. $\varepsilon_1$ plot.
    \item This analysis gives us a maximum variance of $1.88\times10^{-6}$ and a maximum CoV (Coefficient of Variation) of $0.15\%$ for Accuracy.
    \item As for the Disparate Impact, the analysis gives us a maximum variance of $2.25\times10^{-5}$ and a maximum CoV of $0.55\%$.
    \item As seen in the plots, we observe that a $1\%$ drop in Accuracy improves the Disparate Impact by $5\%$.
    \item Finally, in \textbf{Figure \ref{fig:WEL2_Adult_AUC}}, we have the AUC loss vs. $\varepsilon$ plot. As seen in the figure, the AUC loss decays to $0$ as our fairness constraint loosens.
\end{itemize}

\subsubsection{Adult Dataset - Random Forest Gini}

\begin{figure}[!ht]
    \centering
    \includegraphics[width=1\linewidth]{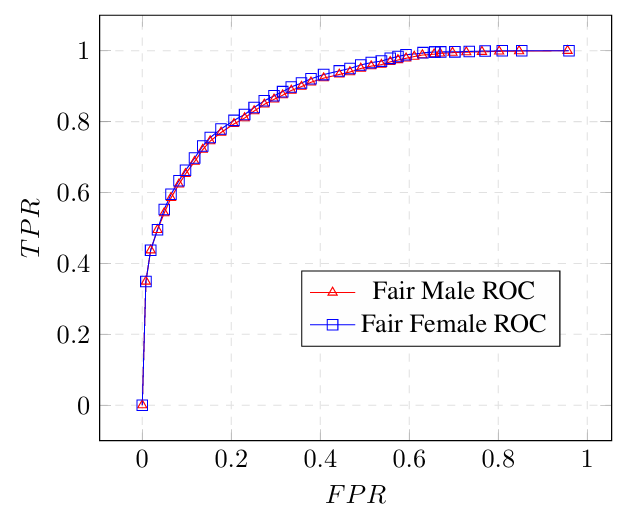}
    \caption{Random Forest (Gini) Baseline ROCs for Adult Dataset}
    \label{fig:RFG_Adult_Baseline_ROC_FROC}
\end{figure}

\begin{figure}[!ht]
    \centering
    \includegraphics[width=1\linewidth]{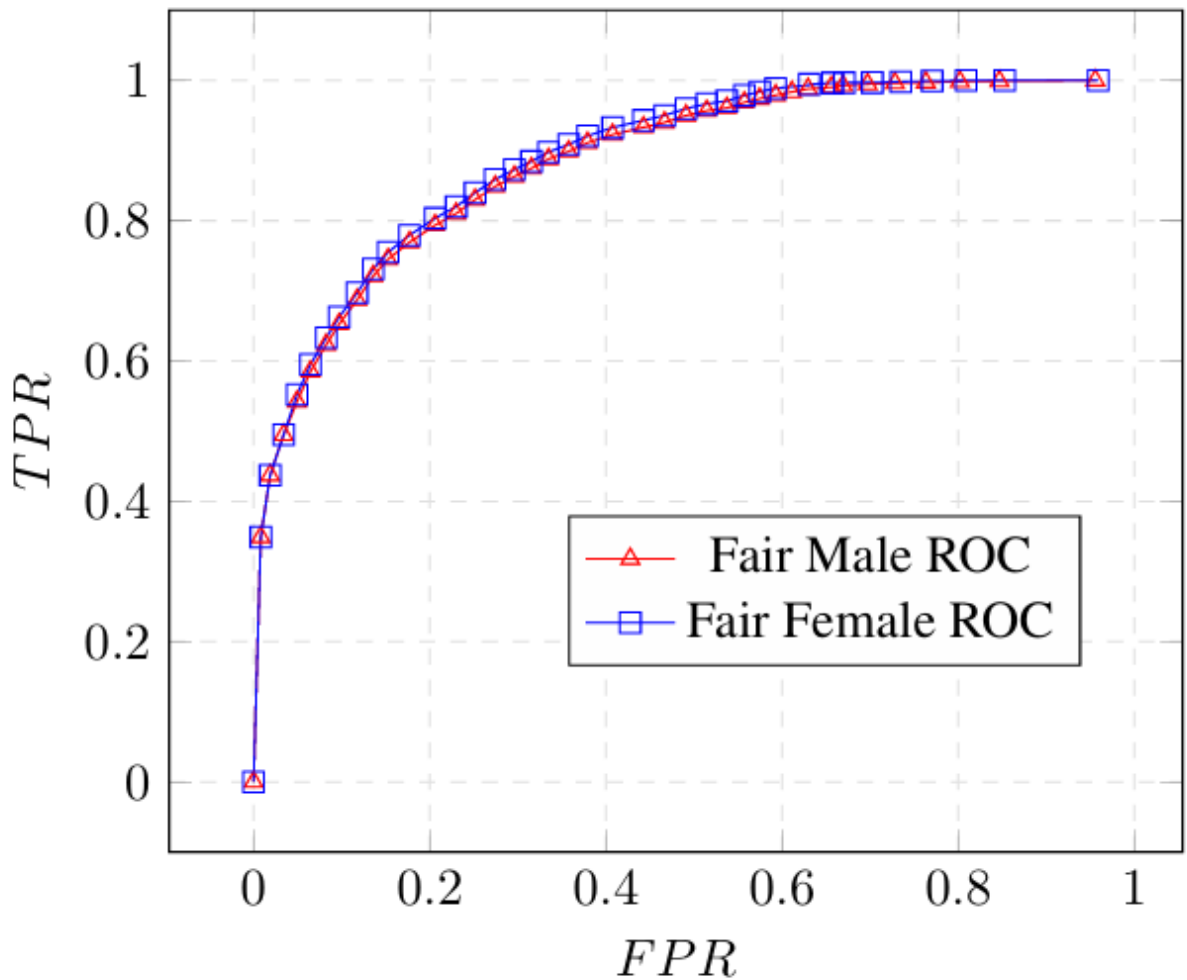}
    \caption{(Fair $\varepsilon_1 = 0.01$) Random Forest (Gini)-\ouralgo\   ROCs for Adult Dataset}
\end{figure}

\begin{figure}[!ht]
    \centering
    \includegraphics[width=1\linewidth]{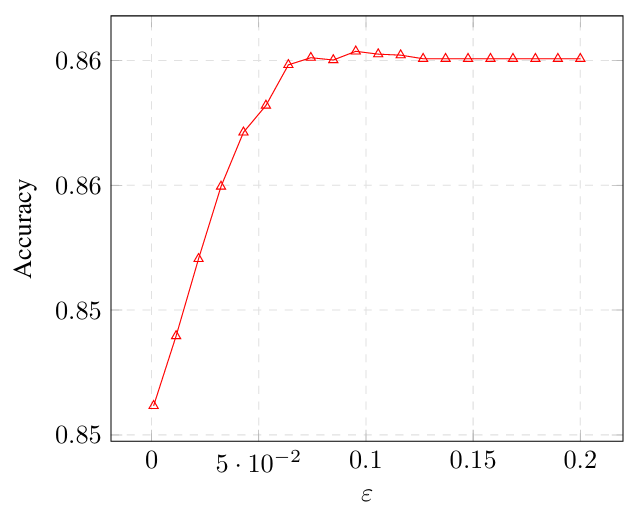}
    \caption{Random Forest (Gini)-\ouralgo\  Accuracy vs. $\varepsilon_1$ (Adult)}
    \label{fig:RFG_Adult_Accuracy}
\end{figure}

\begin{figure}[!ht]
    \centering
    \includegraphics[width=1\linewidth]{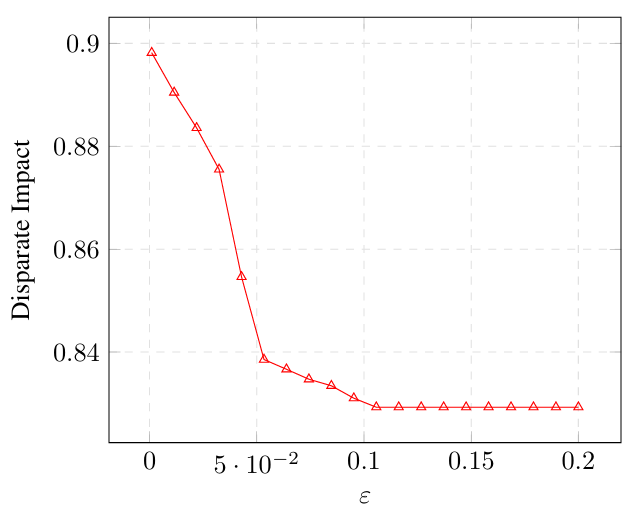}
    \caption{Random Forest (Gini)-\ouralgo\  Disparate Impact vs. $\varepsilon_1$ (Adult)}
    \label{fig:RFG_Adult_DI}
\end{figure}

\begin{figure}[!ht]
    \centering
    \includegraphics[width=1\linewidth]{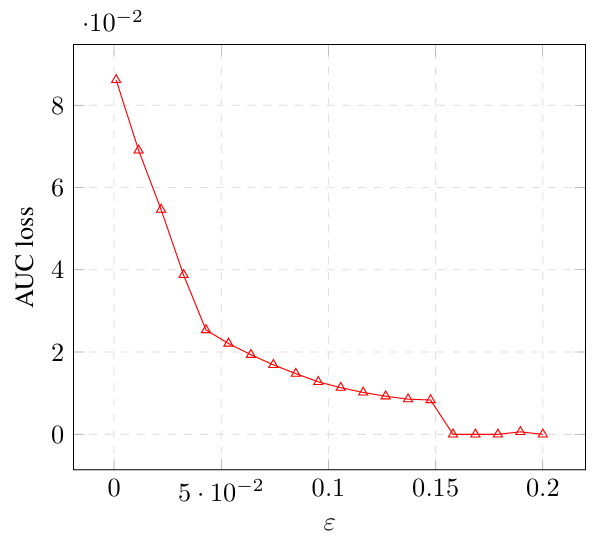}
    \caption{Random Forest (Gini)-\ouralgo\ AUC loss vs. $\varepsilon_1$ (Adult)}
    \label{fig:RFG_Adult_AUC}
\end{figure}

\begin{itemize}
    \item We have applied FROC with the our fairness parameter $\varepsilon = 0.01$ in \textbf{Figure \ref{fig:RFG_Adult_Baseline_ROC_FROC}}. As promised, the resulting ROCs are 'closer' to each other.
    \item In \textbf{Figure \ref{fig:RFG_Adult_Accuracy}} and \textbf{Figure \ref{fig:RFG_Adult_DI}}, we have the Accuracy vs. $\varepsilon_1$ and the Disparate Impact vs. $\varepsilon_1$ plot.
    \item This analysis gives us a maximum variance of $8.3\times10^{-7}$ and a maximum CoV (Coefficient of Variation) of $0.1\%$ for Accuracy.
    \item As for the Disparate Impact, the analysis gives us a maximum variance of $7.59\times10^{-6}$ and a maximum CoV of $0.75\%$.
    \item As seen in the plots, we observe that a $1\%$ drop in Accuracy improves the Disparate Impact by $7\%$.
    \item Finally, in \textbf{Figure \ref{fig:RFG_Adult_AUC}}, we have the AUC loss vs. $\varepsilon_1$ plot. As seen in the figure, the AUC loss decays to $0$ as our fairness constraint loosens.
\end{itemize}

\subsubsection{Adult Dataset - FNNC}

\begin{figure}[!ht]
    \centering
    \includegraphics[width=1\linewidth]{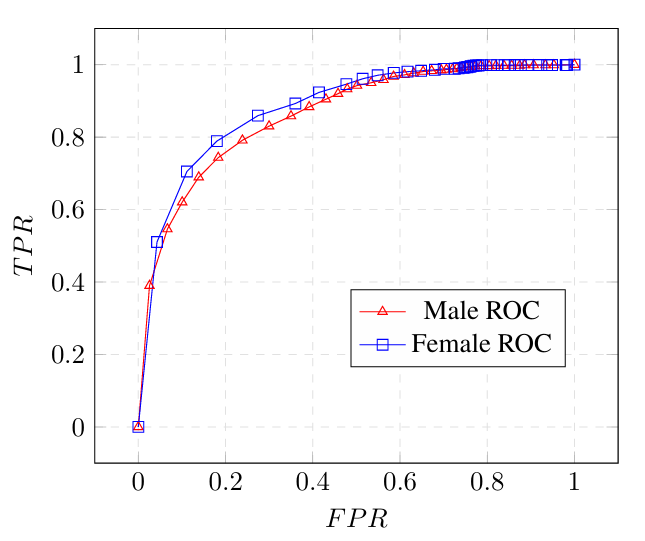}
    \caption{FNNC Baseline ROCs for Adult Dataset}
    \label{fig:FNNC_Adult_ROC}
\end{figure}

\begin{figure}[!ht]
    \centering
    \includegraphics[width=1\linewidth]{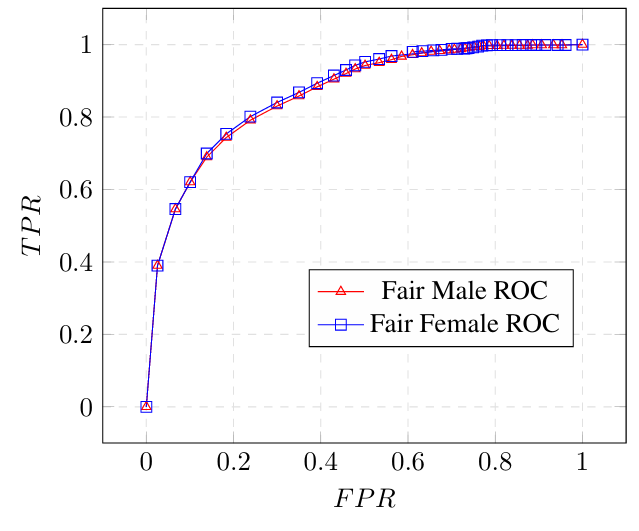}
    \caption{(Fair $\varepsilon_1 = 0.01$) FNNC-\ouralgo\   ROCs for Adult Dataset}
    \label{fig:FNNC_Adult_ROC_FROC}
\end{figure}

\begin{figure}[!ht]
    \centering
    \includegraphics[width=1\linewidth]{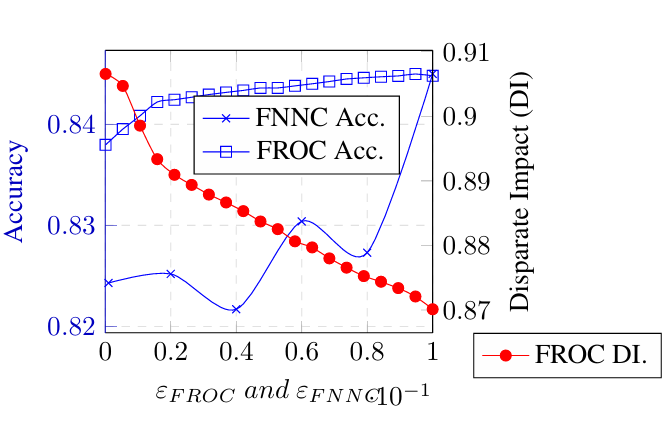}
    \caption{FNNC-\ouralgo\  Accuracy vs. $\varepsilon_1$ (Adult)}
    \label{fig:FNNC_Adult_Accuracy}
\end{figure}


\begin{figure}[!ht]
    \centering
    \includegraphics[width=1\linewidth]{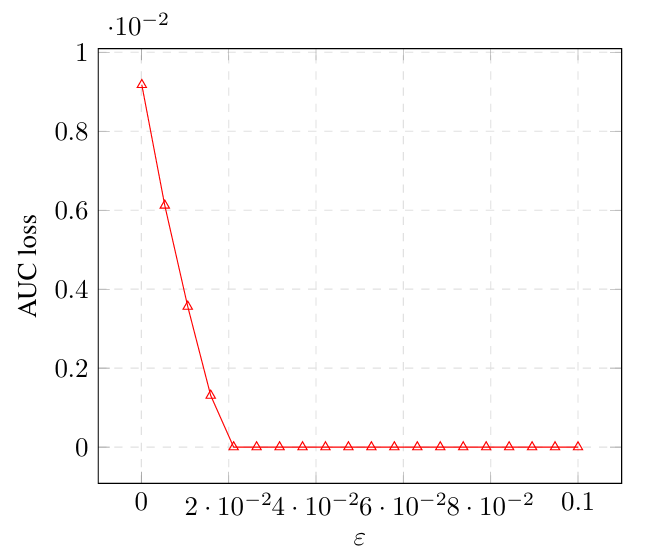}
    \caption{FNNC-\ouralgo\ AUC loss vs. $\varepsilon_1$ (Adult)}
    \label{fig:FNNC_Adult_AUC}
\end{figure}

\begin{itemize}
    \item We have applied FROC with the our fairness parameter $\varepsilon_1 = 0.01$ in \textbf{Figure \ref{fig:RFG_COMPAS_Baseline_ROC_FROC}}. As promised, the resulting ROCs are 'closer' to each other.
    \item In \textbf{Figure \ref{fig:FNNC_Adult_Accuracy}}, we have the Accuracy vs. $\varepsilon_1$ and the Disparate Impact vs. $\varepsilon_1$ plot. We also have the $\varepsilon_{FNNC} vs. \varepsilon_{FROC}$ plot.
    \item We find that in the FNNC is slightly lower than FROC in terms of accuracy. We assign it to the fact that FNNC may overachieve the target fairness for smaller values of $\varepsilon_1$ (Evident from Table 2 [Padala and Gujar 2021]). FROC drops AUC minimally to achieve target fairness.
    \item This analysis gives us a maximum variance of $6.6\times10^{-7}$ and a maximum CoV (Coefficient of Variation) of $0.09\%$ for Accuracy.
    \item As for the Disparate Impact, the analysis gives us a maximum variance of $1\times10^{-4}$ and a maximum CoV of $1.26\%$.
    \item As seen in the plots, we observe that a $1\%$ drop in Accuracy improves the Disparate Impact by $5\%$.
    \item Finally, in \textbf{Figure \ref{fig:FNNC_Adult_AUC}}, we have the AUC loss vs. $\varepsilon_1$ plot. As seen in the figure, the AUC loss decays to $0$ as our fairness constraint loosens.
\end{itemize}

\subsubsection{COMPAS Dataset - Weighted ensemble L2}

\begin{figure}[!ht]
    \centering
    \includegraphics[width=1\linewidth]{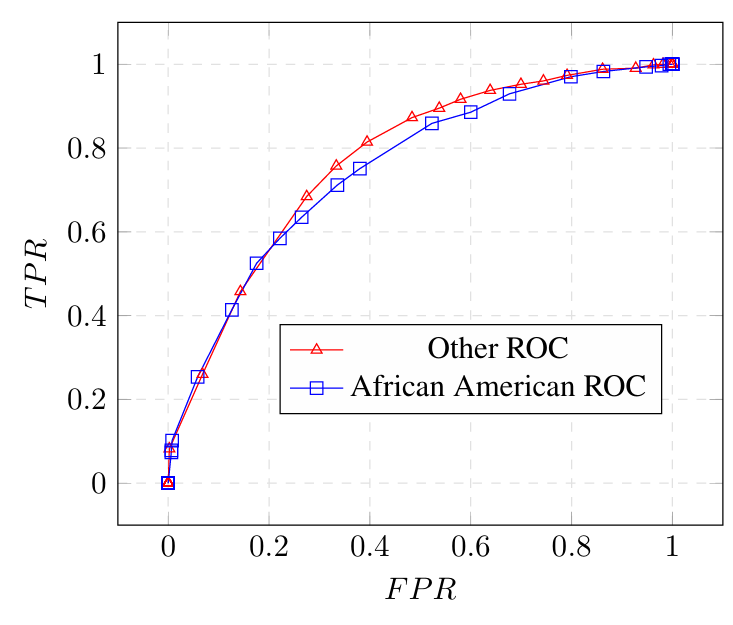}
    \caption{Weighted Ensemble L2 Baseline ROCs for COMPAS Dataset}
    \label{fig:WEL2_COMPAS_ROC}
\end{figure}

\begin{figure}[!ht]
    \centering
    \includegraphics[width=1\linewidth]{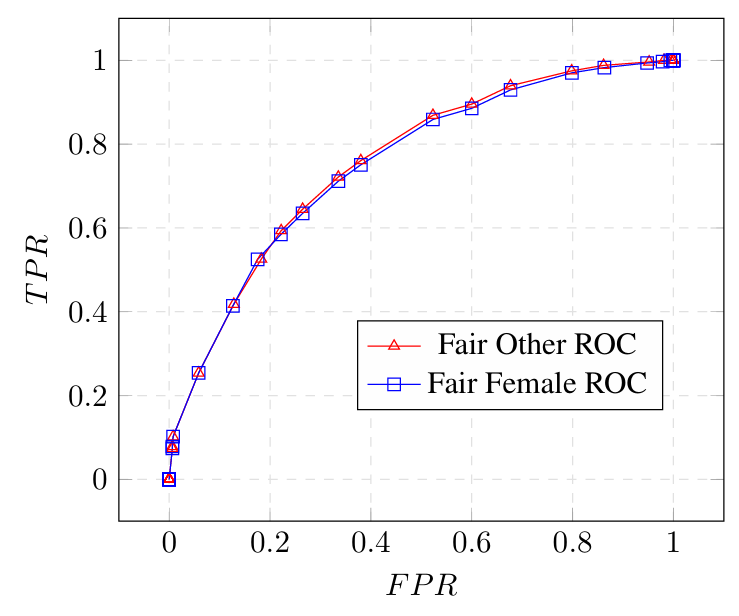}
    \caption{(Fair $\varepsilon_1 = 0.01$) Weighted Ensemble L2-\ouralgo\   ROCs for COMPAS Dataset}
    \label{fig:WEL2_COMPAS_ROC_FROC}
\end{figure}

\begin{figure}[!ht]
    \centering
    \includegraphics[width=1\linewidth]{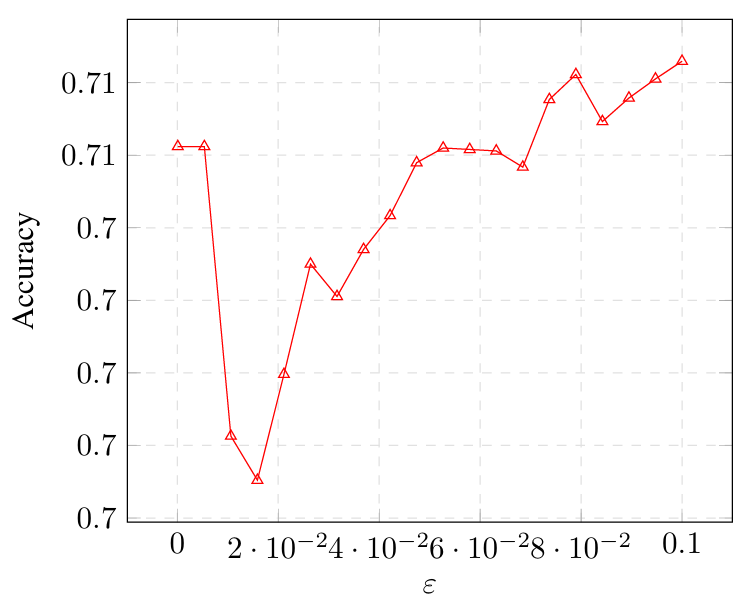}
    \caption{Weighted Ensemble L2-\ouralgo\  Accuracy vs. $\varepsilon_1$ (COMPAS)}
    \label{fig:WEL2_COMPAS_Accuracy}
\end{figure}

\begin{figure}[!ht]
    \centering
    \includegraphics[width=1\linewidth]{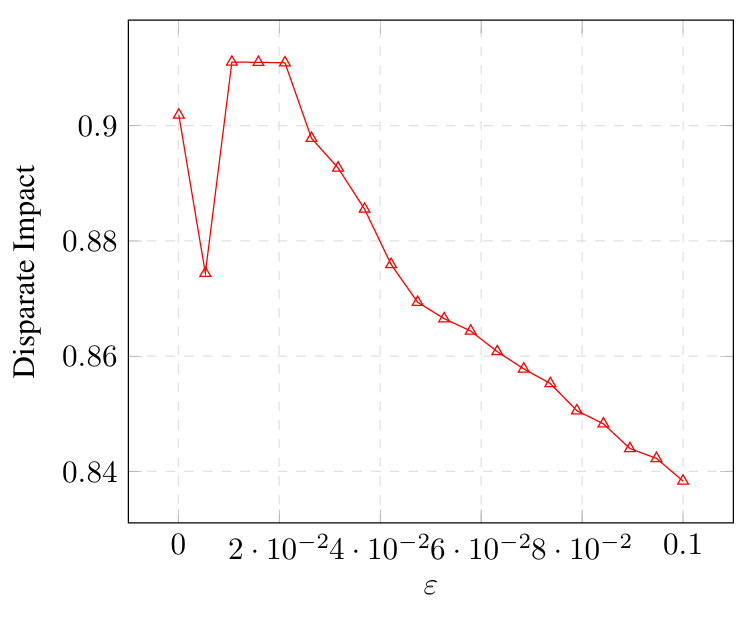}
    \caption{Weighted Ensemble L2-\ouralgo\  Disparate Impact vs. $\varepsilon_1$ (COMPAS)}
    \label{fig:WEL2_COMPAS_DI}
\end{figure}

\begin{figure}[!ht]
    \centering
    \includegraphics[width=1\linewidth]{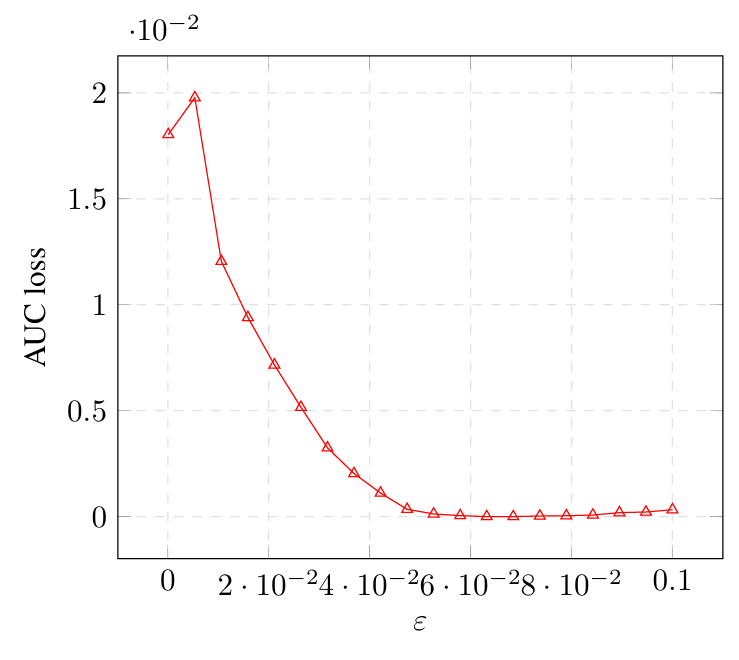}
    \caption{Weighted Ensemble L2-\ouralgo\ AUC loss vs. $\varepsilon_1$ (COMPAS)}
    \label{fig:WEL2_COMPAS_AUC}
\end{figure}

\begin{itemize}
    \item We have applied FROC with the our fairness parameter $\varepsilon_1 = 0.01$ in \textbf{Figure \ref{fig:WEL2_COMPAS_ROC_FROC}}. As promised, the resulting ROCs are 'closer' to each other.
    \item In \textbf{Figure \ref{fig:WEL2_COMPAS_Accuracy}} and \textbf{Figure \ref{fig:WEL2_COMPAS_DI}}, we have the Accuracy vs. $\varepsilon_1$ and the Disparate Impact vs. $\varepsilon_1$ plot.
    \item This analysis gives us a maximum variance of $1.44\times10^{-5}$ and a maximum CoV (Coefficient of Variation) of $0.54\%$ for Accuracy.
    \item As for the Disparate Impact, the analysis gives us a maximum variance of $1.6\times10^{-4}$ and a maximum CoV of $1.69\%$.
    \item As seen in the plots, we observe that a $1\%$ drop in Accuracy improves the Disparate Impact by $7\%$.
    \item Finally, in \textbf{Figure \ref{fig:WEL2_COMPAS_AUC}}, we have the AUC loss vs. $\varepsilon_1$ plot. As seen in the figure, the AUC loss decays to $0$ as our fairness constraint loosens.
\end{itemize}

\subsubsection{COMPAS Dataset - Random Forest Gini}


\begin{figure}[!ht]
    \centering
    \includegraphics[width=1\linewidth]{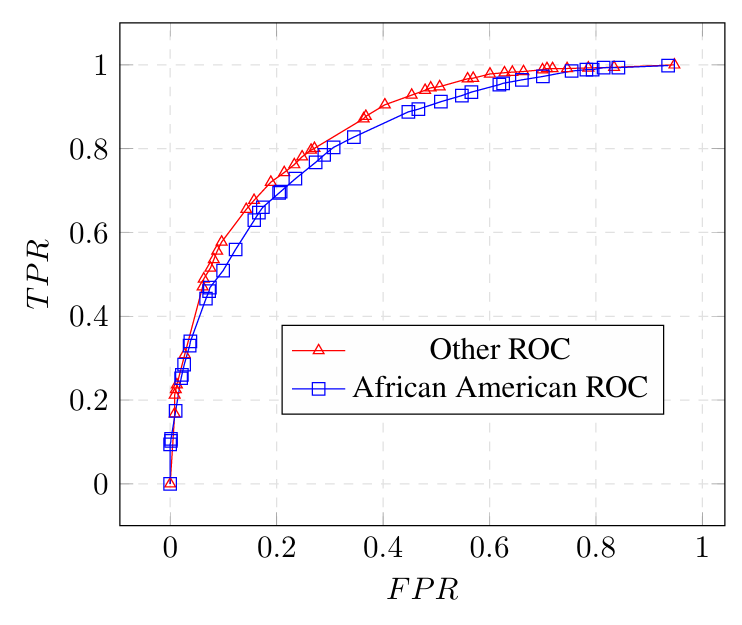}
    \caption{Random Forest (Gini) Baseline ROCs for COMPAS Dataset}
    \label{fig:RFG_COMPAS_Baseline_ROC}
\end{figure}

\begin{figure}[!ht]
    \centering
    \includegraphics[width=1\linewidth]{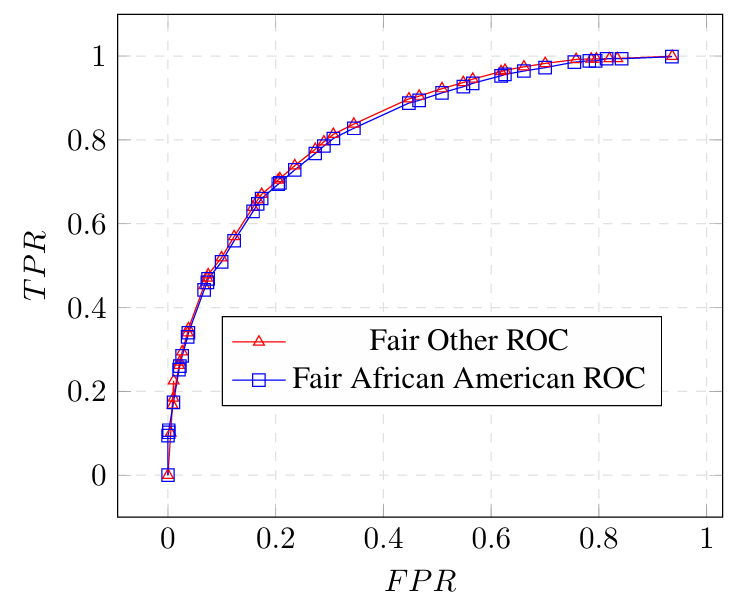}
    \caption{(Fair $\varepsilon_1 = 0.01$) Random Forest (Gini)-\ouralgo\   ROCs for COMPAS Dataset}
    \label{fig:RFG_COMPAS_Baseline_ROC_FROC}
\end{figure}

\begin{figure}[!ht]
    \centering
    \includegraphics[width=1\linewidth]{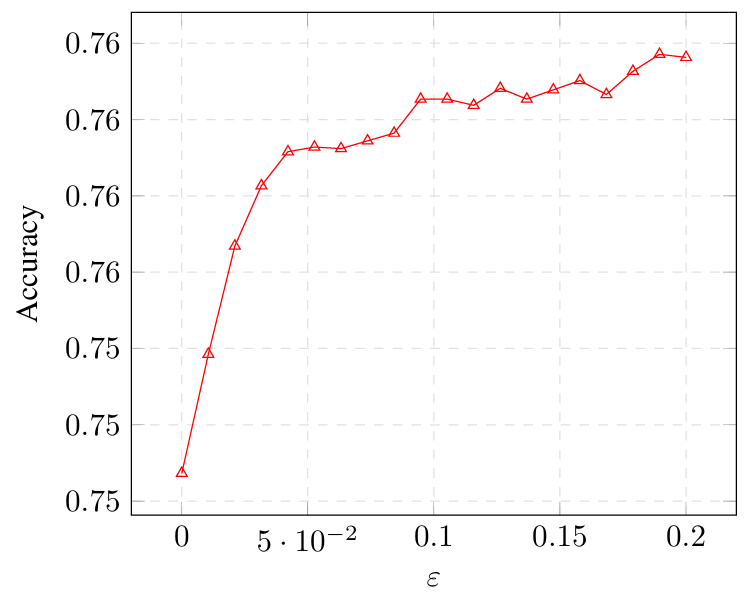}
    \caption{Random Forest (Gini)-\ouralgo\  Accuracy vs. $\varepsilon_1$ (COMPAS)}
    \label{fig:RFG_COMPAS_Accuracy}
\end{figure}

\begin{figure}[!ht]
    \centering
    \includegraphics[width=1\linewidth]{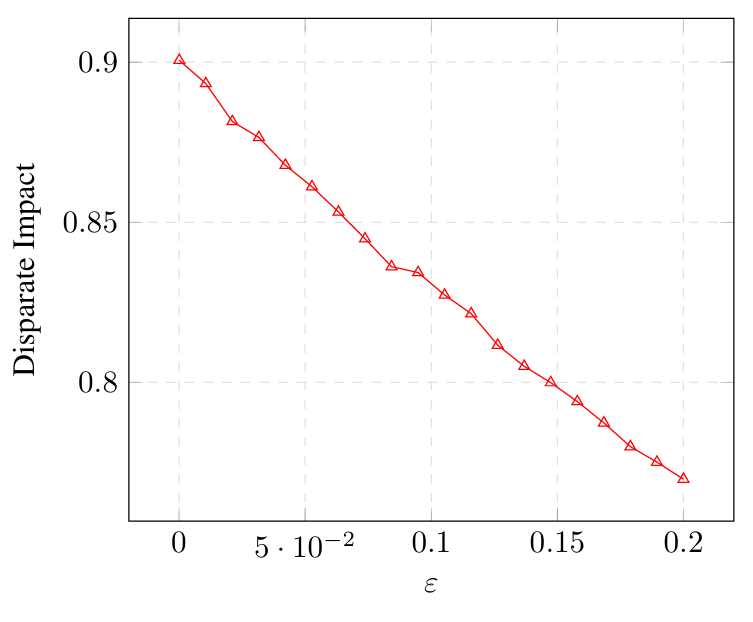}
    \caption{Random Forest (Gini)-\ouralgo\  Disparate Impact vs. $\varepsilon_1$ (COMPAS)}
    \label{fig:RFG_COMPAS_DI}
\end{figure}

\begin{figure}[!ht]
    \centering
    \includegraphics[width=1\linewidth]{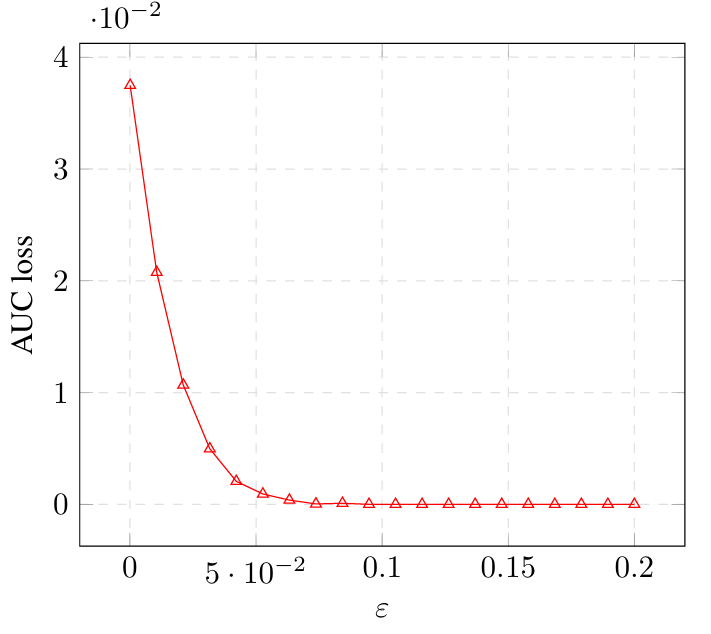}
    \caption{Random Forest (Gini)-\ouralgo\ AUC loss vs. $\varepsilon_1$ (COMPAS)}
    \label{fig:RFG_COMPAS_AUC}
\end{figure}

\begin{itemize}
    \item We have applied FROC with the our fairness parameter $\varepsilon = 0.01$ in \textbf{Figure \ref{fig:RFG_COMPAS_Baseline_ROC_FROC}}. As promised, the resulting ROCs are 'closer' to each other.
    \item In \textbf{Figure \ref{fig:RFG_COMPAS_Accuracy}} and \textbf{Figure \ref{fig:RFG_COMPAS_DI}}, we have the Accuracy vs. $\varepsilon_1$ and the Disparate Impact vs. $\varepsilon_1$ plot.
    \item This analysis gives us a maximum variance of $9.63\times10^{-6}$ and a maximum CoV (Coefficient of Variation) of $0.44\%$ for Accuracy.
    \item As for the Disparate Impact, the analysis gives us a maximum variance of $2\times10^{-4}$ and a maximum CoV of $1.56\%$.
    \item As seen in the plots, we observe that a $1\%$ drop in Accuracy improves the Disparate Impact by $7\%$.
    \item Finally, in \textbf{Figure \ref{fig:RFG_COMPAS_AUC}}, we have the AUC loss vs. $\varepsilon_1$ plot. As seen in the figure, the AUC loss decays to $0$ as our fairness constraint loosens.
\end{itemize}

\subsubsection{COMPAS Dataset - FNNC}

\begin{figure}[!ht]
    \centering
    \includegraphics[width=1\linewidth]{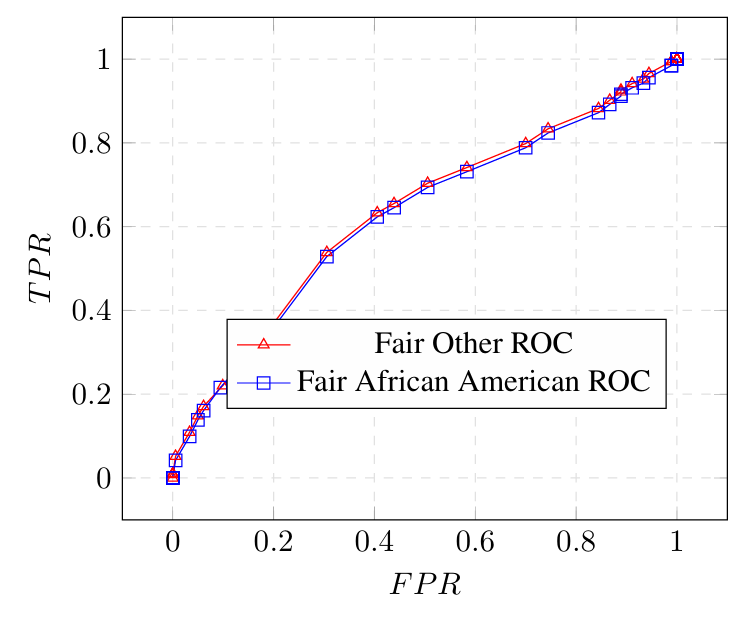}
    \caption{FNNC Baseline ROCs for COMPAS Dataset}
    \label{fig:FNNC_COMPAS_ROC}
\end{figure}

\begin{figure}[!ht]
    \centering
    \includegraphics[width=1\linewidth]{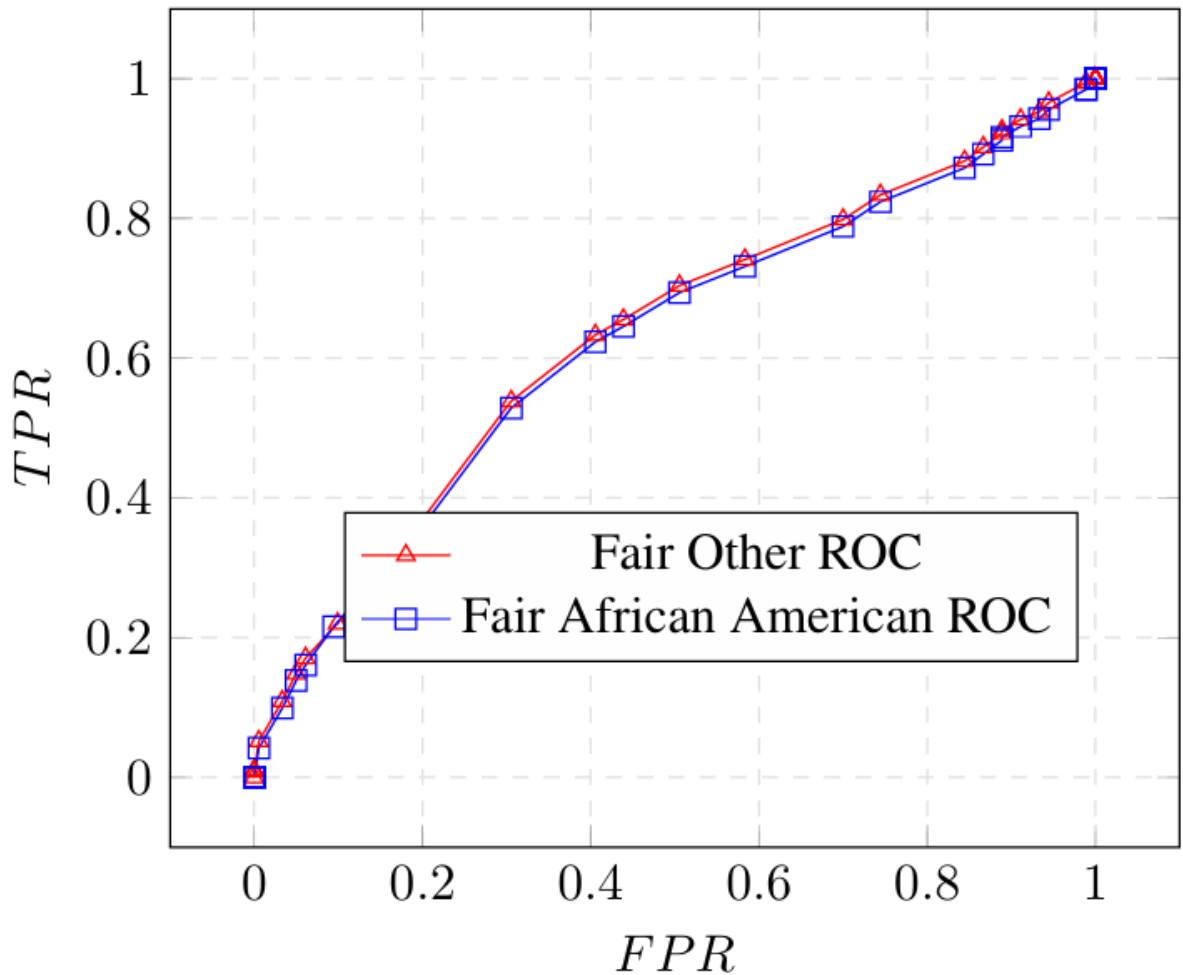}
    \caption{(Fair $\varepsilon_1 = 0.01$) FNNC-\ouralgo\   ROCs for COMPAS Dataset}
    \label{fig:FNNC_COMPAS_ROC_FROC}
\end{figure}

\begin{figure}[!ht]
    \centering
    \includegraphics[width=1\linewidth]{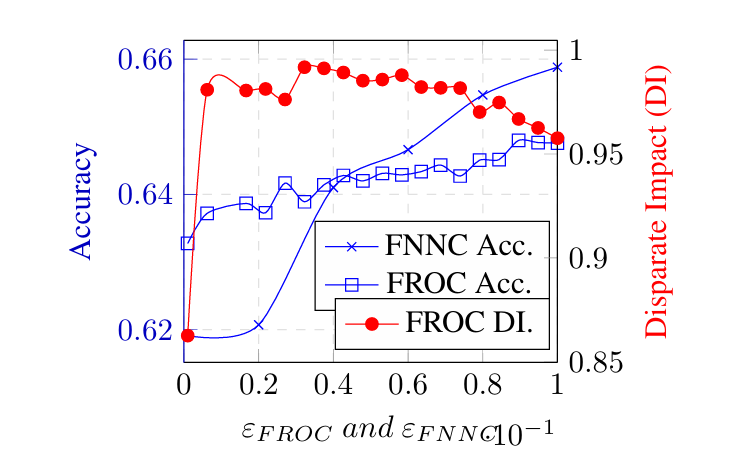}
    \caption{FNNC-\ouralgo\  Accuracy vs. $\varepsilon_1$ (COMPAS)}
    \label{fig:FNNC_COMPAS_Accuracy}
\end{figure}


\begin{figure}[!ht]
    \centering
    \includegraphics[width=1\linewidth]{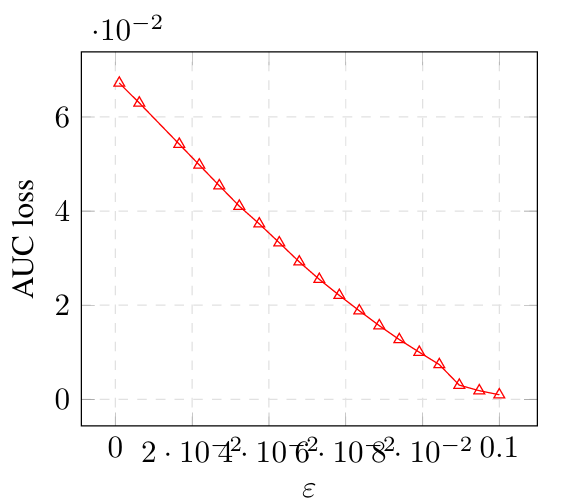}
    \caption{FNNC-\ouralgo\ AUC loss vs. $\varepsilon_1$ (COMPAS)}
    \label{fig:FNNC_COMPAS_AUC}
\end{figure}

\begin{itemize}
    \item We have applied FROC with the our fairness parameter $\varepsilon_1 = 0.01$ in \textbf{Figure \ref{fig:FNNC_COMPAS_ROC_FROC}}. As promised, the resulting ROCs are 'closer' to each other.
    \item In \textbf{Figure \ref{fig:FNNC_COMPAS_Accuracy}}, we have the Accuracy vs. $\varepsilon_1$ and the Disparate Impact vs. $\varepsilon_1$ plot.
    \item We find that in the FNNC is slightly lower than FROC in terms of accuracy. We assign it to the fact that FNNC may overachieve the target fairness for smaller values of $\varepsilon_{FNNC}$, (Evident from Table 2 [\cite{padala2020fnnc}]). FROC drops AUC minimally to achieve target fairness.
    \item This analysis gives us a maximum variance of $4.83\times10^{-6}$ and a maximum CoV (Coefficient of Variation) of $0.43\%$ for Accuracy.
    \item As for the Disparate Impact, the analysis gives us a maximum variance of $2.48\times10^{-5}$ and a maximum CoV of $0.5\%$.
    \item As seen in the plots, we observe that a $1\%$ drop in Accuracy improves the Disparate Impact by $3\%$.
    \item Finally, in \textbf{Figure \ref{fig:FNNC_COMPAS_AUC}}, we have the AUC loss vs. $\varepsilon_1$ plot. As seen in the figure, the AUC loss decays to $0$ as our fairness constraint loosens.
\end{itemize}

\subsubsection{CelebA Dataset}
\begin{figure}[!ht]
    \centering
    \includegraphics[width=1\linewidth]{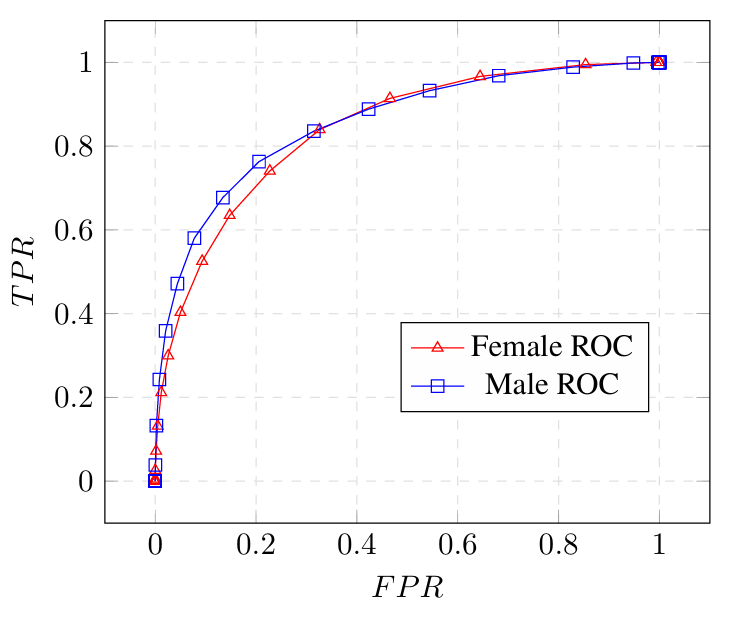}
    \caption{ResNet Baseline ROCs for CelebA Dataset}
    \label{fig:CelebA_ROC}
\end{figure}

\begin{figure}[!ht]
    \centering
    \includegraphics[width=1\linewidth]{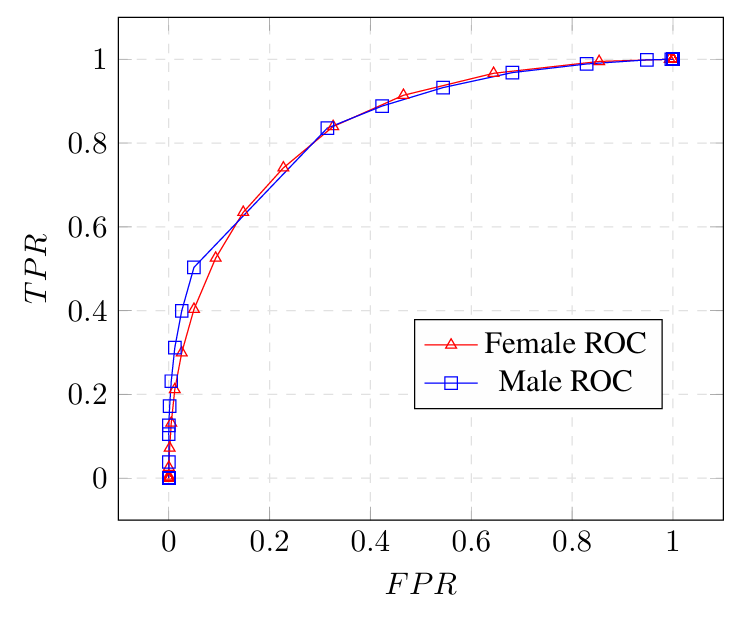}
    \caption{(Fair $\varepsilon_1 = 0.01$) ResNet-\ouralgo\ ROCs for CelebA Dataset}
    \label{fig:CelebA_ROC_FROC}
\end{figure}

\begin{figure}[!ht]
    \centering
    \includegraphics[width=1\linewidth]{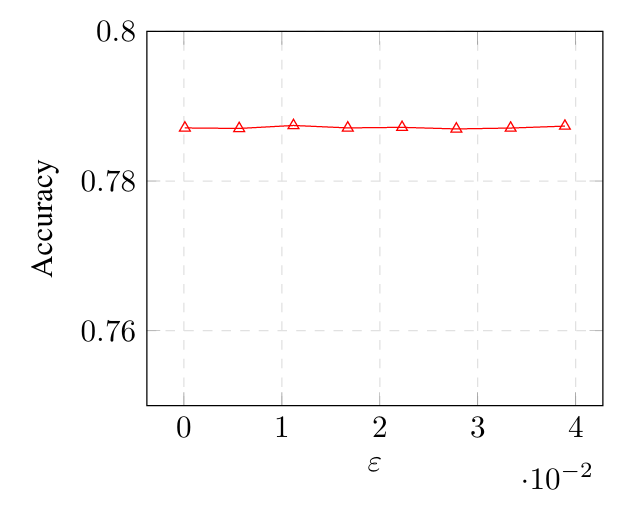}
    \caption{ResNEt-\ouralgo\  Accuracy vs. $\varepsilon_1$ (CelebA)}
    \label{fig:CelebA_Accuracy}
\end{figure}

\begin{itemize}
    \item We have applied FROC with the our fairness parameter $\varepsilon_1 = 0.01$ in \textbf{Figure \ref{fig:CelebA_ROC_FROC}}. As promised, the resulting ROCs are 'closer' to each other.
    \item This analysis gives us a maximum variance of $1.9\times10^{-7}$ and a maximum CoV (Coefficient of Variation) of $0.07\%$ for Accuracy (\textbf{Figure \ref{fig:CelebA_Accuracy}}).
    \item As for the Disparate Impact, since both the ROCs are very close to begin with, we find that there is not much improvement in terms of performance.
    \item The AUC is also similar in nature - it shows no clear trend.
\end{itemize}

\section{FROC implementation in Python}
The official and cleaned-up version of the code for this paper can be found in this \href{https://github.com/Avyukta-Manjunatha-Vummintala/FROC_code/tree/main}{link}.

\subsection{Preprocessing Code (Adult)}
\lstinputlisting[language=Python]{Code/Adult_Preprocessing.txt}
\subsection{Preprocessing Code (COMPAS)}
\lstinputlisting[language=Python]{Code/COMPAS_Preprocessing.txt}
\subsection{Preprocessing Code (CelebA)}
\lstinputlisting[language=Python]{Code/CelebA_Preprocessing.txt}
\subsection{FROC}
\lstinputlisting[language=Python]{Code/FROC_code.py}
\subsection{Building the Classifier}
\lstinputlisting[language=Python]{Code/buildClassifier.py}

\bibliography{main}
\bibliographystyle{unsrt}
\end{document}